\newtheorem{theorem}{Theorem}
\newtheorem{lemma}[theorem]{Lemma}
\newtheorem{corollary}[theorem]{Corollary}
\newtheorem{example}[theorem]{Example}
\DeclareMathOperator*{\E}{{\mathbb{E}}}
\DeclareMathOperator*{\M}{{\mathbb{M}}}
\DeclareMathOperator*{\var}{\mathrm{Var}}
\DeclareMathOperator*{\poly}{poly}
\DeclareMathOperator*{\dist}{d}
\DeclareMathOperator*{\supp}{supp}
\newcommand{\secprespace}[0]{\vspace{-1mm}}
\newcommand{\secspace}[0]{\vspace{-1.5mm}}
\newcommand{\paraspace}[0]{\vspace{-3mm}}
\newcommand{\figref}[1]{Fig.~\ref{fig:#1}}
\newcommand{\tabref}[1]{Table~\ref{tab:#1}}
\newcommand{\secref}[1]{Sec.~\ref{sec:#1}}
\newcommand{\thmref}[1]{Thm.~\ref{thm:#1}}
\newcommand{\appref}[1]{Appx.~\ref{sec:#1}}
\newcommand{\txt}[1]{\textrm{#1}}
\newlength\savedwidth
\newcommand\whline[1]{\noalign{\global\savedwidth\arrayrulewidth
                               \global\arrayrulewidth #1} %
                      \hline
                      \noalign{\global\arrayrulewidth\savedwidth}}
\title{BourGAN: Generative Networks with Metric Embeddings}
\author{
  Chang Xiao \qquad Peilin Zhong\qquad Changxi Zheng\\
  Columbia University\\
  \texttt{\{chang, peilin, cxz\}@cs.columbia.edu} \\
  %% examples of more authors
}
\begin{document}
% \nipsfinalcopy is no longer used

\maketitle

%\vspace{-5mm}
\begin{abstract}
This paper addresses the mode collapse for generative adversarial networks (GANs).
We view modes as a \emph{geometric} structure of data distribution in a metric space.
Under this geometric lens, we embed subsamples of the dataset from an arbitrary metric space into the $\ell_2$
space, while preserving their pairwise distance distribution.
Not only does this metric embedding determine the dimensionality of the latent space automatically, it also
enables us to construct a \emph{mixture of Gaussians} to draw latent space random vectors.
We use the Gaussian mixture model in tandem with a simple augmentation of the objective function to
train GANs. Every major step of our method is supported by theoretical analysis, and our experiments
on real and synthetic data confirm that the generator is able to produce samples spreading over most of the modes
while avoiding unwanted samples, outperforming several recent GAN variants on a number of metrics and offering new features.
\end{abstract}
% -> metric space -> embedding -> gaussian -> theory -> experiment

%\vspace{-2mm}
\secprespace
\section{Introduction}
\secspace
%% --> prominence of GAN
%% --> issue of the GAN when there are multiple modes
In unsupervised learning, Generative Adversarial Networks (GANs)~\cite{goodfellow2014generative} is
by far one of the most widely used methods for training deep generative models.
However, difficulties of optimizing GANs have also been well observed
~\cite{nowozin2016f, salimans2016improved, arjovsky2017wasserstein, dumoulin2016adversarially, donahue2016adversarial,
reed2016generative, radford2015unsupervised}.
One of the most prominent issues is \emph{mode collapse}, a phenomenon in which a GAN,
after learning from a data distribution of multiple modes, generates samples landed
only in a subset of the modes. In other words, the generated samples lack the diversity
as shown in the real dataset, yielding a much lower entropy distribution.

%Two key insights underpin our proposed method.
% 1. --> fundamental problem of using a single Gaussian in ALL existing GANs (ubiquitously)
% 2. --> distance metric. only L2 distance or distance measured using discriminator (more interpretable under different metrics)
We approach this challenge by questioning two fundamental properties
of GANs. i) We question the commonly used multivariate Gaussian that generates
random vectors for the generator network.
We show that in the presence of separated modes, drawing random vectors from
a single Gaussian may lead to arbitrarily large gradients of the generator, 
and a better choice is by using a \emph{mixture of Gaussians}.
% large gradient, --> a better choice is a mixture of Gaussians
ii) We consider the \emph{geometric} interpretation of modes, and
argue that the modes of a data distribution should be viewed under a specific distance
metric of data items -- different metrics may lead to different distributions of modes,
and a proper metric can result in interpretable modes. %% (Figure~\cmt{X}).
From this vantage point, we address the problem of mode collapse in a general metric space.
To our knowledge, despite the recent attempts of addressing mode
collapse~\cite{salimans2016improved,metz2016unrolled,srivastava2017veegan,
donahue2016adversarial, lin2017pacgan, che2016mode}, both properties remain unexamined.

% propose to
% --> constructing Z vectors using **a mixture of Gaussians** in L2 (Euclidean) space
%     (regardless of the distance measure in data space)
%     first time to use Gaussian mixture
\paraspace
\paragraph{Technical contributions.}
We introduce \emph{BourGAN}, an enhancement of GANs to avoid mode collapse in
any metric space.  In stark contrast to all existing mode collapse solutions,
BourGAN draws random vectors from a Gaussian mixture in a low-dimensional
latent space.  The Gaussian mixture is constructed to mirror the mode structure
of the provided dataset under a given distance metric.  We derive the
construction algorithm from metric embedding theory, namely the Bourgain
Theorem~\cite{b85}.
Not only is using metric embeddings theoretically sound (as we will show), it
also brings significant advantages in practice.
Metric embeddings enable us to retain the mode structure in the $\ell_2$ latent space 
despite the metric used to measure modes in the dataset.
In turn, the Gaussian mixture sampling in the latent space eases the
optimization of GANs, and unlike existing GANs that assume a user-specified
dimensionality of the latent space, our method automatically decides the
dimensionality of the latent space from the provided dataset.
% --> this is made possible by ... how to construct the Gaussian mixture: metric embeddings
%   --> use of metric embedding is profound:
%       1) easy to train
%       2) automatically decide the dimension (GANs) from the ~data geometry~

% --> preserve distance measure
To exploit the constructed Gaussian mixture for addressing mode collapse,
we propose a simple extension to the GAN objective that encourages
the pairwise $\ell_2$ distance of latent-space random vectors to match
the distance of the generated data samples in the metric space.
That is, the geometric structure of the Gaussian mixture is respected in the generated samples.
% ~theoretical analysis~
Through a series of (nontrivial) theoretical analyses, we show that
if BourGAN is fully optimized, %%% (with the objective approaching zero),
the logarithmic pairwise distance distribution of its generated samples
closely match the logarithmic pairwise distance distribution of the real data items.
%% under Wasserstein distance measure.
% the difference between the pairwise distance distribution of generated samples
% and the pairwise distance distribution of the real data items is bounded by \cmt{XX}.
In practice, this implies that mode collapse is averted.

% --> In the end, we have the advantages:
% 1. Increase diversity
% 2. no unwanted data (contrast to all other GANs)
% 3. discontinuous (discrete mode)
% 4. user has the freedom to choose different distance metrics (to measure, form, and interpret modes in the data set) //interpretable (show)
% While the theoretical analysis justifies BourGAN's algorithmic choices,
We demonstrate the efficacy of our method on both synthetic and real datasets. 
We show that our method outperforms several recent GAN variants in terms of generated data diversity.
In particular, our method is robust to handle data distributions with multiple separated modes -- challenging 
situations where all existing GANs that we have experimented with produce unwanted samples (ones that are not in any modes),
whereas our method is able to generate samples spreading over all modes %% presented by the given data distribution,
while avoiding unwanted samples.\todo{emphasize experiments in practice rather than theory}

%% --> We show that, through both theoretical analysis and experiments, \ldots significantly improve the mode collapse

%% --> Gaussian mixture construction using metric embedding
%% --> performance: N samples. We show that we don't need all data items (be brief here) // --> theoretically guaranteed
%% --> extend GANs with the penalty term

% -------------------------------------------------------------------------------------------------------------------------
% -------------------------------------------------------------------------------------------------------------------------

%% Despite previous proposals of \ldots, we take a different approach

% \cmt{emphasize our theoretical analysis and insight}

% \cmt{In practice, our approach is simple to implement, and preliminary experiments show that \ldots}

% \cmt{the advantage of distance preservation (interpolation?)}

% \cmt{what is the most important theoretical result of this paper?}

% \cmt{also, we have the freedom to use different distance measure. but what is the advantage of having this flexibility?}

% \cmt{key insight: bourgain embedding. what is the advantage?}

% \cmt{automatic determine the dimension in latent space}

% \cmt{so the Bourgain embedding is quite profound}

\secprespace
\section{Related Work}
\secspace
%% brief intro of GAN and related improvement
%\paraspace
\paragraph{GANs and variants.}
The main goal of generative models in unsupervised learning is to produce
samples that follow an unknown distribution $\mathcal{X}$, by learning from
a set of unlabelled data items $\{x_i\}_{i=1}^n$ drawn from $\mathcal{X}$.
% given a set of
% unlabelled data $x_i \in \mathcal{X}$, we train the parameters of such a generator in
% order to produce data that is similar to data in $\mathcal{X}$ but different from the
% training set $x_i$.
In recent years, Generative Adversarial Networks (GANs)~\cite{goodfellow2014generative}
have attracted tremendous attention for training generative models.
A GAN uses a neural network, called generator $G$, to map a low-dimensional
latent-space vector $z\in\mathbb{R}^d$, drawn from a standard distribution
$\mathcal{Z}$ (e.g., a Gaussian or uniform distribution), to generate data items in a space of
interest such as natural images and text.
The generator $G$ is trained in tandem with another neural network, called the discriminator $D$,
by solving a minmax optimization with the following objective.%\Peilin{Maybe need to specify that $G$ wants to minimize, $D$ wants to maximize, e.g. the following should be $\min_G\max_D L_{\txt{gan}} = \min_{G}\max_D ...$}
\begin{equation} \label{eq:ganloss}
    L_{\txt{gan}}(G, D) =  \mathbb{E}_{x \sim \mathcal{X}}\left[\log{D(x)}\right] + \mathbb{E}_{z \sim \mathcal{Z}}\left[\log(1-D(G(z)))\right].
\end{equation}
This objective is minimized over $G$ and maximized over $D$.
%  GAN was designed to map a random (typically low dimension) random variable $z \in \mathbb{R}_d$
%  from a standard distribution $\mathcal{Z}$ (e.g. Gaussian distribution, uniform
%  distribution), to a space of interest such as natural image, text.  The main
%  components of GAN are two deep neural networks, generator $G$ and discriminator $D$,
%  which optimize the following type of objective:
%  \begin{equation} \label{eq:ganloss}
%      L_{gan}(G, D) =  \mathbb{E}_{x \sim \mathcal{X}}(\log{D(x)}) + \mathbb{E}_{z \sim \mathcal{Z}}(\log(1-D(G(z))))
%  \end{equation}
Initially, GANs are demonstrated to generate locally appreciable but
globally incoherent images. Since then, they have been actively improving.
% \todo{add min/max}
% The optimial $G, D$ is achive by solving a min-max optimization with $\min_G \max_D L_{gan}$.
% The original GAN
For example, DCGAN~\cite{radford2015unsupervised} proposes a class of empirically designed
network architectures that improve the naturalness of generated images.
% empirical architectural choices that achive good performance on image generation.
By extending the objective~\eqref{eq:ganloss}, InfoGAN~\cite{chen2016infogan} is able
to learn interpretable representations in latent space, Conditional GAN ~\cite{mirza2014conditional} 
can produce more realistic results by using additional supervised label.
Several later variants have applied GANs to a wide array of tasks ~\cite{bora2018ambientgan,bora2017compressed} % \cmt{more ref here?},
such as image-style transfer~\cite{isola2017image,zhu2017unpaired},
super-resolution~\cite{ledig2016photo}, image
manipulation~\cite{zhu2016generative}, video
synthesis~\cite{vondrick2016generating}, and 3D-shape synthesis
\cite{wu2016learning}, to name a few.

% -----------------------------------------------------------------------
\begin{figure}
    \centering
    \begin{overpic}[width=1.0\textwidth]{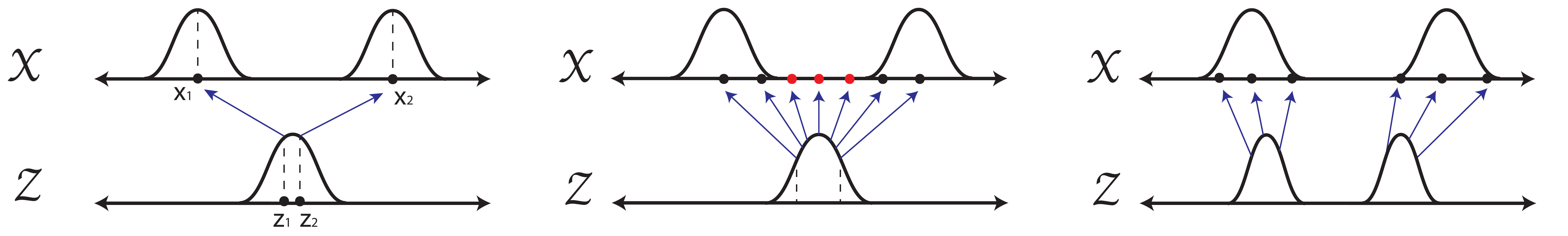}
    \put(17.3,-1.7){(a)}
    \put(50.9,-1.7){(b)}
    \put(84.0,-1.7){(c)}
    \end{overpic}
    \vspace{-2mm}
    \caption{\textbf{Multi-mode challenge.}
We train a generator $G$ that maps a latent-space distribution $\mathcal{Z}$
to the data distribution $\mathcal{X}$ with two modes.
\textbf{(a)} Suppose $\mathcal{Z}$ is a Gaussian, and $G$ can fit both modes.
If we draw two i.i.d. samples $z_1,z_2$ from $\mathcal{Z}$, then with at least
a constant probability, $G(z_1)$ is close to the center $x_1$ of the first mode,
and $G(z_2)$ is close to another center $x_2$.
By the Mean Value Theorem, there exists a $z$ between $z_1$ and $z_2$
that has the absolute gradient value, $|G'(z)|=|\frac{x_2-x_1}{z_2-z_1}|$,
which can be arbitrarily large, as $|x_2-x_1|$ can be arbitrarily far.
\textbf{(b)} Since $G$ is Lipschitz continuous, using it to map a Gaussian distribution to
both modes unavoidably results in unwanted samples between the modes (highlighted by the red dots).
\textbf{(c)} Both challenges are resolved if we can construct a mixture of Gaussian
in latent space that captures the same modal structure as in the data distribution.
}\label{fig:gaussian_flaw}
    \vspace{-1.5mm}
\end{figure}
% -----------------------------------------------------------------------

\paraspace
\vspace{1mm}
\paragraph{Addressing difficulties.}
Despite tremendous success, GANs are generally hard to train.
%% and sensitive to the change of hyperparameters.
Prior research has aimed to improve the stability of training GANs, mostly by
altering its objective function~\cite{mao2017least,arjovsky2017wasserstein,gulrajani2017improved,zhao2016energy, saatci2017bayesian, arora2017generalization}.
% There are several works aim to improve the stability of GANs by modifying its
% objective function
In a different vein, Salimans et al.~\cite{salimans2016improved} proposed a
feature-matching technique to stabilize the training process,
and another line of work~\cite{dumoulin2016adversarially,donahue2016adversarial,larsen2015autoencoding}
uses an additional network
that maps generated samples back to latent vectors to provide feedback
to the generator.
%intended to use information
%in both sampling and data space to increase GAN's performance

A notable problem of GANs is mode collapse,
which is the focus of this work.
For instance, when trained on ten hand-written digits (using MNIST
dataset)~\cite{lecun1998gradient}, each digit represents a mode of data
distribution, but the generator often fails to produce
a full set of the digits~\cite{gulrajani2017improved}.
Several approaches have been proposed to mitigate mode collapse,
by modifying either
the objective function~\cite{arjovsky2017wasserstein, che2016mode}
or the network architectures~\cite{metz2016unrolled,dumoulin2016adversarially,lin2017pacgan,srivastava2017veegan,karras2017progressive}.
While these methods are evaluated empirically, theoretical understanding
of why and to what extent these methods work is often lacking.
More recently, PacGAN~\cite{lin2017pacgan} introduces a mathematical definition of mode collapse,
which they used to formally analyze their GAN variant.
% Those proposed method are rely on
% modifying
% loss functions \cite{arjovsky2017wasserstein, che2016mode} and
% network architectures
% \cite{metz2016unrolled, dumoulin2016adversarially, lin2017pacgan, srivastava2017veegan}.
% Although those proposed method indeed
% show to help tackle mode collapse, it is still not well understood the
% behaviour of mode collapse. Many of the previous method are heuristic and fail
% to provide rigorous proof on how the proposed method related to solve mode
% collapse.
Very few previous works consider the construction of latent space:
VAE-GAN \cite{larsen2015autoencoding} constructs the
latent space using variational autoencoder, and  GLO~\cite{bojanowski2017optimizing} tries to
optimize both the generator network and latent-space representation using data samples.
Yet, all these methods still draw the latent random vectors from a multivariate Gaussian.

\paraspace
\paragraph{Differences from prior methods.}
% Our approach differs from prior methods in several important technical aspects. Instead
% of using a standard Gaussian to sample latent space, we propose to
% use a Gaussian mixture model constructed algorithmically.
% Unlike all previous methods that require the
% latent space dimensionality to be specified \emph{a priori},
% our algorithm automatically determines its number of dimensions from the
% real dataset. Moreover, our method is able to incorporate any distance metric,
% allowing the flexibility of using proper metrics for learning interpretable modes.
% In addition to empirical studies,
% steps of our method are understood through theoretical analysis.
% 
% % 1. gaussian
% % 2. modes as will be elaborated next.
Our approach differs from prior methods in several important technical aspects. 
Instead of using a standard Gaussian to sample latent space, we propose to use
a Gaussian mixture model constructed using metric embeddings (e.g., see \cite{matouvsek2002embedding,courty2017learning,indyk2004low} 
for metric embeddings in both theoretical and machine learning fronts).
Unlike all previous methods that require the latent-space dimensionality to be
specified \emph{a priori},
our algorithm automatically determines its dimensionality from the
real dataset. Moreover, our method is able to incorporate any distance metric,
allowing the flexibility of using proper metrics for learning interpretable
modes.  In addition to empirical validation, the steps of our method are grounded
by theoretical analysis.

\secprespace
\section{Bourgain Generative Networks}\label{sec:bgn}
\secspace
We now introduce the algorithmic details of BourGAN, starting by describing the rationale behind
the proposed method. The theoretical understanding of our method will be presented in the next section.
\todo{we showed this method works and knows WHY it works}

\paraspace
\paragraph{Rationale and overview.}
% -> mode is a geometric structure -> thus related to specific distance measure -> different metric results in different modes
% -> Single gaussian leads to large gradient, hard to train
% -> nature choice: Gaussian mixture, but how to construct it, where are the centers, it should reflect the geometric structure of modes which are metric dependent
% -> idea: map to low-dimensional L2 space while preserving the geometric structure using metric embedding theory
% -> computational complexity -> subsample (preserve distance metric)
% -> to pave the way: start by concretizing geometric structure
We view modes in a dataset as a \emph{geometric structure} embodied
under a specific distance metric.  For example, in the widely tested MNIST
dataset, only two modes emerge under the pixel-wise $\ell_2$ distance
(\figref{pdd_tsne}-left): images for the digit ``1'' are clustered in one mode, while
all other digits are landed in another mode.  In contrast, under the classifier distance
metric (defined in \appref{mnist_app}), it appears that there exist 10 modes each corresponding to
a different digit. Consequently, the modes are interpretable (\figref{pdd_tsne}-right). In this work, we aim to incorporate any
distance metric when addressing mode collapse, leaving the flexibility of choosing a specific metric
to the user.

% -----------------------------------------------------------------------------------
\begin{figure}
  \centering
  \includegraphics[width=0.85\textwidth]{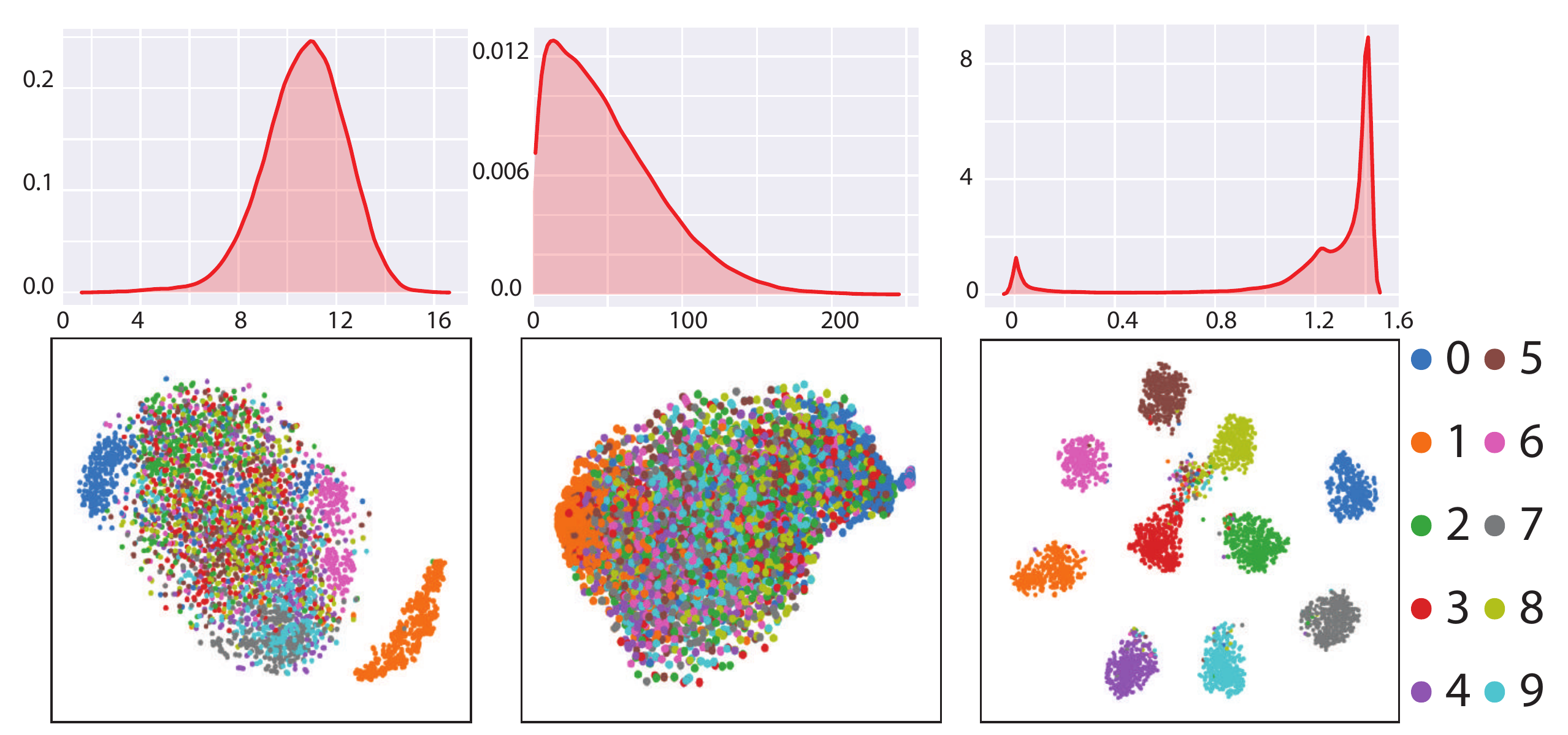}
  \vspace{-1mm}
  \caption{ %%\textbf{Metric and modes.}
  \textbf{(Top)}  Pairwise distance distribution on MNIST dataset under different distance metrics.
  Left: $\ell_2$ distance, Middle: Earth Mover's distance (EMD) with a quadratic ground
  metric, Right: classifier distance (defined in \appref{mnist_app}).
  Under $\ell_2$ and EMD distances, few separated modes emerges, and the pairwise
  distance distributions resemble a Gaussian. Under the classifier distance,
  the pairwise distance distribution becomes bimodal, indicating that there are separated modes.
  % An intuitive explain for those figures is that under $\ell_2$ and EMD metric
  % the data is like a Gaussian distribution, while using classifier distance
  % metric the distance is small for same class but large for different class,
  % indicate that the classifier distance is more well-suited for separating
  % different class into different mode.
  \textbf{(Bottom)} t-SNE visualization \cite{maaten2008visualizing} of
  data items after embedded from their metric space to $\ell_2$ space. Color indicates
  labels of MNIST images (``1''-``9''). When $\ell_2$ distance (left) is used, only two modes
  are identified: digit ``1'' and all others, but classifier distance (right) can group
  data items into 10 individual modes.
  }\label{fig:pdd_tsne}
  \vspace{-1mm}
\end{figure}
% -----------------------------------------------------------------------------------

When there are multiple separated modes in a data distribution, mapping a Gaussian random variable in latent
space to the data distribution is fundamentally ill-posed.
For example, as illustrated in \figref{gaussian_flaw}-a and \ref{fig:gaussian_flaw}-b,
%when there are multiple separated modes in the data distribution,
% This is because, as illustrated in \figref{gaussian_flaw},
% the separated modes introduce disconnected components in the data distribution, while the generator network is
%mapping a \cmt{single component to several disconnected components leads} %Gaussian variable to a disconnected distribution leads
this mapping imposes arbitrarily large gradients
(at some latent space locations)
in the generator network, and large gradients render the generator unstable to
train, as pointed out by~\cite{nagarajan2017gradient}.

A natural choice is to use a mixture of Gaussians. As long as the Gaussian
mixture is able to mirror the mode structure of the given dataset, % under a specific metric,
the problem of mapping it to the data distribution becomes well-posed
(\figref{gaussian_flaw}-c). To this end, our main idea is to use metric
embeddings, one that map data items under any metric to a low-dimensional
$\ell_2$ space with bounded pairwise distance distortion (\secref{gm}).
After the embedding, we construct a Gaussian
mixture in the $\ell_2$ space, regardless of the distance metric for the data
items. In this process, the dimensionality of the latent space is also automatically decided.

% subsample
Our embedding algorithm, building upon the Bourgain Theorem, requires us
to compute the pairwise distances of data items, resulting in an $O(n^2)$
complexity, where $n$ is the number of data items. When $n$ is large, we
first uniformly subsample $m$ data items from the dataset
to reduce the computational cost of our metric embedding algorithm (\secref{subsample}).
%therefore propose a preprocessing algorithm that greatly reduces the computational
%complexity by downsampling the dataset (\secref{subsample}) before using it %metric embeddings
%for Gaussian mixture construction.
The subsampling step is theoretically sound: we prove that when $m$ is sufficiently large yet still much smaller than $n$,
the geometric structure (i.e., the pairwise distance distribution) of data items is preserved in the subsamples.

% final training
Lastly, when training a BourGAN, we encourage the geometric structure embodied
in the latent-space Gaussian mixture to be preserved by the generator network.
Thereby, the mode structure of the dataset is learned by the generator.
This is realized by augmenting GAN's objective to foster the preservation of the
pairwise distance distribution in the training process (\secref{training}). %\todo{do we need an overview figure?}

\secprespace
\subsection{Metrics of Distance and Distributions}\label{sec:metric}
\secspace
Before delving into our method, we introduce a few theoretical tools to
concretize the geometric structure in a data distribution, paving the way toward understanding
our algorithmic details and subsequent theoretical analysis. In the rest of this paper, we borrow a few notational
conventions from theoretical computer science:
we use $[n]$ to denote the set $\{1,2,\cdots,n\}$, $\mathbb{R}_{\geq 0}$
to denote the set of all non-negative real numbers, and $\log(\cdot)$ to
denote $\log_2(\cdot)$ for short.

\paraspace
\paragraph{Metric space.}
A metric space is described by a pair $(\mathbb{M},\dist)$, where $\mathbb{M}$ is a set and
$\dist:\mathbb{M}\times\mathbb{M}\rightarrow \mathbb{R}_{\geq 0}$ is a
distance function such that $\forall x,y,z\in\mathbb{M},$ we have i)
$\dist(x,y)=0\Leftrightarrow x=y,$ ii) $\dist(x,y)=\dist(y,x),$ and iii)
$\dist(x,z)\leq \dist(x,y)+\dist(y,z).$ If $\mathbb{M}$ is a finite set, then
we call $(\mathbb{M},\dist)$ a finite metric space.

\paraspace
\paragraph{Wasserstein-$1$ distance.}
Wasserstein-$1$ distance, also known as the Earth-Mover distance, is one of the
distance measures to quantify the similarity of two distributions, defined as
%\begin{align*}
$
W(\mathcal{P}_a,\mathcal{P}_b)=\inf_{\Gamma\in\Pi(\mathcal{P}_a,\mathcal{P}_b)}\E_{(x,y)\sim \Gamma}\left(|x-y|\right),
$
%\end{align*}
where $\mathcal{P}_a$ and $\mathcal{P}_b$ are two distributions on real numbers,
%Then, the Wasserstein distance between $\mathcal{P}_a$ and
%$\mathcal{P}_b$ is defined as where
and $\Pi(\mathcal{P}_a,\mathcal{P}_b)$ is the set of all joint distributions $\Gamma(x,y)$
on two real numbers whose marginal distributions are $\mathcal{P}_a$ and
$\mathcal{P}_b$, respectively.
Wasserstein-$1$ distance has been used to augment GAN's objective and
improve training stability~\cite{arjovsky2017wasserstein}. We will use it
to understand the theoretical guarantees of our method.
% \cmt{how WGAN uses it, how we will use it}

% explain why we consider distance distribution
\paraspace
\paragraph{Logarithmic pairwise distance distribution (LPDD).}
% -> modes are metric-dependent -> hard to directly quantify -> instead we consider a statistics that reflect mode distribution
% namely distance distribution (illustrated in Fig.X) -> has been used as a metric to quantify mode collapse performance
% -> we use it to quantify the geometric structure that we would like to preserve
We propose to use the pairwise distance distribution of data items to reflect
the mode structure in a dataset (\figref{pdd_tsne}-top). Since the pairwise distance is
measured under a specific metric, its distribution also depends on the metric choice.
%%%, reflecting the metric dependence of the mode distribution.
%%The intuition of using pairwise distance distribution is illustrated in \figref{PDD}\todo{2,3 together}.
Indeed, it has been used in~\cite{metz2016unrolled} to quantify how well
Unrolled GAN addresses mode collapse. %% Here we use it to formulate the theoretical analysis.

Concretely, given a metric space $(\M,\dist)$, let $\mathcal{X}$ be a
distribution over $\M$, and $(\lambda,\Lambda)$ be two real values satisfying
$0<2\lambda\leq\Lambda$.
Consider two samples $x,y$ independently drawn from $\mathcal{X}$, and let
$\eta$ be the logarithmic distance between $x$ and $y$ (i.e., $\eta=\log(\dist(x,y))$).
We call the distribution of $\eta$ conditioned on
$\dist(x,y)\in[\lambda,\Lambda]$ the {\em$(\lambda,\Lambda)-$logarithmic
pairwise distance distribution} (LPDD) of the distribution
$\mathcal{X}$.
Throughout our theoretical analysis, LPDD of the distributions generated at various
steps of our method will be measured in Wasserstein-$1$ distance.

%% why log
%% why lambda, Lambda
%% what does this mean
\textit{Remark.} We choose to use logarithmic distance in order to
reasonably compare two pairwise distance distributions. The rationale is illustrated in~\figref{lpdd} in the appendix.
Using logarithmic distance is also beneficial for training our GANs, which will
become clear in \secref{training}. The $(\lambda,\Lambda)$ values in the above
definition are just for the sake of theoretical rigor, irrelevant from our practical
implementation. They are meant to avoid the theoretical situation where two
samples are identical and then taking the logarithm becomes no sense.  In this
section, the reader can skip these values and refer back when reading our
theoretical analysis (in \secref{theory} and the supplementary material).

%  We draw two independent samples $x,y\sim \mathcal{X}.$
%  We consider $\eta$ to be the logarithm of the distance between $x$ and $y,$ i.e. $\eta=\log(\dist(x,y)).$
%  Then we call the distribution of $\eta$ conditioned on
%  $\dist(x,y)\in[\lambda,\Lambda]$ as the {\em$(\lambda,\Lambda)-$logarithmic
%  pairwise distance distribution} of the distribution $\mathcal{X},$ i.e., the
%  distribution of $\log(\dist(x,y))$ conditioned on
%  $\dist(x,y)\in[\lambda,\Lambda]$ is called as the
%  $(\lambda,\Lambda)-$\emph{logarithmic pairwise distance distribution} of
%  $\mathcal{X}.$ If we do not specify $(\lambda,\Lambda),$ then we call the
%  distribution of $\eta$ conditioned on $\dist(x,y)>0$ as the {\em logarithmic
%  pairwise distance distribution} of the distribution $\mathcal{X},$ i.e., the
%  distribution of $\log(\dist(x,y))$ conditioned on $x\not=y$ is called as the
%  {\em logarithmic pairwise distance distribution} of $\mathcal{X}.$

\secprespace
\subsection{Preprocessing: Subsample of Data Items}\label{sec:subsample}
\secspace
% N samples -> expensive -> m subsample
We now describe how to train BourGAN step by step.  Provided with a multiset
of data items $X=\{x_i\}_{i=1}^n$ drawn independently from an unknown
distribution $\mathcal{X}$, we first subsample $m$ ($m<n$) data items uniformly at random
from $X$. 
This subsampling step is essential,
especially when $n$ is large,
for reducing the computational cost of metric embeddings
as well as the number of dimensions of the latent space (both described in \secref{gm}).
From now on, we use $Y$ to denote the multiset of data items subsampled from $X$ (i.e., $Y\subseteq X$ and $|Y|=m$).
Elements in $Y$ will be embedded in $\ell_2$ space in the next step. 

% theoretical gaurantees
The subsampling strategy, while simple, is theoretically sound.
Let $\mathcal{P}$ be the $(\lambda,\Lambda)$-LPDD of
the data distribution $\mathcal{X}$, and $\mathcal{P}'$ be the LPDD of the
uniform distribution on $Y$. We will show in \secref{theory} that their Wasserstein-$1$ distance
$W(\mathcal{P},\mathcal{P}')$ is tightly bounded if $m$ is sufficiently large but much smaller than $n$.
% In \cmt{SEC} and the supplementary material,
% we prove an upper bound \todo{is that right?} of $m$ such that
% the pairwise distances of elements in $Y$ is sufficient to approximate the
% pairwise distance distribution of random samples drawn from the real data.
In other words, the mode structure of the real data can be captured by considering only the subsamples in $Y$.
In practice, $m$ is chosen automatically by a simple algorithm, 
which we describe in \appref{param_set}. In all our examples, we find $m=4096$ sufficient.

% ----------------------------------------------------------------------
\secprespace
\subsection{Construction of Gaussian Mixture in Latent Space}\label{sec:gm}
\secspace
Next, we construct a Gaussian mixture model for generating random vectors in latent space.
First, we embed data items from $Y$ to an $\ell_2$ space, one that the latent random vectors reside in.
%% The $\ell_2$ space is the space for generating latent random vectors.
We want the latent vector dimensionality to be small,
while ensuring that the mode structure be well reflected in the latent space. 
This requires the embedding to introduce minimal distortion on the pairwise distances of data items.
For this purpose, we propose an algorithm that leverages Bourgain's embedding theorem.

\paraspace
\paragraph{Metric embeddings.}
Bourgain~\cite{b85} introduced a method that can embeds \emph{any} finite metric space into
a small $\ell_2$ space with minimal distortion. The theorem is stated as follows:
\begin{theorem}[Bourgain's theorem]\label{thm:bourgain}
Consider a finite metric space $(Y,\dist)$ with $m=|Y|.$ There exists a mapping
$g:Y \rightarrow \mathbb{R}^k$ for some $k=O(\log^2 m)$ such that
%\begin{align*}
$\forall y,y'\in Y, \dist(y,y')\leq\|g(y)-g(y')\|_2\leq \alpha\cdot\dist(y,y')$,
%\end{align*}
where $\alpha$ is a constant satisfying $\alpha\leq O(\log m)$.
\end{theorem}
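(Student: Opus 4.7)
The plan is to construct $g$ by a random Fréchet-style embedding and then analyze its distortion. Specifically, for scales $i=1,\ldots,\lceil\log m\rceil$ and repetitions $j=1,\ldots,T$ with $T=\Theta(\log m)$, I would sample independent random subsets $A_{i,j}\subseteq Y$ where each $y\in Y$ is included in $A_{i,j}$ independently with probability $2^{-i}$. The $(i,j)$-th coordinate of the embedding is then
\[
g(y)_{i,j} \;=\; \tfrac{1}{\sqrt{T}}\,\dist(y,A_{i,j}), \qquad \dist(y,A)\coloneqq \min_{a\in A}\dist(y,a),
\]
yielding $g:Y\to\mathbb{R}^k$ with $k=\lceil\log m\rceil\cdot T=O(\log^2 m)$ as required.

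\textbf{Upper bound (non-expansion up to $\alpha$).} The easy direction is to bound $\|g(y)-g(y')\|_2$ from above. For any subset $A$, the function $y\mapsto \dist(y,A)$ is $1$-Lipschitz in $\dist$: by the triangle inequality, $|\dist(y,A)-\dist(y',A)|\leq \dist(y,y')$. Squaring, summing over the $k=O(\log^2 m)$ coordinates, and accounting for the $1/\sqrt{T}$ normalization yields $\|g(y)-g(y')\|_2\leq \alpha\,\dist(y,y')$ with $\alpha=O(\log m)$, matching the claimed distortion. A rescaling by a global constant at the end ensures the lower bound $\dist(y,y')\leq\|g(y)-g(y')\|_2$ is an actual inequality (not merely up to a constant).

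\textbf{Lower bound (the main obstacle).} The hard part is establishing $\|g(y)-g(y')\|_2\gtrsim \dist(y,y')$. Fix a pair $y,y'\in Y$ and define, for each scale $i$, radii $\rho_i(y)$ (and analogously $\rho_i(y')$) to be the smallest $r$ for which the ball $B(y,r)$ contains at least $2^i$ points of $Y$. The scales are chosen to telescope: set $\hat\rho_i=\min(\rho_i(y),\rho_i(y'))$ and stop at the scale $i^\ast$ where the balls around $y$ and $y'$ would start overlapping, so that $\sum_{i\leq i^\ast}(\hat\rho_{i}-\hat\rho_{i-1})\geq \tfrac12\dist(y,y')$. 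The core probabilistic lemma is that for each scale $i$, with constant probability over the choice of $A_{i,j}$, the set $A_{i,j}$ misses the ball of smaller radius on one endpoint while hitting the ball of larger radius on the other endpoint; this forces $|\dist(y,A_{i,j})-\dist(y',A_{i,j})|\geq \hat\rho_{i}-\hat\rho_{i-1}$. Averaging over the $T=\Theta(\log m)$ repetitions and applying a Chernoff bound concentrates this expected squared contribution per scale, and a union bound over the $\binom{m}{2}$ pairs (absorbed by taking $T$ large enough) ensures the event holds simultaneously for all pairs.

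\textbf{Wrap-up.} Summing the per-scale squared contributions telescopically gives
\[
\|g(y)-g(y')\|_2^2 \;\gtrsim\; \frac{1}{\log m}\Bigl(\sum_{i\leq i^\ast}(\hat\rho_i-\hat\rho_{i-1})\Bigr)^{2} \;\gtrsim\; \frac{\dist(y,y')^2}{\log m},
\]
after which a uniform rescaling of $g$ turns the pair of inequalities into exactly the form stated in the theorem, with $\alpha=O(\log m)$ and $k=O(\log^2 m)$. I expect the delicate step to be the choice of radii and the probabilistic event at each scale, especially handling the case where the balls around $y$ and $y'$ have very different growth profiles; the non-expansion bound and the final rescaling are essentially bookkeeping.
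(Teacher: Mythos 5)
Your construction is exactly the randomized Fr\'echet embedding the paper itself relies on (Algorithm~\ref{alg:bourgain}): coordinates $\dist(y,S_{i,j})$ with per-point sampling probability $2^{-i}$ and $T=\Theta(\log m)$ repetitions per scale, and your analysis is the standard Linial--London--Rabinovich proof of Bourgain's theorem, so the approach matches. Note the paper does not prove Theorem~\ref{thm:bourgain} but cites Bourgain directly; your outline is correct, with the remaining care exactly where you flag it -- open versus closed balls, capping the radii at $\dist(y,y')/2$ so the hit and miss events involve disjoint point sets (hence occur together with constant probability), and, if you define $\hat\rho_i=\min(\rho_i(y),\rho_i(y'))$ rather than the usual ``both balls contain $2^i$ points'' radius, choosing at each scale which endpoint to hit (the one attaining the minimum at scale $i-1$) and which to miss.
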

\vspace{-2mm}
The mapping $g$ is constructed using a randomized algorithm also given by Bourgain~\cite{b85}.
Directly applying Bourgain's theorem results in a latent space of $O(\log^2 m)$ dimensions.
We can further reduce the number of dimensions down to $O(\log m)$ through the following
corollary.
\begin{corollary}[Improved Bourgain embedding]\label{cor:bourgain}
Consider a finite metric space $(Y,\dist)$ with $m=|Y|.$ There exist a mapping
$f:Y\rightarrow \mathbb{R}^k$ for some $k=O(\log m)$ such that
%\begin{align*}
$\forall y,y'\in Y, \dist(y,y')\leq\|f(y)-f(y')\|_2\leq \alpha\cdot\dist(y,y')$,
%\end{align*}
where $\alpha$ is a constant satisfying $\alpha\leq O(\log m).$
\end{corollary}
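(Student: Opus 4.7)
The plan is to compose Bourgain's embedding from Theorem~\ref{thm:bourgain} with a Johnson--Lindenstrauss dimension reduction and then rescale to preserve non-contractivity. The key observation is that after Bourgain we land in a Euclidean space, where JL becomes available: any $m$ points in $\ell_2$ can be mapped into $\mathbb{R}^k$ with $k=O(\log m/\varepsilon^2)$ while preserving all pairwise distances up to a factor of $(1\pm\varepsilon)$. Fixing $\varepsilon$ to any constant (say $\varepsilon=1/2$) gives target dimension $O(\log m)$ while only inflating the distortion by a constant factor.

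Concretely, I would proceed in four steps. First, invoke Theorem~\ref{thm:bourgain} to obtain $g:Y\rightarrow\mathbb{R}^{k_1}$ with $k_1=O(\log^2 m)$ such that $\dist(y,y')\leq\|g(y)-g(y')\|_2\leq\alpha_1\cdot\dist(y,y')$ for some $\alpha_1=O(\log m)$. Second, apply a JL map $h:\mathbb{R}^{k_1}\rightarrow\mathbb{R}^{k_2}$ with $k_2=O(\log m)$ to the finite point set $g(Y)$, so that with positive probability $(1-\varepsilon)\|g(y)-g(y')\|_2\leq\|h(g(y))-h(g(y'))\|_2\leq(1+\varepsilon)\|g(y)-g(y')\|_2$ holds simultaneously for all pairs $y,y'\in Y$. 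Third, define the composed map $f(y)\coloneqq\tfrac{1}{1-\varepsilon}\,h(g(y))$; the division by $1-\varepsilon$ is what restores the non-contractive property that the statement demands. Fourth, chain the two sets of inequalities and simplify.

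Chaining gives, for every $y,y'\in Y$,
\begin{equation*}
\dist(y,y')\;\leq\;\|g(y)-g(y')\|_2\;\leq\;\|f(y)-f(y')\|_2\;\leq\;\frac{1+\varepsilon}{1-\varepsilon}\,\|g(y)-g(y')\|_2\;\leq\;\frac{1+\varepsilon}{1-\varepsilon}\,\alpha_1\cdot\dist(y,y'),
\end{equation*}
so we may take $\alpha=\tfrac{1+\varepsilon}{1-\varepsilon}\,\alpha_1$. With $\varepsilon=1/2$ this is $3\alpha_1=O(\log m)$, matching the claimed distortion. The dimension is $k_2=O(\log m)$ as required.

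I expect no serious obstacle here; the argument is essentially a textbook composition. The one place to be careful is the rescaling in step three: a naive composition $h\circ g$ can contract distances by up to a factor of $1-\varepsilon$, which would violate the required lower bound $\dist(y,y')\leq\|f(y)-f(y')\|_2$, so the factor $1/(1-\varepsilon)$ must be inserted before chaining. A secondary point worth mentioning is that the JL guarantee is probabilistic, but since $Y$ is finite the standard union bound over all $\binom{m}{2}$ pairs succeeds with constant probability for a random Gaussian projection of dimension $O(\log m)$, so the existence statement in the corollary follows immediately.
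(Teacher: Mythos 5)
Your proposal is correct and follows essentially the same route as the paper: apply Theorem~\ref{thm:bourgain} to get an $O(\log^2 m)$-dimensional embedding with $O(\log m)$ distortion, then use the Johnson--Lindenstrauss lemma to reduce to $O(\log m)$ dimensions at the cost of only a constant factor. Your explicit rescaling by $1/(1-\varepsilon)$ to restore non-contractivity is a point the paper handles only implicitly (via the form of its stated JL lemma and the rescaling step in Algorithm~\ref{alg:bourgain}), but it is the same argument.
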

\vspace{-1.5mm}
Proved in \appref{corollary_proof},
this corollary is obtained by combining \thmref{bourgain} with the
Johnson-Lindenstrauss (JL) lemma~\cite{johnson1984extensions}.
% This corollary shows that the mapping $f$ is able to reduce the number of dimensions of the latent space
% down to $O(\log m)$ with $O(\log m)$ distortion.
The mapping $f$ is computed through a combination of the algorithms
for Bourgain's theorem and the JL
lemma. This algorithm of computing $f$ is detailed in \appref{bour_alg}.\todo{does this exist already?}

\vspace{-1mm}
\textit{Remark.}
Instead of using Bourgain embedding, one can find a mapping $f:Y\rightarrow
\mathbb{R}^k$ with bounded distortion, namely,
$\forall y,y'\in Y, \dist(y,y')\leq\|f(y)-f(y')\|_2\leq \alpha\cdot\dist(y,y')$,
by solving a semidefinite programming problem (e.g., see~\cite{linial1995geometry,matouvsek2002embedding}).
This approach can find an embedding with the least distortion $\alpha$.
However, solving semidefinite programming problem is much more costly than
computing Bourgain embeddings. Even if the optimal distortion factor $\alpha$ is found,
it can still be as large as $O(\log m)$ in the worst case~\cite{leighton1988approximate}.
Indeed, Bourgain embedding is optimal in the worst case.\todo{another remark about special case?}

% \Peilin{Another Remark: Bourgain embedding can embed any finite metric space
% into $\ell_2$ space. For some finite space with a ``good'' specific distance
% function, instead of using bourgain embedding, we can use an ad-hoc embedding
% method for that finite space. For example, if the original space is $\ell_2$
% space, we can just use Johnson-Lindenstrauss lemma to do the dimensionality
% reduction which can provide a better bound.  }

Using the mapping $f$, we embed data items from $Y$ (denoted
as $\{y_i\}_{i=1}^m$) into the $\ell_2$ space of $k$ dimensions ($k=O(\log m)$).
Let $F$ be the multiset of the resulting vectors in $\mathbb{R}^k$ (i.e., $F=\{f(y_i)\}_{i=1}^m$).
As we will formally state in \secref{theory},
the Wasserstein-$1$ distance
between the $(\lambda,\Lambda)-$LPDD of the real data distribution $\mathcal{X}$ and
the LPDD of the uniform distribution on $F$ is tightly bounded.
Simply speaking, the mode structure in the real data is well captured by $F$ in $\ell_2$ space.

% % Recall that the embedding $f$ introduces tightly bounded distortion on pairwise
% % distances of elements in $Y$ (in the order of $O(\log m)$), and
% Recall from \secref{subsample} that the LPDD of the uniform distribution on $Y$
% closely represents the $(\lambda,\Lambda)-$LPDD of the real data distribution
% $\mathcal{X}$. Now the pairwise distances of elements in $Y$ is mapped to the
% pairwise $\ell_2$ distances in $F$ with minimal distortion.
% Combining these two facts,
% we conclude that the Wasserstein-$1$ distance
% between the $(\lambda,\Lambda)-$LPDD of the real data distribution $\mathcal{X}$ and
% the LPDD of the uniform distribution on $F$ is also tightly bounded,
% as we will formally prove in \secref{theory} and the supplementary material.
% In other words,
% the mode distribution in the real data is well captured by $F$ in the $\ell_2$ space.

\paraspace
\paragraph{Latent-space Gaussian mixture.}
% Y -> f(Y)
%  Recall that $Y$ is a small multi-subset of observations. We denote the observed
%  data points in $Y$ as $y_1,y_2,\cdots,y_m\in\M.$ By
%  Corollary~\ref{cor:bourgain}, we can use Algorithm~\ref{alg:bourgain} to
%  compute a embedding $f$ which embeds $Y$ to the Euclidean space $\mathbb{R}^d$
%  with $d={O(\log m)}$ and distortion $O(\log m).$ Let $F$ be the multiset of
%  embedded points $f(y_1),f(y_2),\cdots,f(y_m)\in\mathbb{R}^d.$ We consider the
%  distribution $\mathcal{P}'$ which is the logarithmic pairwise distance
%  distribution of the uniform distribution on $Y$.
%  %a random variable $Z'$ which is the logarithm of the original $\dist$-distance of two independent samples drawn from the uniform distribution on $Y$ conditioned on that the two samples are not the same.
%  We also consider the distribution $\hat{\mathcal{P}}$ which is the logarithmic pairwise distance distribution of the uniform distribution on $F.$ %random variable $\hat{Z}$ which is the logarithm of the $\ell_2$ distance of two independent samples drawn from the uniform distribution of $F$ conditioned on that the two samples are not the same.
%  We give a theoretical analysis of the Wasserstein distance between $\mathcal{P}$ and $\mathcal{P'}$ %distribution of $\hat{Z}$ and the distribution of $Z'$
%  in Section~\ref{}{\color{red} [theoretical section]}.
Now, we construct a distribution using $F$ to draw random vectors in latent space.
A simple choice is the uniform distribution over $F$, but such a distribution is not continuous over the latent space.
Instead, we construct a mixture of Gaussians, each of which is centered at a vector $f(y_i)$ in $F$.
In particular, we generate a latent vector $z\in\mathbb{R}^k$ in two steps:
We first sample a vector $\mu\in F$ uniformly at random, and then
draw a vector $z$ from the Gaussian distribution $\mathcal{N}(\mu,\sigma^2)$,
where $\sigma$ is a smoothing parameter that controls the smoothness of the
distribution of the latent space.
In practice, we choose $\sigma$ empirically ($\sigma=0.1$ for all our examples).
We discuss our choice of $\sigma$ in \appref{param_set}.

\emph{Remark. } By this definition, the Gaussian mixture consists of $m$ Gaussians (recall $F=\{f(y_i)\}_{i=1}^m$).
But this does not mean that we construct $m$ ``modes'' in the latent space.
If two Gaussians are close to each other in the latent space, they should be viewed as if they are from 
the same mode. It is the overall distribution of the $m$ Gaussians that reflects the distribution of modes.
In this sense, the number of modes in the latent space is implicitly defined, and the $m$ Gaussians are meant to enable
us to sample the modes in the latent space.

% Although the uniform distribution over $F$ is a potential choice of the
% distribution of the latent space, one of its drawback is that the such
% distribution is not continuous.  To smoothen the distribution, instead of
% constructing a uniform distribution over $F$ directly, we construct a Gaussian
% mixture distribution.  We generate a latent vector $z$ in the following two
% steps:
% \begin{enumerate}
% \item Uniformly sample a vector $\mu\in F.$
% \item Draw a vector $z$ from the Gaussian distribution $\mathcal{N}(\mu,\sigma^2),$ where $\sigma$ is a smoothing parameter which controls the smoothness of the distribution of the latent space.
% \end{enumerate}
% We use $\mathcal{Z}$ to denote the above distribution of the latent space.

% ----------------------------------------------------------------------
\secprespace
\subsection{Training}\label{sec:training}
\secspace
The Gaussian mixture distribution $\mathcal{Z}$ in the latent space
guarantees that the LPDD of $\mathcal{Z}$ is close to $(\lambda,\Lambda)-$LPDD
of the target distribution $\mathcal{X}$ (shown in \secref{theory}).
To exploit this property for avoiding mode collapse, we encourage
the generator network to match the pairwise distances of generated samples with the
pairwise $\ell_2$ distances of latent vectors in $\mathcal{Z}$.
This is realized by a simple augmentation of the GAN's objective function, namely,
% \begin{align}
%     L(G, D)  =  L_{\txt{gan}} + \beta L_{\txt{dist}},\;
%     \textrm{where} \; L_{\txt{dist}}(G)  =  \mathbb{E}_{z_j, z_j \sim \mathcal{Z}}\left[\left( \log({d(G(z_i), G(z_j))})-\log({\|z_i-z_j\|_2}) \right)^2\right] \label{eq:soft_constraint},
% \end{align}
\begin{align}
    & L(G, D)  =  L_{\txt{gan}} + \beta L_{\txt{dist}},\; \label{eq:ourobj} \\
    & \textrm{where} \; L_{\txt{dist}}(G)  =  \mathbb{E}_{z_i, z_j \sim \mathcal{Z}}\left[\left( \log({d(G(z_i), G(z_j))})-\log({\|z_i-z_j\|_2}) \right)^2\right] \label{eq:soft_constraint},
\end{align}
$L_{\txt{gan}}$ is the objective of the standard GAN in Eq.~\eqref{eq:ganloss},
and $\beta$ is a parameter to balance the two terms.
In $L_{\txt{dist}}$, %%the second term~\eqref{eq:soft_constraint},
$z_i$ and $z_j$ are two i.i.d. samples from $\mathcal{Z}$ conditioned on $z_i\ne z_j$.
Here the advantages of using logarithmic distances are threefold:
i) When there exists ``outlier'' modes that are far away from others, logarithmic distance prevents
those modes from being overweighted in the objective.
ii) Logarithm turns a uniform scale of the distance metric into a constant addend
that has no effect to the optimization. This is desired as
the structure of modes is invariant
under a uniform scale of distance metric.
iii) Logarithmic distances ease our theoretical analysis, which, as we will formalize in
\secref{theory}, states that when Eq.~\eqref{eq:soft_constraint} is optimized, the distribution
of generated samples will closely resemble the real distribution
$\mathcal{X}$. That is, mode collapse will be avoided.

% Lastly, we note that our method can be straightforwardly combined with other objective
% function extensions, ones that specifically focus on addressing other challenges such as instability,
% to leverage their advantage.
In practice, when experimenting with real datasets, we find that
a simple pre-training step using the correspondence between $\{y_i\}_{i=1}^m$ and $\{f(y_i)\}_{i=1}^m$
helps to improve the training stability. Although not a focus of this paper, this step is described in
\appref{pretrain}.

\secprespace
\section{Theoretical Analysis}\label{sec:theory}
\secspace
% goal: show steps are theoretically sound
%       supported by a series of theorems, which we prove in the supplementary
This section offers an theoretical analysis of our method presented in \secref{bgn}.
We will state the main theorems here while referring to the supplementary material for their rigorous proofs.
Throughout, we assume a property of the data distribution $\mathcal{X}$:
if two samples, $a$ and $b$, are drawn independently from $\mathcal{X}$, then with a high probability ($> \nicefrac{1}{2}$)
they are distinct (i.e., $\Pr_{a,b\sim \mathcal{X}}(a\not=b)\geq \nicefrac{1}{2}$).
%%theorems are nontrivial, we therefore refer the rigorous proofs to the supplementary material.

% \cmt{intuition of our point. bounded mapping!}
%
% In this Section, we mainly discuss the theoretical guarantees of our
% construction of the distribution of the latent space.  Recall that the universe
% of the data is $\M,$ the distance measure on $\M$ is
% $\dist:\M\times\M\rightarrow \mathbb{R}_{\geq 0}$ such that
% $(\M,\dist)$ is a metric space, and the real data distribution on
% $\M$ is $\mathcal{X}.$ In the all of our theorem statements shown in this
% section, we assume that $\Pr_{a,b\sim \mathcal{X}}(a\not=b)\geq 1/2,$ i.e. if
% we independent draw two samples, $a$ and $b$, from $\mathcal{X},$ then with probability at
% least $1/2,$ they are different.  We put all the proofs of the theorems shown
% in this section into the appendix.

\paraspace
\paragraph{Range of pairwise distances.}
We first formalize our definition of $(\lambda,\Lambda)-$LPDD in~\secref{metric}.
Recall that the multiset $X=\{x_i\}_{i=1}^n$ is our input dataset regarded as i.i.d. samples from $\mathcal{X}$.
%To discuss $(\lambda,\Lambda)-$LPDD of distribution $\mathcal{X}$, %% over a set $\mathbb{M}$,
%We define $\lambda$ and $\Lambda$ as follows:
We would like to find a range $[\lambda,\Lambda]$ such that the pairwise distances of samples from $\mathcal{X}$
is in this range with a high probability (see Example-\ref{ex:a} and -\ref{ex:b} in \appref{theory_ex}).
% \cmt{high probability in a range (refer to the two examples), want to find a
% range such the probability is high, so we can well define log distance, the theorem is to find out them.}
% We set $\lambda$ to be $\min_{i\in[n-1]:x_{i}\not=x_{i+1}}\dist(x_{i},x_{i+1})$
% and $\Lambda$ to be $\max_{i\in[n-1]}\dist(x_{i},x_{i+1})$.
Then, when considering the LPDD of $\mathcal{X}$, we account only for the pairwise
distances in the range $[\lambda, \Lambda]$ so that the logarithmic pairwise
distance is well defined.
The values $\lambda$ and $\Lambda$ are chosen by the following theorem, which we prove in \appref{thm3_proof}. 
% In this part, we
% mainly discuss the range of the pairwise distance that we should considered,
% i.e. we want to choose two parameters $\lambda,$ $\Lambda$ with
% $0<2\lambda<\Lambda$ such that that if we independently random sample two data
% points over $\mathcal{X},$ then with high probability the distance between
% these two samples is in the range $[\lambda,\Lambda].$
\begin{theorem}\label{thm:range_of_lambda}
%Suppose $\{x_i\}_{i=1}^n$ are i.i.d. samples drawn $\mathcal{X}$.
Let $\lambda=\min_{i\in[n-1]:x_{i}\not=x_{i+1}}\dist(x_{i},x_{i+1})$ and
$\Lambda=\max_{i\in[n-1]}\dist(x_{i},x_{i+1})$.
$\forall \delta,\gamma\in(0,1)$, if $n\geq C/(\delta\gamma)$ for some sufficiently large constant $C>0$, then with probability at least $1-\delta,$ $\Pr_{a,b\sim \mathcal{X}}(\dist(a,b)\in[\lambda,\Lambda]\mid \lambda,\Lambda)\geq \Pr_{a,b\sim\mathcal{X}}(a\not=b)-\gamma.$
\end{theorem}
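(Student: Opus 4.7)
The strategy is a rank-symmetry argument over disjoint sample pairs, followed by a single Markov step to convert an $X$-averaged bound into a high-probability statement.

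First, I set up the exchangeability structure. Let $k=\lfloor n/2\rfloor$ and $D_i=\dist(x_{2i-1},x_{2i})$ for $i=1,\ldots,k$, and let $D_0=\dist(a,b)$ where $a,b\sim\mathcal{X}$ are drawn independently and independently of $X$. Then $D_0,D_1,\ldots,D_k$ are i.i.d.\ samples from the pairwise-distance distribution of $\mathcal{X}$. Since $\Lambda$ is a maximum over a superset of pairs, $\Lambda\ge\max_{i\ge 1}D_i$; similarly, $\lambda\le\min\{D_i:i\ge 1,\ D_i>0\}$ whenever the latter set is nonempty (and the case where it is empty will be absorbed into the argument below).

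Second, I handle the upper tail by symmetry. Because the $k+1$ variables are i.i.d., each is equally likely to be the unique maximum, so
\[
  \Pr\bigl(\dist(a,b)>\Lambda\bigr)\;\le\;\Pr\bigl(D_0>\max_{i\ge 1}D_i\bigr)\;\le\;\tfrac{1}{k+1}\;\le\;\tfrac{2}{n}.
\]

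Third, I handle the lower tail, which is the more delicate step because of the positivity condition in the definition of $\lambda$. Let $q=\Pr_{a,b\sim\mathcal{X}}(a\ne b)\ge \tfrac{1}{2}$ and $N=|\{i\in[k]:D_i>0\}|\sim\mathrm{Bin}(k,q)$, which is independent of $D_0$. Conditional on $D_0>0$ and $N=s-1$, the $s$ positive variables among $D_0,\ldots,D_k$ are exchangeable, so $D_0$ is their strict minimum with probability at most $1/s$. Combining this with the closed-form identity $\E[1/(N+1)]=(1-(1-q)^{k+1})/((k+1)q)$ yields
\[
  \Pr\bigl(\dist(a,b)<\lambda,\ a\ne b\bigr)\;\le\;q\cdot\E\!\left[\tfrac{1}{N+1}\right]\;\le\;\tfrac{1}{k+1}\;\le\;\tfrac{2}{n}.
\]

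Fourth, I convert to a high-probability statement. Writing $E(X):=\Pr_{a,b}\bigl(\dist(a,b)\notin[\lambda,\Lambda],\ a\ne b\bigm| X\bigr)$, the two tails give $\E_X[E(X)]\le 4/n$. Markov's inequality then yields $\Pr_X(E(X)>\gamma)\le 4/(n\gamma)\le\delta$ as soon as $n\ge 4/(\delta\gamma)$. On this $(1-\delta)$-event, using $\{a=b\}\subset\{\dist(a,b)\notin[\lambda,\Lambda]\}$, the complement gives $\Pr_{a,b}(\dist(a,b)\in[\lambda,\Lambda]\mid\lambda,\Lambda)\ge 1-\Pr(a=b)-\gamma=\Pr(a\ne b)-\gamma$, which is the claim with $C=4$.

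The main obstacle is the lower tail: reconciling the positivity condition in the definition of $\lambda$ with a rank-symmetry argument. Exploiting that $D_0$ is independent of the count $N$ of positives among $D_1,\ldots,D_k$ collapses the sum over positivity patterns into the single binomial identity above, after which the rest of the proof reduces to a routine two-step Markov argument.
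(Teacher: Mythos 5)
Your proof is correct and follows essentially the same route as the paper's: pass to the $\lfloor n/2\rfloor$ disjoint sample pairs so the pairwise distances are i.i.d., bound the chance that a fresh pair falls outside $[\lambda,\Lambda]$ by an exchangeability/rank argument, and convert the resulting $O(1/n)$ bound in expectation into the high-probability statement via Markov (the paper phrases this last step as a contradiction, and likewise finishes by noting the full-range $\lambda,\Lambda$ only widen the disjoint-pair range). The only real difference is a sub-step: where the paper controls the number of distinct pairs with a Chernoff bound and the standing assumption $\Pr_{a,b\sim\mathcal{X}}(a\not=b)\geq \nicefrac{1}{2}$, you use the exact binomial identity $\E[1/(N+1)]=(1-(1-q)^{k+1})/((k+1)q)$, which is a bit cleaner and removes the need for that assumption.
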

\vspace{-1mm}
Simply speaking, this theorem states that if we choose $\lambda$ and $\Lambda$ as described above,
% to be the
% minimum (or the maximum) of all the non-zero pairwise distances on the observed
% $n$ data points,
%then with high probability, %for the chosen $\lambda,\Lambda,$
then we have $\Pr_{a,b\sim \mathcal{X}}(\dist(a,b)\in[\lambda,\Lambda]\mid
a\not= b)\geq 1-O(\nicefrac{1}{n})$, meaning that if $n$ is large, the pairwise distance of any
two i.i.d. samples from $\mathcal{X}$ is almost certainly in the range
$[\lambda,\Lambda]$. Therefore, $(\lambda,\Lambda)-$LPDD is a reasonable measure
of the pairwise distance distribution of $\mathcal{X}$.
In this paper, we always use $\mathcal{P}$ to denote the $(\lambda,\Lambda)-$LPDD of the
real data distribution $\mathcal{X}$.
%\cmt{As $n$ increases, the probability is high, this will be used to determine $m$ later}

\paraspace
\paragraph{Number of subsamples.}
With the choices of $\lambda$ and $\Lambda$, we have the following theorem to guarantee the soundness
of our subsampling step described in~\secref{subsample}.
\begin{theorem}\label{thm:main_small_sample_enough}
Let $Y=\{y_i\}_{i=1}^m$ be a multiset of $m=\log^{O(1)}(\Lambda/\lambda)\cdot \log(1/\delta)$ i.i.d. samples drawn from $\mathcal{X}$,
and let $\mathcal{P}'$ be the LPDD of the uniform distribution on $Y$.
%Let $P',Q'$ be two i.i.d. random variables both with uniform distribution on the multi-set $\{x_1,x_2,\cdots,x_m\}.$
%Let $Z'$ be the random variable $\log(\dist(P',Q'))$ conditioned on $\dist(P',Q')\in [\lambda,\Lambda].$
For any $\delta\in(0,1),$
with probability at least $1-\delta,$ we have $W(\mathcal{P},\mathcal{P}')\leq O(1).$
\end{theorem}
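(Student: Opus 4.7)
The plan is to use the one-dimensional identity $W(\mathcal{P},\mathcal{P}') = \int_{\log\lambda}^{\log\Lambda} \lvert F_\mathcal{P}(t) - F_{\mathcal{P}'}(t)\rvert\,dt$, valid since both LPDDs are supported on $[\log\lambda,\log\Lambda]$. Writing $L := \log(\Lambda/\lambda)$, the trivial bound on the integrand is $1$, giving only $W \leq L$, which is far too weak. The refinement comes from pinning down both CDFs at an $O(L)$-sized grid via $U$-statistic concentration, and then promoting the grid bound to the whole interval using monotonicity.

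\textbf{Main steps.} First I would express each CDF as a ratio. Set $q = \Pr_{a,b\sim\mathcal{X}}(\dist(a,b)\in[\lambda,\Lambda])$ and $p(t) = \Pr_{a,b\sim\mathcal{X}}(\log\dist(a,b)\leq t,\,\dist(a,b)\in[\lambda,\Lambda])$, so that $F_\mathcal{P}(t) = p(t)/q$. Defining $\hat q$ and $\hat p(t)$ as the corresponding averages over unordered pairs $(y_i,y_j)$ from $Y$, we have $F_{\mathcal{P}'}(t) = \hat p(t)/\hat q$. Both $\hat q$ and $\hat p(t)$ are order-two $U$-statistics with $\{0,1\}$-valued kernels, so Hoeffding's inequality for $U$-statistics yields $\Pr(\lvert \hat q - q\rvert \geq \epsilon) \leq 2\exp(-\Omega(m\epsilon^2))$, and analogously for each $\hat p(t)$. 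Next I place a uniform grid $t_0 = \log\lambda < t_1 < \cdots < t_L = \log\Lambda$ with spacing $1$, fix the target accuracy $\epsilon = \Theta(1/L)$, and take a union bound over the $L+1$ thresholds together with the event $\lvert\hat q - q\rvert\leq\epsilon$. This requires $m = \Omega(L^2\log(L/\delta)) = \log^{O(1)}(\Lambda/\lambda)\cdot\log(1/\delta)$, matching the theorem hypothesis. Combined with $q = \Omega(1)$, which follows from \thmref{range_of_lambda} and the standing assumption $\Pr_{a,b\sim\mathcal{X}}(a\neq b)\geq\nicefrac{1}{2}$, a short calculation for ratios yields $\lvert F_\mathcal{P}(t_\ell) - F_{\mathcal{P}'}(t_\ell)\rvert = O(\epsilon)$ at every grid point simultaneously with probability $\geq 1-\delta$.

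\textbf{From grid to continuous bound, and the main obstacle.} To lift the bound from the grid to all of $[\log\lambda,\log\Lambda]$, I use monotonicity of CDFs: for $t \in [t_\ell, t_{\ell+1}]$, $\lvert F_\mathcal{P}(t) - F_{\mathcal{P}'}(t)\rvert$ is at most $O(\epsilon)$ plus the larger of $F_\mathcal{P}(t_{\ell+1}) - F_\mathcal{P}(t_\ell)$ and $F_{\mathcal{P}'}(t_{\ell+1}) - F_{\mathcal{P}'}(t_\ell)$. Integrating over the full interval, the $O(\epsilon)$ term contributes $O(\epsilon L) = O(1)$, while the bucket-variation terms telescope to at most $(F_\mathcal{P}(\log\Lambda) - F_\mathcal{P}(\log\lambda)) + (F_{\mathcal{P}'}(\log\Lambda) - F_{\mathcal{P}'}(\log\lambda)) \leq 2$, because each bucket has unit width. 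The main obstacle is quantitative bookkeeping: one must control $O(L)$ correlated empirical quantities simultaneously, and convert per-grid-point numerator/denominator concentration into a clean bound on the ratio $\hat p/\hat q$ without losing more than a constant factor. The assumption $q = \Omega(1)$ keeps the denominator safely bounded away from zero; the logarithmic scaling of distances is what turns a possibly huge range $[\lambda,\Lambda]$ into one of length only $L = \log(\Lambda/\lambda)$, which is exactly what the polylogarithmic sample budget can afford to union-bound over.
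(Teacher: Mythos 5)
Your proposal is correct, and it reaches the stated $O(1)$ bound by a genuinely different decomposition than the paper's. The paper (Theorem~\ref{thm:small_set_good_approx} in \appref{thm4_proof}) partitions $[\lambda,\Lambda]$ into multiplicatively fine buckets $[\alpha^i,\alpha^{i+1})$ with $\alpha=1+\varepsilon_0$, bounds $W$ by the matched per-bucket mass (moved at most $\log\alpha$) plus the per-bucket probability mismatch charged the full range $\log(\Lambda/\lambda)$; this forces per-bucket accuracy of order $\gamma\varepsilon_0^2/\log^2(\Lambda/\lambda)$ over $\sim\log(\Lambda/\lambda)/\varepsilon_0$ buckets, and it obtains concentration for the dependent pairs by decomposing the complete graph on $Y$ into perfect matchings (Lemma~\ref{lem:K_m_decomposition}) and applying Bernstein's inequality (Lemma~\ref{lem:bernstein}) to each matching — essentially a hand-rolled proof of the Hoeffding $U$-statistic bound you invoke as a black box. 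Your route instead uses the one-dimensional identity $W(\mathcal{P},\mathcal{P}')=\int\lvert F_\mathcal{P}-F_{\mathcal{P}'}\rvert\,dt$, controls the CDFs only at $O(L)$ unit-spaced grid points to accuracy $\Theta(1/L)$, and exploits monotonicity so that within-bucket discrepancy telescopes to total variation at most $2$ rather than being charged the full range; the ratio step with $q=\Omega(1)$ (via \thmref{range_of_lambda} and the standing assumption $\Pr_{a,b\sim\mathcal{X}}(a\neq b)\geq\nicefrac{1}{2}$, which the paper parameterizes as $\gamma$) is the same in both arguments. What each buys: your version is shorter and needs only $m=O(L^2\log(L/\delta))$ samples versus the paper's $L^4$-type polylogarithmic budget, and it matches the theorem as stated; the paper's version is stated for an arbitrary target $\varepsilon\in(0,\log(\Lambda/\lambda))$, giving $W\leq\varepsilon$ with $\varepsilon$-dependent sample size, whereas your unit-bucket argument as written yields only the $O(1)$ bound (though it generalizes immediately by shrinking the grid spacing and $\epsilon$ accordingly). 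One small bookkeeping point, handled no more carefully in the paper's main-text statement than in your sketch: the event $q=\Omega(1)$ concerns the randomness of the original dataset $X$ determining $\lambda,\Lambda$, while the concentration is over the fresh samples $Y$; the paper's appendix resolves this by splitting samples and union-bounding the two events, and your argument should be read the same way.
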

\vspace{-2mm}
Proved in \appref{thm4_proof}, this theorem states that we only need $m$ (on the order
of $\log^{O(1)}(\Lambda/\lambda)$)
subsamples to form a multiset $Y$ that well captures the mode structure in the real data\todo{how to argue $m<n$?}.
%% Let $Y=\{y_1,y_2,\cdots,y_m\}$ be the such small set of $m$ i.i.d. samples.~\todo{talk about examples}

% \begin{figure}\label{fig:dist_distr}
% \centering
% \includegraphics[width=0.7\textwidth]{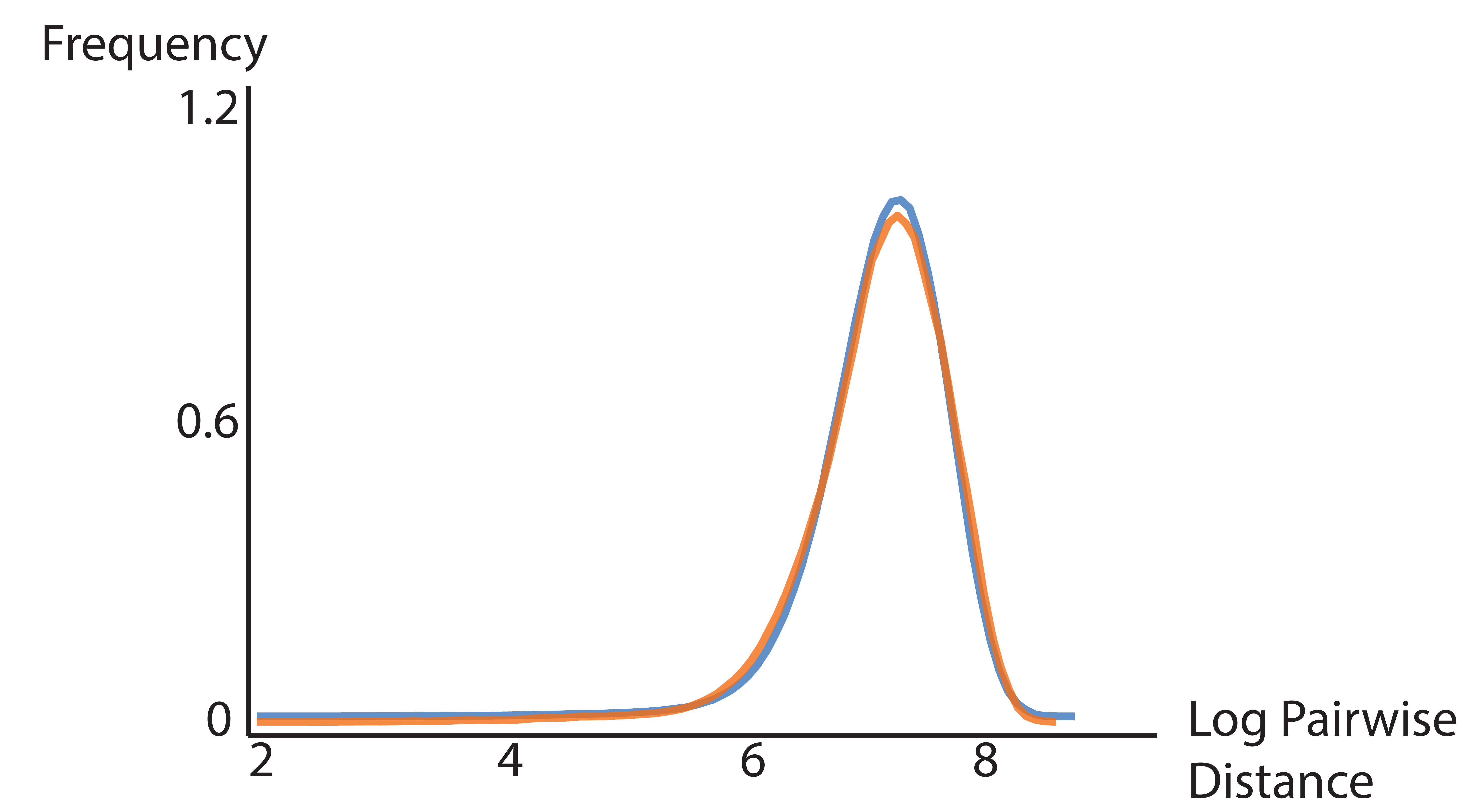}
% \vspace{-3mm}
% \caption{LPDD for each step on MNIST dataset. Red: original data. Green: after sampling 4096 data.
% Orange: after Bourgain embedding. Blue: sampling using GMM in latent space. Brown: trained using
% BourGAN. Purple: trained using normal DCGAN. Each step introduce some distorsion in LPDD but still relatively close
% compare to a normal DCGAN.}
% \vspace{-2mm}
% \end{figure}
\begin{wrapfigure}[10]{r}{0.35\textwidth}
\vspace{-5mm}
\centering
\includegraphics[width=0.35\textwidth]{}
\vspace{-6mm}
\caption{LPDD of uniform distribution $\mathcal{F}$ (orange) and of samples from Gaussian mixture (blue). %%%\Peilin{What is the real data distribution?}
}\label{fig:dist_distr}
\end{wrapfigure}
\paraspace
\paragraph{Discrete latent space.}
Next, we lay a theoretical foundation for our metric embedding step described in
\secref{gm}.  Recall that $F$ is the multiset of vectors resulted from
embedding data items from $Y$ to the $\ell_2$ space (i.e., $F=\{f(y_i)\}_{i=1}^m)$.
As proved in \appref{thm5_proof}, we have:
% In the previous part, % we show that if we uniform sample the pairwise distances
% %between the data points in $Y$ and take the logarithm, then its distribution is
% %very close to the distribution of $Z$ under the Wasserstein distance measure.
% we show that the logarithmic pairwise distance distribution $\mathcal{P}'$
% is close the distribution of $\mathcal{aP}$ under the Wasserstein distance measure.
% In the following theorem,
% we show that we can use $Y$ to construct a discrete
% distribution on a set $F$ of $m$ points, $r_1,r_2,\cdots,r_{m}$ in a low dimensional
% Euclidean space such that
% the logarithmic pairwise distance distribution $\hat{\mathcal{P}}$ of the uniform distribution on $F$ is close to the distribution $\mathcal{P}.$
% %the distribution of the logarithm of the pairwise
% %distance between two independent uniform samples from
% %$\{y_1,y_2,\cdots,y_{|F|}\}$ is close to the distribution of $Z$.
\begin{theorem}\label{thm:main_distance_of_distance_distribution}
%Let $Y$ be a multiset with $m=\log^{O(1)}(\Lambda/\lambda)$ i.i.d. samples $\{y_i\}_{i=1}^m$ drawn from the data distribution $\mathcal{X}$,
%where $\lambda$ and $\Lambda$ are real values defined in \thmref{range_of_lambda}.
%%Let $\lambda=\min_{i\in[n/2-1]:x_{i}\not=x_{i+1}}\dist(x_{i},x_{i+1}),$ and $\Lambda=\max(\max_{i\in[n/2-1]}\dist(x_{i},x_{i+1}),2\lambda).$
Let $\mathcal{F}$ be the uniform distribution on the multiset $F$.
%If $m>=\log^{C}(\Lambda/\lambda)$ with a sufficiently large constant $C$,
Then with probability at least $0.99$,
%we can find a distribution $\mathcal{F}$ on
%$F\subset\mathbb{R}^k$ for $k=O\left(\log\log(\Lambda/\lambda)\right)$, such
we have $W(\mathcal{P},\hat{\mathcal{P}})\leq
O(\log\log\log (\Lambda/\lambda)),$
where $\hat{\mathcal{P}}$ is the LPDD of $\mathcal{F}$.
%where $\hat{Z}$ is the random variable
%$\log(\|P'-Q'\|_2)$ conditioned on $P'\not=Q'.$ Here, $P',Q'$ are two i.i.d.
%random variables both with distribution $\mathcal{Y}.$
 \end{theorem}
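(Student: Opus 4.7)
The plan is to prove the bound via a triangle-inequality decomposition,
\[
W(\mathcal{P}, \hat{\mathcal{P}}) \le W(\mathcal{P}, \mathcal{P}') + W(\mathcal{P}', \hat{\mathcal{P}}),
\]
where $\mathcal{P}'$ is the LPDD of the uniform distribution on the subsample $Y=\{y_i\}_{i=1}^m$. The first summand is already controlled by Theorem~\ref{thm:main_small_sample_enough}: choosing $\delta$ to be a small constant (e.g.\ $1/200$) gives $W(\mathcal{P},\mathcal{P}') \le O(1)$ with probability at least $0.995$. The entire task therefore reduces to bounding $W(\mathcal{P}', \hat{\mathcal{P}})$ by $O(\log\log\log(\Lambda/\lambda))$.

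For this remaining step I would exploit the bi-Lipschitz guarantee of Corollary~\ref{cor:bourgain}: for every pair $y_i, y_j \in Y$,
\[
\dist(y_i, y_j) \le \|f(y_i) - f(y_j)\|_2 \le \alpha \cdot \dist(y_i, y_j),
\]
with $\alpha \le O(\log m)$. Taking logarithms shows that each pair contributes a pointwise discrepancy of at most $\log \alpha = O(\log\log m)$ between its original and embedded logarithmic distances. The natural coupling of $\mathcal{P}'$ and $\hat{\mathcal{P}}$ identifies a pair $(y_i,y_j)$ drawn uniformly from $Y \times Y$ with its image $(f(y_i),f(y_j))$; under this coupling the expected absolute difference of the logarithmic distances is at most $O(\log\log m)$, giving $W(\mathcal{P}', \hat{\mathcal{P}}) \le O(\log\log m)$. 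Plugging in $m = \log^{O(1)}(\Lambda/\lambda)$ (the $\log(1/\delta)$ factor collapses into constants since $\delta = \Theta(1)$) yields $\log\log m = O(\log\log\log(\Lambda/\lambda))$, which combined with the $O(1)$ bound on the first summand gives the theorem.

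The principal obstacle will be handling the conditioning built into the LPDD definition: $\mathcal{P}'$ conditions on $\dist(y_i,y_j) \in [\lambda, \Lambda]$ while $\hat{\mathcal{P}}$ conditions on $\|f(y_i)-f(y_j)\|_2$ landing in the analogous range, and because the embedding can stretch distances by up to a factor $\alpha$ these two events do not coincide along the natural coupling. To reconcile them, I would invoke Theorem~\ref{thm:range_of_lambda} to argue that pairwise distances from $\mathcal{X}$ concentrate strictly inside $[\lambda, \Lambda]$, so the symmetric difference of the two conditioning events carries only $o(1)$ probability mass; the mismatched portion can then be absorbed into an additive $O(1)$ term inside the Wasserstein bound. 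A union bound over the two high-probability events (Theorem~\ref{thm:main_small_sample_enough} and the conditioning-mismatch estimate) delivers the stated success probability of $0.99$. Everything else is routine: the distance bounds on individual pairs transfer to a Wasserstein bound on distributions because Wasserstein-$1$ distance is sub-additive under deterministic couplings, and the quantitative calculation $\log\log m = O(\log\log\log(\Lambda/\lambda))$ is immediate once one plugs in the value of $m$ from Theorem~\ref{thm:main_small_sample_enough}.
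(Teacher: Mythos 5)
Your overall route is the same as the paper's: decompose via the triangle inequality through the subsample LPDD $\mathcal{P}'$, control $W(\mathcal{P},\mathcal{P}')$ by \thmref{main_small_sample_enough} with constant $\delta$, and control $W(\mathcal{P}',\hat{\mathcal{P}})$ by the natural coupling $(y_i,y_j)\mapsto(f(y_i),f(y_j))$ together with the distortion bound of Corollary~\ref{cor:bourgain}, so that each coupled pair contributes at most $\log\alpha=O(\log\log m)=O(\log\log\log(\Lambda/\lambda))$ to the Wasserstein cost. That is exactly the structure of the paper's proof (its Theorem~\ref{thm:bourgain_distribution} is your coupling step).

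The one step that does not go through as you state it is the treatment of the conditioning mismatch. You propose to show that the symmetric difference of the two conditioning events has ``only $o(1)$ probability mass'' and to ``absorb'' it into an additive $O(1)$ term. This absorption is not automatic: the logarithmic distances range over an interval of length $\log(\Lambda/\lambda)$, so a mismatched mass of size $\epsilon$ can shift the Wasserstein distance by as much as $\epsilon\cdot\log(\Lambda/\lambda)$, and $o(1)\cdot\log(\Lambda/\lambda)$ is not $O(1)$ (nor $O(\log\log\log(\Lambda/\lambda))$) in general; you would need the mismatch probability to be at most $O(1/\log(\Lambda/\lambda))$, which \thmref{range_of_lambda} only provides under an explicit assumption that $n$ is polylogarithmically large in $\Lambda/\lambda$. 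The paper instead removes the mismatch entirely: it assumes $n\geq\log^{c_0}(\Lambda/\lambda)$, computes $\lambda,\Lambda$ from one half of the sample and takes $Y$ from the other half (so the range and the subsample are independent), and then union-bounds over all $m^2=\operatorname{poly}\log(\Lambda/\lambda)$ pairs in $Y$, using the per-pair guarantee $\Pr(\dist(a,b)\in[\lambda,\Lambda]\mid a\not=b)\geq 1-1/\operatorname{poly}(\log(\Lambda/\lambda))$, to conclude that with probability $0.999$ \emph{every} distinct pair of $Y$ has distance in $[\lambda,\Lambda]$; on that event the $(\lambda,\Lambda)$-conditioned and distinctness-conditioned LPDDs of $Y$ coincide exactly, and the coupling argument applies verbatim. (A smaller point: the paper's $\hat{\mathcal{P}}$ is conditioned only on distinctness of the embedded points, not on the embedded distance lying in $[\lambda,\Lambda]$, and distinctness is preserved because the embedding is non-contracting; so the mismatch you need to worry about lives on the $Y$ side, not the $F$ side.) With that quantitative union-bound step, and the sample-splitting to justify applying \thmref{main_small_sample_enough} with $\lambda,\Lambda$ fixed, your argument matches the paper's.
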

\vspace{-2mm}
Here the triple-log function ($\log\log\log (\Lambda/\lambda)$)
indicates that the Wasserstein distance bound can be very tight. Although this theorem
states about the uniform distribution on $F$, not precisely the Gaussian mixture we constructed,
it is about the case when $\sigma$ of the Gaussian mixture approaches zero. We also
empirically verified the consistency of LPDD from Gaussian mixture samples (\figref{dist_distr})\todo{label on y-axis 1.2?}.

\paraspace
\paragraph{GAN objective.}
Next, we theoretically justify the objective function (i.e., Eq.~\eqref{eq:soft_constraint} in \secref{training}).
Let $\mathcal{\tilde{X}}$ be the distribution of generated samples $G(z)$ for $z\sim\mathcal{Z}$
and $\mathcal{\tilde{P}}$ be the $(\lambda,\Lambda)-$LPDD of $\mathcal{\tilde{X}}$.
Goodfellow et al.~\cite{goodfellow2014generative} showed that the global optimum of the
GAN objective~\eqref{eq:ganloss} is reached if and only if $\mathcal{\tilde{X}}=\mathcal{X}$.
Then, when this optimum is achieved, we must also have $W(\mathcal{P},\mathcal{\tilde{P}})=0$
and $W(\mathcal{\tilde{P}},\mathcal{\hat{P}})\leq O(\log\log\log(\Lambda/\lambda))$.
The latter is because
\mbox{
$W(\mathcal{P},\mathcal{\hat{P}})\leq O(\log\log\log(\Lambda/\lambda))$
from \thmref{main_distance_of_distance_distribution}.}
% Then, when this optimum is achieved, we must have $W(\mathcal{P},\mathcal{\tilde{P}})=0$.
% From \thmref{main_distance_of_distance_distribution}, we know that
% $W(\mathcal{P},\mathcal{\hat{P}})\leq O(\log\log\log(\Lambda/\lambda))$.
% Thus, when the optimum is achieved, we must also have
% $W(\mathcal{\tilde{P}},\mathcal{\hat{P}})\leq O(\log\log\log(\Lambda/\lambda))$.

As a result, the GAN's minmax problem~\eqref{eq:ganloss} is equivalent to the constrained minmax problem,
$\min_G\max_D L_{\txt{gan}}(G,D)$,
subject to $W(\mathcal{\tilde{P}},\mathcal{\hat{P}})\leq \beta$, %%%\label{eq:converted_equation}
where $\beta$ is on the order of $O(\log\log\log(\Lambda/\lambda))$.
Apparently, this constraint renders the minmax problem harder. We therefore
consider the minmax problem, $\min_G\max_D L_{\txt{gan}}(G,D)$,
subjected to slightly strengthened constraints,
\begin{align}
    & \forall z_1\not=z_2\in \supp(\mathcal{Z}), \dist(G(z_1),G(z_2))\in[\lambda,\Lambda],\,\,\textrm{and}\label{eq:range_constraint}\\
    & \left[\log(\dist(G(z_1),G(z_2)))-\log \|z_1-z_2\|_2\right]^2\leq \beta^2\label{eq:distance_constraint}.
\end{align}
As proved in \appref{strengthen}, if the above constraints are
satisfied, then $W(\mathcal{\tilde{P}},\mathcal{\hat{P}})\leq \beta$ is automatically satisfied.
In our training process, we assume that the
constraint~\eqref{eq:range_constraint} is automatically satisfied, supported by \thmref{range_of_lambda}.
Lastly, instead of using Eq.~\eqref{eq:distance_constraint} as a hard constraint, we treat it as a
soft constraint showing up in the objective function~\eqref{eq:soft_constraint}. From this perspective,
the second term in our proposed objective~\eqref{eq:ourobj} can be interpreted as a Lagrange multiplier of the constraint.

\paragraph{LPDD of the generated samples.}
Now, if the generator network is trained to satisfy the
constraint~\eqref{eq:distance_constraint}, we have
 $W(\mathcal{\tilde{P}},\mathcal{\hat{P}})\leq O(\log\log\log(\Lambda/\lambda))$.
Note that this satisfaction does \emph{not} imply that the global optimum of the
GAN in Eq.~\eqref{eq:ganloss} has to be reached
-- such a global optimum is hard to achieve in practice.
Finally, using the triangle inequality of the Wasserstein-$1$ distance and \thmref{main_distance_of_distance_distribution},
we reach the conclusion that
\begin{equation}\label{eq:final_bound}
W(\tilde{\mathcal{P}},\mathcal{P})\leq
W(\tilde{\mathcal{P}},\hat{\mathcal{P}})+W(\mathcal{P},\hat{\mathcal{P}})\leq
O(\log\log\log(\Lambda/\lambda)).
\end{equation}
This means that the LPDD of generated samples closely resembles that of the data distribution.
To put the bound in a concrete context,
in Example~\ref{ex:thought} of \appref{theory_ex}, we analyze a toy case 
in a thought experiment to show,
if the mode collapse occurs (even partially), how large $W(\tilde{\mathcal{P}},\mathcal{P})$
would be in comparison to our theoretical bound here.

\secprespace
\section{Experiments}
\secspace
This section presents the empirical evaluations of our method. %% the improved mode coverage and diversity by using our technique.
There has not been a consensus on how to evaluate GANs
in the machine learning community~\cite{theis2015note, borji2018pros},
and quantitative measure of mode collapse is also not straightforward.
We therefore evaluate our method using both synthetic and real datasets, most of which have
been used by recent GAN variants. 
We refer the reader to \appref{eval_app} for detailed
experiment setups and complete results, while highlighting our main findings here.

\begin{table}[b]
\centering
\scalebox{0.85}{
\begin{tabular}{@{}llllllllllllllll@{}}
    \whline{1.0pt}
                 & \multicolumn{3}{c}{2D Ring}                                                                                                                                                        & \multicolumn{3}{c}{2D Grid}                                                                                                     & \multicolumn{3}{c}{2D Circle}                                   \\ %\cmidrule(l){2-6}\cmidrule(l){7-11}\cmidrule(l){12-16}
                 & \multicolumn{1}{c}{\begin{tabular}[c]{@{}c@{}}\#modes\\ (max 8)\end{tabular}} & $\mathcal{W}_1$   & \multicolumn{1}{c}{\begin{tabular}[c]{@{}c@{}}low \\ quality\end{tabular}} & \begin{tabular}[c]{@{}l@{}}\#modes\\ (max 25)\end{tabular} & $\mathcal{W}_1$   & \multicolumn{1}{c}{\begin{tabular}[c]{@{}c@{}}low \\ quality\end{tabular}} & \begin{tabular}[c]{@{}l@{}}center \\ captured\end{tabular} & $\mathcal{W}_1$   & \multicolumn{1}{c}{\begin{tabular}[c]{@{}c@{}}low \\ quality\end{tabular}}  \\ 
    \whline{0.7pt}
        GAN      & 1.0                                                                           & 38.60 & 0.06\%                                                                     & 17.7                                                       & 1.617  														& 17.70\%                                                                          & No                                                                            & 32.59 & 0.14\% \\
        Unrolled & 7.6                                                                           & 4.678 & 12.03\%                                                                    & 14.9                                                       & 2.231 														& 95.11\%                                                                          & No                                                                            & 0.360 & 0.50\% \\
        VEEGAN   & 8.0                                                                           & 4.904 & 13.23\%                                                                    & 24.4                                                       & 0.836  														& 22.84\%                                                                     	   & Yes                                                                           & 0.466 & 10.72\% \\
        PacGAN   & 7.8                                                                           & 1.412 & 1.79\%                                                                     & 24.3                                                       & 0.898  														& 20.54\%                                                                          & Yes                                                                           & 0.263 & 1.38\% \\ 
    \whline{0.7pt}
        \textbf{BourGAN}  & 8.0                                                                  & 0.687 & 0.12\%                                                                     & 25.0                                                       & 0.248  														& 4.09\%                                                                           & Yes                                                                           & 0.081 & 0.35\% \\ 
    \whline{1.0pt}
    \end{tabular}
}
\caption{Statistics of Experiments on Synthetic Datasets}\label{tab:synthetic}
\end{table}

\begin{figure}[t]
\centering
\includegraphics[width=0.92\textwidth]{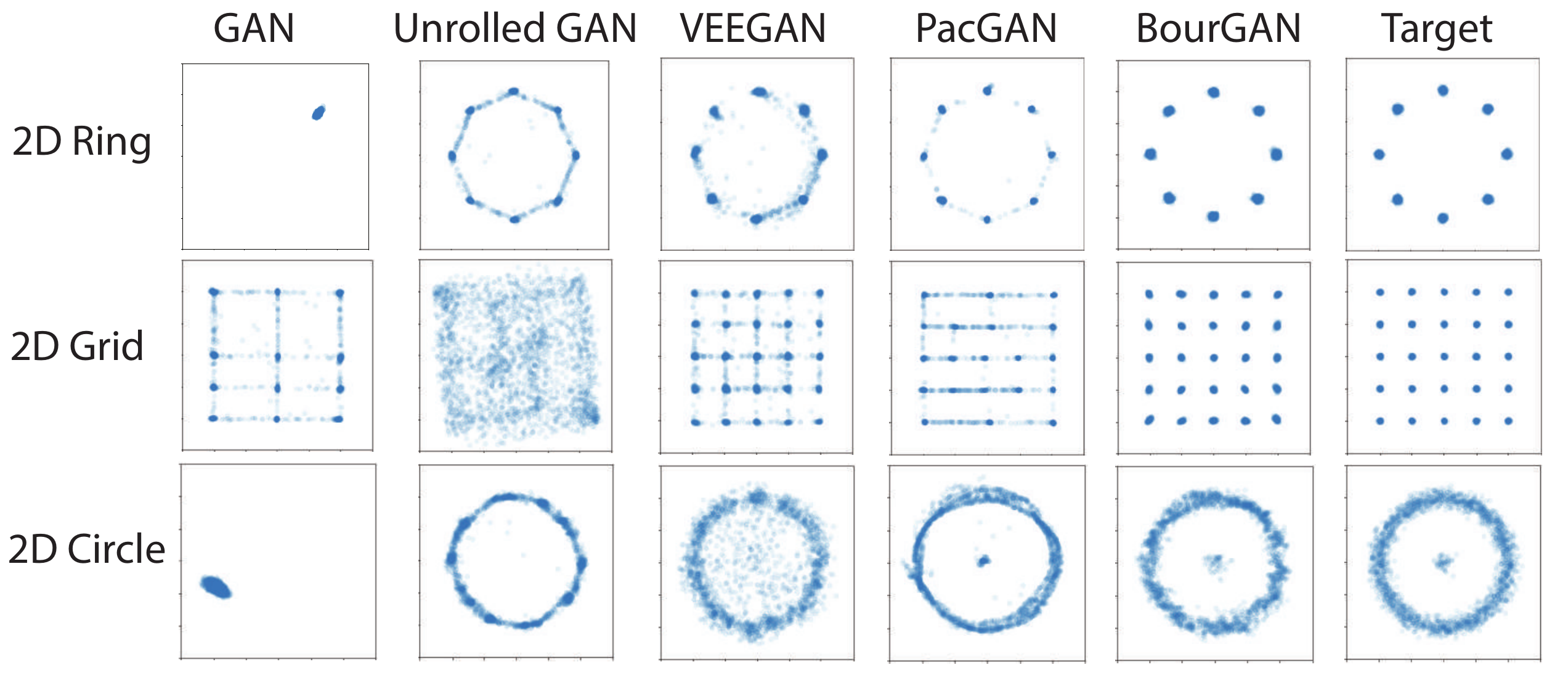}
\vspace{-1mm}
\caption{\textbf{Synthetic data tests}. In all three tests, our method clearly captures 
all the modes presented in the targets, while producing \emph{no} unwanted samples located between the regions of modes.}\label{fig:synthetic}
\vspace{-1mm}
\end{figure}

\paraspace
\paragraph{Overview.}
We start with an overview of our experiments. 
\textbf{i)} On synthetic datasets, we quantitatively compare our method 
with four types of GANs, including the original
GAN~\cite{goodfellow2014generative} and more recent
VEEGAN~\cite{srivastava2017veegan}, Unrolled GANs~\cite{metz2016unrolled}, and
PacGAN~\cite{lin2017pacgan}, following the evaluation metrics used by those methods (\appref{synthetic_details}).
\textbf{ii)} We also examine in each mode how well the distribution of generated samples 
matches the data distribution (\appref{synthetic_details}) -- a new test not presented previously.
\textbf{iii)} We compare the training convergence rate of our method with existing
GANs (\appref{synthetic_details}), examining to what extent the Gaussian mixture sampling is beneficial.
\textbf{iv)} We challenge our method with the difficult \emph{stacked MNIST} dataset (\appref{mnist_app}),
testing how many modes it can cover. %%similar to the experiment in~\cite{lin2017pacgan}.
\textbf{v)} Most notably, we examine if there are ``false positive'' samples
generated by our method and others (\figref{synthetic}). Those are unwanted
samples not located in any modes.
In all these comparisons, we find that BourGAN clearly produces higher-quality samples.
In addition, we show that \textbf{vi)} our method is able to incorporate 
different distance metrics, ones that lead to different mode interpretations (\appref{mnist_app});
and \textbf{vii)} our pre-training step (described in \appref{pretrain}) further accelerates
the training convergence in real datasets (\appref{synthetic_details}).
Lastly, \textbf{viii)} we present some qualitative results (\appref{qual}).
\vspace{2mm}

% cover all modes -> inside each modes, distribution, KL divergence
% convergence (without pre-training)
% most importantly, no unwanted data

% \secprespace
% \subsection{Evaluation of LLPD}
% \secspace
% Our theoretical analysis guranteed that the LLPD is bounded in each step of our pipeline.
% One gap between theory and implementation is that we use Gaussian mixture to 
% approximation the latent sample distribution, while the variance of the Gaussian mixture is unexamined.
% In practice, we choose the variance of the GMM empirically. Figure \cmt{X} shows that using
% gaussian mixture sampling can maintain the LLPD, and the LLPD of each step is close to
% the original LLPD which is consistent with our theoretical analysis. The variance we choose
% for different experiment are shown in \cmt{Appendix}.

\paraspace
\paraspace
\paragraph{Quantitative evaluation.}
We compare BourGAN with other methods on three synthetic datasets:
eight 2D Gaussian distributions arranged in a ring (2D Ring), 
twenty-five 2D Gaussian distributions arranged in a grid (2D Grid), 
and a circle surrounding a Gaussian placed in the center (2D Circle).
The first two were used in previous methods~\cite{metz2016unrolled, srivastava2017veegan,lin2017pacgan}, and the last is proposed by us.
The quantitative performance of these methods are summarized in \tabref{synthetic}, where the column ``\# of modes''
indicates the average number of modes captured by these methods, and ``low quality'' indicates number of samples
that are more than $3\times$ standard deviations away from the mode centers. 
Both metrics are used in previous methods~\cite{srivastava2017veegan,lin2017pacgan}. For the 2D circle case,
we also check if the central mode is captured by the methods. Notice that all
these metrics measure how many modes are captured, but \emph{not} how well the
data distribution is captured.  To understand this, we also compute the
Wasserstein-$1$ distances between the distribution of generated samples and the
data distribution (reported in \tabref{synthetic}). It is evident that our
method performs the best on all these metrics (see \appref{synthetic_details} for more details).

% The 2D Ring and the 2D Grid is follow the experiment from \cite{srivastava2017veegan, metz2016unrolled}.
% The 2D Circle experiment is designed to mimic the scenario that a mode with few samples apart from other modes is hard to capture \cite{salimans2016improved}.
% To measure the results, we report the number of mode captured by the generator,
% the percentage of /textit{low quality samples} which stand for samples that the
% nearest mode is larger than 3 standard deviation
% and the Wasserstein-1 distance for target and generated samples which measures the similarity between the generator and target distribution.\todo{used in other work?}
% In this experiment, We considered being captured only when the mean
% of the mode is nearest to at least one sample within 1 standard deviation.
% We show that our method can both capture the
% highest number of mode and produce lowest number of low quality sample among
% all compared method.
\paraspace
\paragraph{Avoiding unwanted samples.}
A notable advantage offered by our method is the ability to avoid \emph{unwanted} samples, ones that
are located between the modes. We find that all the four existing GANs suffer from this problem (see \figref{synthetic}),
because they use Gaussian to draw latent vectors (recall \figref{gaussian_flaw}). In contrast, our method generates 
no unwanted samples in all three test cases.
We refer the reader to \appref{mnist_app} for a detailed discussion of this feature and several other quantitative comparisons.

% The detailed settings (e.g. network structure, hyperparameters) can be found in \cmt{Appendix}. 
% We wish to highlight that our results not only captured the all the modes with minimum 
% percentage of low quality samples, the percentage of 1,2,3-std is also most consistent with the gaussian distribution statistically \cmt{See appendix},
%which is 68.2\%, 95.4\%, 99.7\% for 1-std, 2-std, 3-std, respectively \cite{pukelsheim1994three}. This implies that our method also captured
%the internal distribution in each mode.

\paraspace
\paragraph{Qualitative results.}
We further test our algorithm on real image datasets. \figref{celebA} illustrates a qualitative comparison between 
DCGAN and our method, both using the same generator and discriminator architectures and default hyperparameters.
\appref{qual} includes more experiments and details.

% We refer the reader to \cmt{Appendix} for more details for quantitative evaluation of our method on different dataset.
% \todo{to mention classifier distance}.

\begin{figure}[t]
\centering
\includegraphics[width=0.92\textwidth]{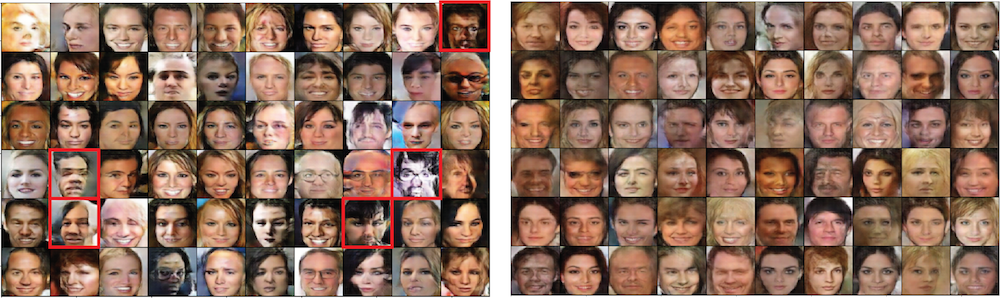}
\vspace{-1mm}
\caption{\textbf{Qualitative results} on CelebA dataset using DCGAN (Left) and BourGAN (Right) under $\ell_2$ metric. 
It appears that DCGAN generates some samples that are visually more implausible (e.g., red boxes)
in comparison to BourGAN. Results are fairly sampled at random, not cherry-picked.}\label{fig:celebA}
\vspace{-1mm}
\end{figure}

% \cmt{It seems our method has particular advantages when the amount of data is large. Is that right? How can we demonstrate this advantage?}

% \cmt{with and w/o distance penalty, distance loss still decreases, regular gan doesn't have this property}

\secprespace
\section{Conclusion}
\secspace
This paper introduces BourGAN, a new GAN variant aiming to address mode
collapse in generator networks.
In contrast to previous approaches, we draw latent space vectors using a 
Gaussian mixture, which is constructed through metric embeddings. Supported 
by theoretical analysis and experiments, our method enables
a well-posed mapping between latent space and multi-modal data distributions. 
In future, our embedding and Gaussian mixture sampling can also be 
readily combined with other GAN variants and even other generative models to leverage their advantages.

\subsection*{Acknowledgements}
We thank Daniel Hsu, Carl Vondrick and Henrique Maia for the helpful feedback.
Chang Xiao and Changxi Zheng are supported in part by the National Science Foundation
(CAREER-1453101, 1717178 and 1816041) and generous donations from SoftBank and
Adobe.  Peilin Zhong is supported in part by National Science Foundation (CCF-1703925,
CCF-1421161, CCF-1714818, CCF-1617955 and CCF-1740833), Simons Foundation
(\#491119 to Alexandr Andoni) and Google Research Award.

%Ths document is an example of BibTeX using in bibliography management. Three items
%are cited: \textit{The \LaTeX\ Companion} book \cite{latexcompanion}, the Einstein
%journal paper \cite{einstein}, and the Donald Knuth's website \cite{knuthwebsite}.
%The \LaTeX\ related items are \cite{latexcompanion,knuthwebsite}.

\bibliographystyle{unsrt}
\bibliography{ref}

%%%%%%%%%%%%%%%%%%%%%%%%%%%%%%%%%%%%%%%%%%%%%%%%%%%%%%%%%%%%%%%%%%%%%%%%%%%%%%%%%%%%%%%%
%% Appendix starts here
\newpage
\appendix
% ----------------------------------------------------------------------------
\begin{algorithm}[t]\caption{Improved Bourgain Embedding}\label{alg:bourgain}
 \small
\begin{algorithmic}
\small
\STATE \textbf{Input:} A finite metric space $(Y,\dist).$
\STATE \textbf{Output:} A mapping $f:Y\rightarrow \mathbb{R}^{O(\log |Y| )}$.
\STATE //\textbf{Bourgain Embedding:}
\STATE Initialization: $m\leftarrow |Y|,$ $t\leftarrow O(\log m),$ and $\forall i\in[\lceil\log m\rceil],j\in [t],S_{i,j}\leftarrow\emptyset.$
\FOR{$i=1\rightarrow \lceil\log m\rceil$}
\FOR{$j=1\rightarrow t$}
        \STATE For each $x\in Y,$ independently choose $x$ in $S_{i,j},$ i.e. $S_{i,j}=S_{i,j}\cup \{x\}$ with probability $2^{-i}.$
    \ENDFOR
\ENDFOR
\STATE Initialize $g:Y\rightarrow \mathbb{R}^{\lceil\log m\rceil\cdot t}.$
\FOR{$x\in Y$}
    \STATE $\forall i\in[\lceil\log m\rceil],j\in[t],$ set the $((i-1)\cdot t+j)$-th coordinate of $g(x)$ as $\dist(x,S_{i,j}).$
\ENDFOR
\STATE //\textbf{Johnson-Lindenstrauss Dimentionality Reduction:}
\STATE Let $d=O(\log m)$, and let $G\in\mathbb{R}^{d\times (\lceil\log m\rceil\cdot t)}$ be a random matrix with entries drawn from i.i.d. $\mathcal{N}(0,1).$
\STATE Let $h:\mathbb{R}^{\lceil\log m\rceil\cdot t}\rightarrow \mathbb{R}^d$ satisfy $\forall x\in\mathbb{R}^{\lceil\log m\rceil\cdot t}, h(x)\leftarrow G\cdot x.$
\STATE //\textbf{Rescaling:}
\STATE Let $\beta=\min_{x,y\in Y:x\not=y}\frac{\|h(g(x))-h(g(y))\|_2}{\dist(x,y)}.$
\STATE Initialize $f:Y\rightarrow \mathbb{R}^{d}.$ For $x\in Y,$ set $f(x)\leftarrow h(g(x))/\beta.$
\STATE Return $f$.
\end{algorithmic}
\end{algorithm}
% ----------------------------------------------------------------------------

% ------------------------------------------------------------------------------------------
\begin{figure}[b]
\centering
\begin{overpic}[width=1.0\textwidth]{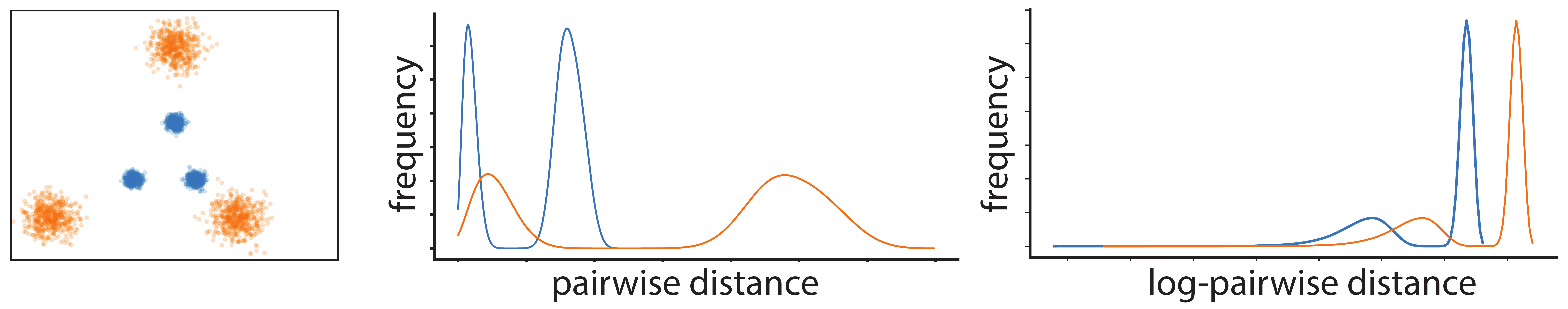}
    \put(10.3,-1.7){(a)}
    \put(42.9,-1.7){(b)}
    \put(81.0,-1.7){(c)}
\end{overpic}
\vspace{-2mm}
\caption{ \textbf{Intuition of using LPDD.}
\textbf{(a)} Here blue points illustrate a dataset with three modes.
The orange points indicate the same data but uniformly scaled up.
\textbf{(b)} The pairwise distance distributions of both datasets are different.
The distribution of orange points is a streched version of the distribution of blue points.
As a result, the Wasserstein-$1$ distance between both distributions can become arbitrarily large,
depending on the scale.
\textbf{(c)} In contrast, the distribution of logarithmic pairwise distance remains the same up to a
constant shift. In this case, the Wasserstein-$1$ distance of the logarithmic pairwise distance distributions
is differed by only a constant addent, which can be easily accounted.}\label{fig:lpdd}
\end{figure}
%%For example, a simple uniform scale of the distance metric causes all the pairwise distances
%%uniformly scaled. Without taking the logarithm, the pairwise distance distribution will
%%be stretched in the distance space, but the LPDD is simply shifted without any distortion.
% ------------------------------------------------------------------------------------------

\section{Algorithm of Improved Bourgain Embedding}\label{sec:bour_alg}
Algorithm~\ref{alg:bourgain} outlines our randomized algorithm that computes
the improved Bourgain embedding with high probability.
To embed a finite metric space $(Y,\dist)$ into $\ell_2$ space,
Algorithm~\ref{alg:bourgain} takes $O(m^2\cdot s + m^2\log^2 m)$ running time,
where $m=|Y|$ is the size of $Y$, and $s$ is the running time needed to compute a
pairwise distance $\dist(x,y)$ for any $x,y\in Y.$

%------------------------------------------------
\section{Proof of Corollary~\ref{cor:bourgain}}\label{sec:corollary_proof}
Here we prove the Corollary~\ref{cor:bourgain} introduced in \secref{gm}.
First, we recall the Johnson-Lindenstrauss lemma~\cite{johnson1984extensions}.
\begin{theorem}[Johnson-Lindenstrauss lemma]\label{thm:JL_lemma}
    Consider a set of $m$ points $X=\{x_i\}_{i=1}^m$ in a vector space $\mathbb{R}^t$. There
exist a mapping $h:X \rightarrow \mathbb{R}^k$ for some $k=O(\log m)$ such that
\begin{align*}
\forall i,j\in[m], \|h(x_i)-h(x_j)\|_2\leq \|x_i-x_j\|_2\leq O(1)\cdot\|h(x_i)-h(x_j)\|_2.
\end{align*}
\end{theorem}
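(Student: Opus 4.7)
The plan is to give a randomized construction via Gaussian random projection, then invoke a probabilistic existence argument. Concretely, I would take $h$ to be (a rescaling of) the linear map $x \mapsto \frac{1}{\sqrt{k}} G x$, where $G \in \mathbb{R}^{k \times t}$ has i.i.d.\ $\mathcal{N}(0,1)$ entries and $k = \Theta(\log m)$. The first step is to establish the core single-vector concentration: for any fixed $v \in \mathbb{R}^t$, each coordinate $(Gv)_i$ is a sum of independent Gaussians with variance $\|v\|_2^2$, so $\|Gv\|_2^2 = \|v\|_2^2 \sum_{i=1}^k Z_i^2$ with $Z_i \sim \mathcal{N}(0,1)$ i.i.d. Thus $\tfrac{1}{k}\|Gv\|_2^2$ is an unbiased estimator of $\|v\|_2^2$, and the problem reduces to tail bounds for a chi-squared random variable with $k$ degrees of freedom.

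The second step is the concentration estimate itself. I would use the Laurent--Massart bound, or equivalently a direct Chernoff-style moment generating function computation, to obtain
\[
\Pr\left[\,\left|\tfrac{1}{k}\|Gv\|_2^2 - \|v\|_2^2\right| > \varepsilon\, \|v\|_2^2\,\right] \leq 2\exp(-c\varepsilon^2 k)
\]
for some absolute constant $c>0$ and any $\varepsilon \in (0,1)$. Applied to each of the at most $\binom{m}{2}$ difference vectors $v_{ij} = x_i - x_j$ and combined with a union bound, the overall failure probability is at most $m^2 \cdot 2\exp(-c\varepsilon^2 k)$. Choosing $\varepsilon$ to be a small absolute constant and $k = C\log m$ with $C$ sufficiently large drives the failure probability strictly below $1$, so a deterministic $h$ satisfying $(1-\varepsilon)\|x_i - x_j\|_2^2 \leq \tfrac{1}{k}\|G(x_i - x_j)\|_2^2 \leq (1+\varepsilon)\|x_i - x_j\|_2^2$ simultaneously for all $i,j\in[m]$ exists.

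The final step is a cosmetic rescaling to match the statement as written, which has the asymmetric form $\|h(x_i) - h(x_j)\|_2 \leq \|x_i - x_j\|_2 \leq O(1)\cdot\|h(x_i) - h(x_j)\|_2$. I would replace $h$ with $\tilde h(x) = h(x)/\sqrt{1+\varepsilon}$, so that the upper bound $\|\tilde h(x_i) - \tilde h(x_j)\|_2 \leq \|x_i - x_j\|_2$ holds deterministically, while the lower bound gives $\|x_i - x_j\|_2 \leq \sqrt{(1+\varepsilon)/(1-\varepsilon)}\cdot \|\tilde h(x_i) - \tilde h(x_j)\|_2 = O(1)\cdot \|\tilde h(x_i) - \tilde h(x_j)\|_2$ since $\varepsilon$ is a fixed constant.

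The main obstacle is the chi-squared concentration inequality; everything else is a routine union bound and rescaling. I expect this to be the most delicate piece, though it is a standard moment generating function computation: using $\mathbb{E}[\exp(\lambda Z_i^2)] = (1-2\lambda)^{-1/2}$ for $\lambda < 1/2$, together with independence, one obtains $\mathbb{E}[\exp(\lambda \sum_i Z_i^2)] = (1-2\lambda)^{-k/2}$, and the two-sided Chernoff bound follows by optimizing $\lambda$ in $\Pr[\sum_i Z_i^2 > (1+\varepsilon)k] \leq e^{-\lambda(1+\varepsilon)k}\cdot(1-2\lambda)^{-k/2}$ and symmetrically for the lower tail.
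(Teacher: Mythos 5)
Your proof is correct: the Gaussian-projection construction, the chi-squared moment-generating-function concentration bound, the union bound over the $\binom{m}{2}$ difference vectors, and the final rescaling by $\sqrt{1+\varepsilon}$ to obtain the contractive form $\|h(x_i)-h(x_j)\|_2\leq\|x_i-x_j\|_2\leq O(1)\cdot\|h(x_i)-h(x_j)\|_2$ with $k=C\log m$ are all sound. The paper does not prove this lemma itself (it cites Johnson and Lindenstrauss), but the i.i.d.\ $\mathcal{N}(0,1)$ random matrix $G$ used for the dimensionality-reduction step of its Algorithm 1 is exactly your construction, so your argument is the standard proof underlying that citation and is consistent with the paper's approach.
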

By combining this lemma with Bourgain's theorem~\ref{thm:bourgain},
we reach the corollary through the following proof.
\vspace{-3mm}
\begin{proof}
By Theorem~\ref{thm:bourgain}, we can embed all data items from $Y$ into
the $\ell_2$ space with $O(\log^2 m)$ dimensions and with $O(\log m)$ distortion. Then,
according to Theorem~\ref{thm:JL_lemma}, we can further reduce the number of dimensions to $O(\log m)$
with $O(\log m)$ distortion.
\end{proof}
%------------------------------------------------

%------------------------------------------------
\section{Pre-training}\label{sec:pretrain}
While our method addresses mode collapse, in practice, we have to confront other challenges
of training the GAN, particularly its instability and sensitivity to hyper-parameters.
To this end, we pre-train the generator network $G$ and use it to warm start
the training of our GAN. Pre-training is made possible because our metric embedding step has
established the correspondence between the embedding vectors $f(y_i)$
in the latent space and the data items $y_i\in Y$, $i\in[m]$. This correspondence allows us to perform
a supervised learning to minimize the objective
\begin{align*}
L_{\txt{pre}}(G) = \mathbb{E}_{y_i, z_i} \left[d(G(f(y_i)), y_i)\right].
\end{align*}
As will be shown in our experiments, this pre-training step leads to faster convergence when
we train our GANs.
Lastly, we note that our method can be straightforwardly combined with other
objective function extensions~\cite{mao2017least,arjovsky2017wasserstein,gulrajani2017improved,zhao2016energy, saatci2017bayesian, li2017mmd} and network architectures~\cite{lin2017pacgan, tolstikhin2017adagan, metz2016unrolled}, 
ones that specifically focus on addressing other challenges such as instability, to leverage their advantages.
%------------------------------------------------

%------------------------------------------------
\section{Illustrative Examples for \secref{theory}}\label{sec:theory_ex}
The following two examples illustrate the ranges of the pairwise distance that can cover a pairwise distance
sample with a high probability. They are meant to exemplify the choices of $\lambda$ and $\Lambda$ discussed
in \secref{theory}.
\begin{example}\label{ex:a}
 Consider the set of all points in $\mathbb{R}^{20},$ and the distance measure is chosen to be the Euclidean distance.
 Let $\mathcal{X}$ be the Gaussian distribution $\mathcal{N}(0,I).$
 %Suppose we have $10^4$ i.i.d. observations from $\mathcal{X}.$
 Suppose we draw two i.i.d. samples $x,y$ form $\mathcal{X},$
 then with probability at least $0.99999,$ $\dist(x,y)$ should be in the range $[0.1,10].$
 \end{example}

 \begin{example}\label{ex:b}
 Consider the set of all $256 \times 256$ grayscale images, and the brightness of each pixel is described by a number in $\{0,1,2,\cdots,255\}.$
 Let $\mathcal{X}$ be a uniform distribution over all the images which contains a cat.
 Suppose we draw two i.i.d. samples $x,y$ from $\mathcal{X},$ then with probability $1,$ the distance between $x$ and $y$ should be in the range $[1,255\cdot 256\cdot 256]=[1,16777216].$
 %If we choose the distance measure to be the pixel-wise $\ell_1$ distance, then with probability $1,$ $\lambda\geq 1$ and $\Lambda\leq 255\cdot 256\cdot 256=16777216.$
 %Thus, $\log(\Lambda/\lambda)\leq 24.$
 \end{example}

Next, we show a concrete example in which if the generator produces samples mainly in one
mode, then $W(\mathcal{P},\mathcal{\tilde{P}})$ can be as large as $\Omega(\log(\Lambda/\lambda))$,
drastically larger than the bound in \eqref{eq:final_bound}.
\begin{example}\label{ex:thought}
Suppose $\M=A\cup B\subset \mathbb{R}^d,$ where $A=\{0,1\}^d$ is a Hamming cube
close to the origin, and $B=\{\nicefrac{\Lambda}{\sqrt{d}}-1,\nicefrac{\Lambda}{\sqrt{d}}\}^d$
is another Hamming cube far away from the origin (i.e., $\Lambda\gg d$).
It is easy to see that $A,B$ are two separated modes. Let
$\dist:\M\times\M\rightarrow \mathbb{R}_{\geq 0}$ be the Euclidean distance
(i.e., $\forall x,y\in\M$, $\dist(x,y)=\|x-y\|_2$), and let $\lambda=1.$
It is easy to see that $\forall x\not=y\in\M,$ we have
$\dist(x,y)\in[\lambda,\Lambda].$ Suppose the real data distribution
$\mathcal{X}$ is the uniform distribution on $\M$. Also suppose the distribution of
generated samples is $\tilde{\mathcal{X}}$, and the probability that generator $G$
generates samples near the mode $B$ is at most $1/10$. Then, consider the
$(\lambda,\Lambda)-$LPDD (denoted by $\mathcal{P}$) of
$\mathcal{X}$. If we draw two independent samples from $\mathcal{X}$,
then conditioned on this two samples being distinct, with probability at least
$\nicefrac{1}{3}$, they are in different modes. Thus, if we draw a sample $p$
from $\mathcal{P},$ then with probability at least $\nicefrac{1}{3}$, $p$ is at least
$\nicefrac{\Lambda}{2}$. Now consider the distribution $\tilde{\mathcal{X}}$ of generated samples. Since with probability at
least $\nicefrac{9}{10}$, a sample from $\tilde{\mathcal{X}}$ will land in mode $A$, if we
draw two samples from $\tilde{\mathcal{X}},$ then with probability at least
$\nicefrac{4}{5}$, the distance between these two samples is at most $\sqrt{d}$. Thus, the
Wasserstein distance is at least
$(\nicefrac{4}{5}-(1-\nicefrac{1}{3}))\cdot|\log(\frac{\Lambda}{2})-\log\sqrt{d}|\geq
0.1\log(\nicefrac{\Lambda}{\sqrt{d}})=\Omega(\log(\Lambda/\lambda))$.
\end{example}
%------------------------------------------------

\section{Strengthened Constraints for GAN's Minmax Problem}\label{sec:strengthen}
As explained in \secref{theory}, introducing the constraint $W(\mathcal{P},\mathcal{P}')<\beta$
in the GAN optimization makes the problem harder to solve.
Thus, we choose to slightly strengthen the constraint.
Observe that if for all $z_1\not=z_2\in \supp(\mathcal{Z})$ we have
$|\log(\dist(G(z_1),G(z_2)))-\log(\|z_1-z_2\|_2)|\leq
O(\log\log\log(\Lambda/\lambda))$ and
$\dist(G(z_1),G(z_2))\in[\lambda,\Lambda],$ we have
\begin{align*}
 W(\mathcal{\tilde{P}},\mathcal{\hat{P}})&\leq \sum_{z_1\not=z_2\in\supp(\mathcal{Z})} \Pr_{Z_1,Z_2\sim \mathcal{Z}}(Z_1=z_1,Z_2=z_2\mid Z_1\not =Z_2)\cdot \left|\log\left(\frac{\dist(G(z_1),G(z_2))}{\|z_1-z_2\|_2}\right)\right|\\
 &\leq O(\log\log\log(\Lambda/\lambda)).
\end{align*}
In other words, if the constraints in~\eqref{eq:range_constraint} and~\eqref{eq:distance_constraint}
are satisfied, then the constraint $W(\mathcal{P},\mathcal{P}')<\beta$ is automatically satisfied.
Thus, they are a slightly strengthened version of $W(\mathcal{P},\mathcal{P}')<\beta$.

\section{Evaluation and Experiment} \label{sec:eval_app}
In this section, we provide details of our experiments,
starting with a few implementation details that are worth noting.
All our experiments are performed using a Nvidia GTX 1080 Ti Graphics card and
implemented in Pytorch~\cite{paszke2017automatic}.

\subsection{Parameter setup}\label{sec:param_set}
As discussed in \secref{subsample}, we randomly sample $m$ data items from the provided the dataset
to form the set $Y$ for subsequent metric embeddings. In our implementation, we choose $m$ automatically by using a simple
iterative algorithm. Starting from a small $m$ value (e.g., 32), in each iteration we double $m$
and add more samples from the real dataset. We stop the iteration when the pairwise distance distribution
of the samples converges under the Wasserstein-1 distance. The termination of this process is guaranteed 
because of the existence of the theoretical upper bound of $m$ (recall \thmref{main_small_sample_enough}).
In all our examples, we found $m=4096$ sufficient.
% This value is chosen empirically such that a larger value has no noticeable effect on
% the LPDD of the uniform distribution on $Y$. The existence of the value is
% guaranteed by \thmref{main_small_sample_enough}.
% We experimented a larger number of samples, but only introduce unnoticeable difference on LPDD.
With the chosen $m$, we construct the multiset $Y={y_i}_{i=1}^m$ by uniformly sampling the dataset $X$.
Afterwards, we compute the metric embedding $f(y_i)$ for each $y_i\in Y$,
and normalize each vector in $\{f(y_i)\}_{i=1}^m$ by
\[
    \bar{f}(y_i) = \frac{f(y_i)-\mu_0}{\sigma_0},
\]
where $\mu_0$ and $\sigma_0$ are the average and standard deviation of the entire set $\{f(y_i)\}_{i=1}^m$, respectively.

Two other parameters are needed in our method, namely, $\beta$ in Eq.~\eqref{eq:ourobj}
and the standard deviation $\sigma$ used for the sampling latent Gaussian
mixture model (recall \secref{gm}).  In all our experiments, we set $\beta=0.2$
and $\sigma=0.1$. We find that the final mode coverage of generated samples is not sensitive to $\sigma$ value
in the range $[0.2, 0.6]$. Only when $\sigma$ is too small, the Gaussian mixture becomes noisy (or ``spiky''), 
and when $\sigma$ is too large, the Gaussian mixture starts to degrade into a single Gaussian as used in conventional GANs.

\subsection{Experiment Details on Synthetic Data}\label{sec:synthetic_details}
\paragraph{Setup.}
We follow the experiment setup used in~\cite{srivastava2017veegan} for 2D Ring
and 2D Grid.  In the additional 2D circle case, the input dataset is generated by using
100 Gaussian distributions on a circle with a radius $r=2$, as well as three
identical Gaussians located at the center of the circle. All Gaussians have the
same standard deviation (i.e., 0.05).

All the GANs (including our method and compared methods) in this experiment share
the same generator and discriminator architectures.
They have two hidden layers, each of which has 128 units with ReLU activation
and without any dropout \cite{srivastava2014dropout} or normalization layers \cite{ioffe2015batch}.
When using the Unrolled GAN~\cite{metz2016unrolled}, we set the number of
unrolling steps to be five as suggested in the authors' reference
implementation.
When using PacGAN~\cite{lin2017pacgan}, we follow the authors' suggestion and
set the number of packing to be four.  In all synthetic experiments, our method
is performed without the pre-training step described in \secref{pretrain}.

During training, we use a mini-batch size of 256 with 3000 iterations in total,
and use the Adam~\cite{kingma2014adam} optimization method with a learning rate
of $0.001$ and set $\beta_1=0.5, \beta_2=0.999$. During testing, we use 2500
samples from the learned generator network for evaluation, and use $\ell_2$ distance
as the target distance metric for Bourgain embedding.
Every metric value listed in \tabref{synthetic} is evaluated and averaged over 10 trials.

\paragraph{Studies.}
When evaluating the number of captured modes (``\# modes'' in \tabref{synthetic}),
a mode is considered as being ``captured'' when there exists at least one sample located
within one standard-deviation-distance (1-std) away from the center of the mode.
This criterion is slightly different from that used in~\cite{srivastava2017veegan, lin2017pacgan},
in which they use three standard-deviation (3-std).
We choose to use 1-std because we would like to have finer granularity to differentiate
the tested GANs in terms of their mode capture performance.
% previous paper \cite{srivastava2017veegan, lin2017pacgan}, we use a more
% rigorous standard that a mode is considered being captured only when the mean
% of the mode is nearest to at least one sample within 1 standard deviation.

To gain a better understanding of the mode capture performance, we also measure
in each method the percentages of generated samples located within 1-, 2-, and 3-std away
from mode centers for the three test datasets. The results are reported in \tabref{synthetic_app}.
We note that for Gaussian distribution, the percentages of samples located in 1-, 2-, and 3-std away
from the center are 68.2\%, 95.4\%, 99.7\%, respectively~\cite{pukelsheim1994three}.
Our method produces results that are closest to these percentages in comparison to other methods.
This suggests that our method better captures not only individual modes but also the data \emph{distribution}
in each mode, thanks to the pairwise distance preservation term
\eqref{eq:soft_constraint} in our objective function. We also note that this experiment result is echoed
by the Wasserstein-$1$ measure reported in~\tabref{synthetic}, for which we measure the Wasserstein-$1$ distance
between the distribution of generated samples and the true data distribution. Our method under that metric also performs
the best.

%We find that the percentages of our samples is the closet to the
% Previous paper \cite{srivastava2017veegan} suggest to use the concept
% \textit{High Quality Samples} if it is within three standard deviations of the
% nearest mode. Here we would like to use a finer measure that report the
% percentage of samples within 1,2,3 standard deviation.
% It is noticed that
% our results not only captured the all the modes with minimum percentage of low
% quality samples,

% the percentage of 1,2,3-std is also most consistent with the
% gaussian distribution statistically, which is 68.2\%, 95.4\%, 99.7\% for 1-std,
% 2-std, 3-std, respectively \cite{pukelsheim1994three}, which means our method
% also captured the distribution in each mode.

\begin{table}[t]
    \centering
    \scalebox{0.9}{
    \begin{tabular}{@{}lllllllllll@{}}
    \whline{1.0pt}
                 & \multicolumn{3}{c}{2D Ring}                   & \multicolumn{3}{c}{2D Grid}  & \multicolumn{3}{c}{2D Circle}  \\
                 &   1-std   & 2-std   & 3-std    & 1-std   & 2-std   & 3-std    & 1-std   & 2-std   & 3-std \\ 
    \whline{0.7pt}
        GAN      &   61.46\% & 96.14\% & 99.94\%  & 35.86\% & 69.86\% & 82.3\%     & 82.08\% & 98.26\% & 99.86\%       &\\
        Unrolled &   70.66\% & 85.09\% & 87.96\%  & 0.54\%  & 2.10\%  & 4.88\%    & 92.08\% & 99.35\% & 99.49\%      & \\
        VEEGAN   &   51.68\% & 79.24\% & 86.76\%  & 24.76\% & 60.24\% & 77.16\%   & 54.72\% & 80.44\% & 89.28\%       &\\
        PacGAN   &   88.32\% & 97.28\% & 98.20\%  & 28.9\%  & 67.76\% & 79.46\%    & 58.10\% & 94.62\% & 98.62\%       &\\ 
    \whline{0.7pt}
        \textbf{BourGAN} & 59.54\% & 96.64\% & 99.88\%  & 38.64\% & 81.54\% & 95.9\% & 67.52\% & 95.64\% & 99.64\%       & \\ 
    \whline{1.0pt}
    \end{tabular}
}
\caption{Statistics of Experiments on Synthetic Datasets}\label{tab:synthetic_app}
\vspace{-4mm}
\end{table}

\begin{figure}
\centering
\includegraphics[width=0.99\textwidth]{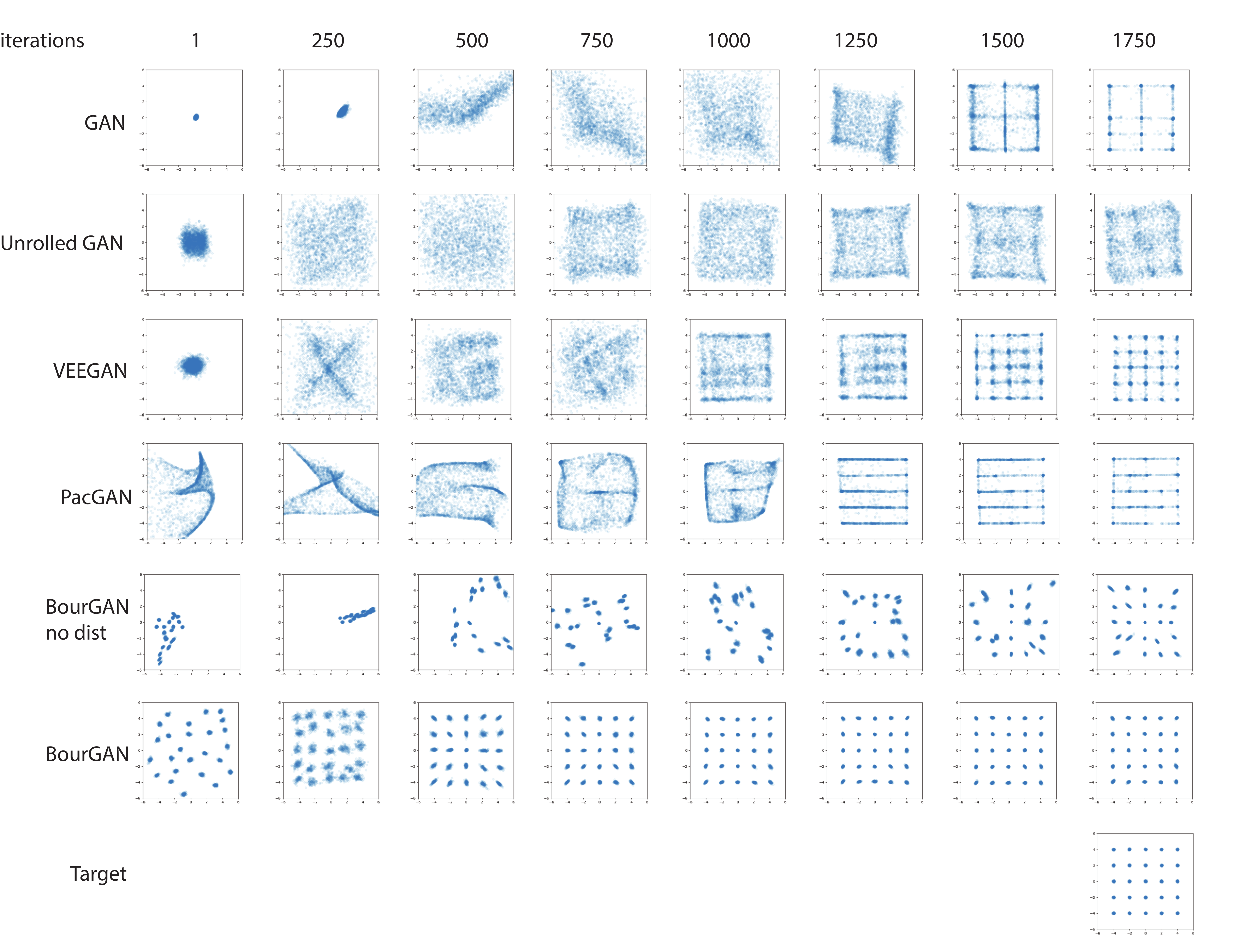}
\vspace{-10mm}
\caption{\textbf{How quickly do they converge?}
Our method outperforms other methods in terms of convergence rate in this example.
From left to right are the samples generated after the generators are trained over an increasing number of iterations.
The fifth row indicates the performance of Wasserstein GAN~\cite{arjovsky2017wasserstein}, although it is not particularly
designed for addressing mode collapse.
The sixth row reports the performance of BourGAN with standard GAN objective (i.e., no distance
preservation term~\eqref{eq:soft_constraint} is used). The seventh row indicate BourGAN with our
proposed objective function, which converges in the least number of iterations.
  }\label{fig:convergence}
\vspace{-1mm}
\end{figure}

Lastly, we examine how quickly these methods converges during the training
process. The results are reported in \figref{convergence}, where we also
include the results from our BourGAN but set $\beta$ in the
objective~\ref{eq:ourobj} to be zero. That is, we also test our method using
standard GAN objective function.  Figure~\ref{fig:convergence} shows that our
method with augmented objective converges the most quickly: The generator becomes
stable after 1000 iterations in this example, while others remain unstable even
after 1750 iterations. This result also empirically supports the necessity of
using the pairwise distance preservation term in the objective function.  We
attribute the faster convergence of our method to the fact that the
latent-space Gaussian mixture in our method encodes the structure of modes in
the data space and the fact that our objective function encourages the generator to
preserve this structure. %% during the training.

% Our results also encouraging a faster convergence rate since the mode
% distribution is encoded as a prior in the latent space, as shown in Figure
% \ref{fig:convergence}. We also show that if we set $\beta$ in Eq
% \ref{eq:ourobj} to 0, the training will converge to a poor local minima, which
% indicates the necessarity of using distance constraint.

\subsection{Evaluation on MNIST and Stacked MNIST}\label{sec:mnist_app}
In this section, we report the evaluation results on MNIST dataset. All MNIST
images are scaled to 32$\times$32 by bilinear interpolation.

\paragraph{Setup.}
Quantitative evaluation of GANs is known to be challenging,
because the implicit distributions of real datasets are hard, if not impossible, to obtain.
For the same reason, quantification of mode collapse is also hard for real datasets,
and no widely used evaluation protocol has been established.
We take an evaluation approach that has been used in a number of existing
GAN variants~\cite{borji2018pros, srivastava2017veegan, metz2016unrolled}:
we use a third-party trained classifier to classify the generated samples into specific modes,
and thereby estimate the generator's mode coverage~\cite{salimans2016improved}.

% There are several indirect method have been propose the evaluate GANs specifically for
% their mode collapsing behavior \cite{borji2018pros, srivastava2017veegan, metz2016unrolled}.
% However, there is no universal standard metric for all scenario therefore we need to compare across a
% number of different methods.
% To evaluate mode collapse in labelled datasets,
% a third-party trained classifier
% that classifies the generated samples is using to estimate the mode coverage
% for the generator \cite{salimans2016improved}.

% However, as we discussed in Sec
% \ref{sec:bgn}, we view modes in a dataset as a \emph{geometric structure}
% embodied under a specific distance metric. Thus, instead of use \emph{mode},
% here we use \emph{class} to distinguish data in a labelled datasets.\todo{fix}

%---------------------------------------------------------------
\paragraph{Classifier distance.}
A motivating observation of our method is that the structure of modes depends on
a specific choice of distance metric (recall \figref{pdd_tsne}). The widely used distance metrics on images
(such as the pixel-wise $\ell_2$ distance and Earth Mover's distance) may not necessarily produce
interpretable mode structures.
Here we propose to use the \textit{Classifier Distance} metric defined as
\begin{equation}\label{eq:cd}
d_{\text{classifier}}(x_i, x_j) = \| P(x_i)-P(x_j) \|_2,
\end{equation}
where $P(x_i)$ is the softmax output vector of a pre-trained
classification network, and $x_i$ represents an input image.
Adding a third-party trained classifier turns the task of training
generative models semi-supervised~\cite{mirza2014conditional}.
Nevertheless, Eq.~\eqref{eq:cd} is a highly complex distance metric,
serving for the purpose of testing our method with an ``unconventional''
metric. It is also meant to show that a properly chosen metric can produce interpretable modes.

% As we shown in \figref{pdd_tsne}, difference distance metric will produce difference mode distribution in latent space.
% Other than some well-known distance metric on image
% (e.g pixel-wise l2 distance, Earth Mover's distance),
% we introduce a new type of distance, called \textit{Classifier Distance}, which
% is defined as $d_{\text{classifier}}(x_i, x_j) = \| p(y|x_i)-p(y|x_j) \|_2 $,
% where $p(y|x_i)$ is the softmax output vector of a pretrained
% classifcation network with input image $x_i$.

% Although adding third-party trained classifier made the generative model task
% semi-supervised \cite{mirza2014conditional},
% here we want to show that a proper
% distance metric will produce a better class coverage and interpretable mode
% distribution in latent space.

% -------------------------------------------------
\paragraph{Visualization of embeddings.}
After we apply our metric embedding algorithm with different distance metrics on MNIST images,
we obtain a set of vectors in $\ell_2$ space.
To visualize these vectors in 2D,
we use t-SNE \cite{maaten2008visualizing},
a nonlinear dimensionality reduction technique well-suited for visualization of
high-dimensional data in 2D or 3D.  Although not fully accurately, this
visualization shreds light on how (and where) data points are located in the
latent space (see \figref{pdd_tsne}).

\paragraph{MNIST experiment.}
First, we verify that our pre-training step (described in \appref{pretrain})
indeed accelerates the training process, as illustrated in \figref{pretrain_app}.

\begin{figure}
\centering
\includegraphics[width=0.99\textwidth]{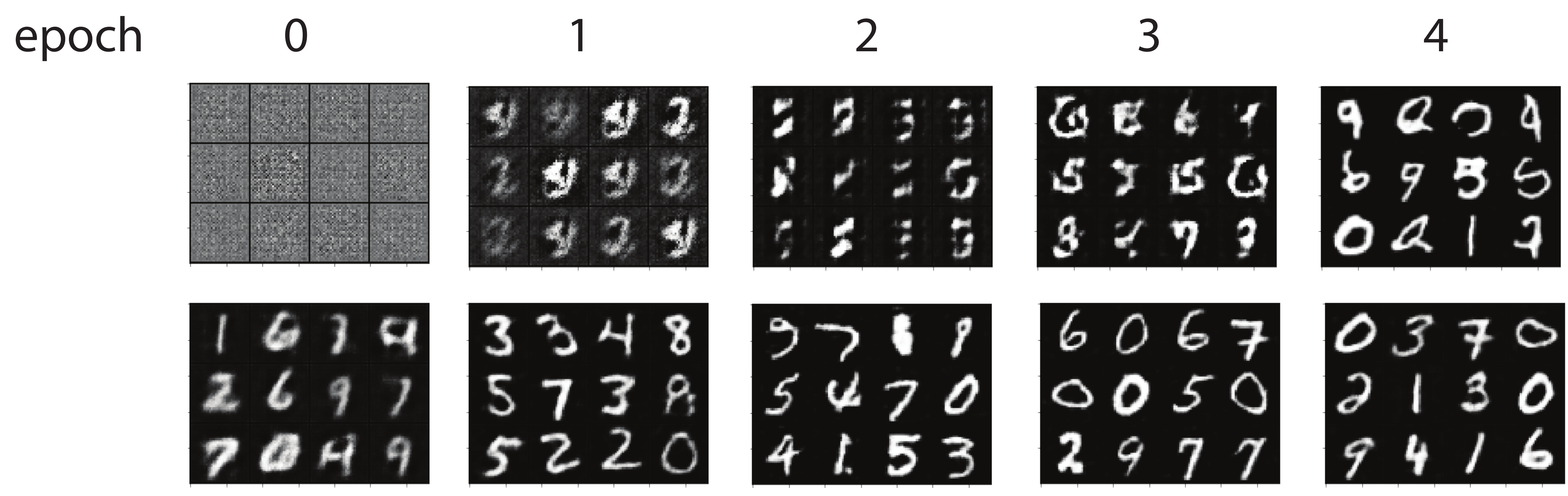}
\vspace{-1mm}
\caption{\textbf{Efficacy of pre-training.} (Top) BourGAN without pre-training. (Bottom) BourGAN with pre-training.
With the pre-training step, the GAN converges faster, and the generator network
produces better-quality results in each epoch.}\label{fig:pretrain_app}
\vspace{-1mm}
\end{figure}

% Next, we conduct two experiments on the MNIST dataset. i) We examine
% to what extent different distance metrics would affect the performance of the generator
% (see \figref{mnist_stats}).

Next, we evaluate the quality of generated samples using different distance metrics.
One widely used evaluation score is the inception score~\cite{gulrajani2017improved} that
measures both the visual quality and diversity of generated samples. However,
as pointed out by~\cite{che2016mode}, a generative model can produce a high
inception score even when it collapses to a visually implausible sample.
Furthermore, we would like to measure the visual quality and diversity
separately rather than jointly, to understand the performance of our
method in each of the two aspects under different metrics.
% To evaluate the quality of generated samples, the inception score
% \cite{gulrajani2017improved} was been widely adopted since it measures both
% visual quality and variety of samples.
% However, a generative model that
% collapse to a very bad image may still has high inception score
% \cite{che2016mode}. Also, we aim to measure the visual quality and the variety
% of samples seperately in order to
% show how different distance metric affect the
% performance on each aspect.
Thus, we choose to use entropy, defined as $E(x) = -\sum_{i=0}^{9} p(y=i|x)\log
p(y=i|x)$, as the score to measure the quality of the generated sample $x$, where $p(y=i|x)$
is the probability of labeling the input $x$ as the digit $i$ by the pre-trained
classifier.  The rationale here is that a high-quality sample often produces a
low entropy through the pre-trained classifier.

We compare DCGAN with BourGAN using this score.
Since our method can incorporate different distance metrics, we consider two of them:
BourGAN using $\ell_2$ distance and BourGAN using the aforementioned classifier distance.
For a fair comparison, the three GANS (i.e., DCGAN, BourGAN ($\ell_2$), and BourGAN (classifier))
all use the same number of dimensions ($k=55$) for the latent space and the same
network architecture.  For each type of GANs, we randomly generate 5000 samples
to evaluate the entropy scores, and the results are reported in \figref{mnist_stats}.
We also compute the KL divergence between the generated distribution and the data distribution,
following the practice of \cite{metz2016unrolled, maaten2008visualizing}.
The KL divergence for DCGAN, BourGAN ($\ell_2$) and BourGAN (classifier) are 0.116, 0.104, and 0.012, respectively.

A well-trained generator is expected to produce a relatively uniform
distribution across all 10 digits.  Our experiment suggests that both BourGAN ($\ell_2$)
and BourGAN (classifier) generate better-quality
samples in comparison to DCGAN, as they both produce lower entropy scores (\figref{mnist_stats}-right).
Yet, BourGAN (classifier) has a lower KL divergence compared to BourGAN ($\ell_2$),
suggesting that the classifier distance is a better metric in this
case to learn mode diversity. Although a pre-trained classifier may not always be available in real world
applications, here we demonstrate that some metric might be preferred over others
depending on the needs, and our method has the flexibility to use different metrics.

% -------------------------------------------------
Lastly, we show that interpretable modes can be learned when a proper distance
metric is chosen.  Figure~\ref{fig:interpretable_mode} shows the generated
images when sampling around individual vectors in latent space. The BourGAN generator
trained with $\ell_2$ distance tends to produce images that are close to each other under
$\ell_2$ measure, while the generator trained with classifier distance tends to
produce images that are in the same class, which is more interpretable.

\begin{figure}
\centering
\includegraphics[width=0.99\textwidth]{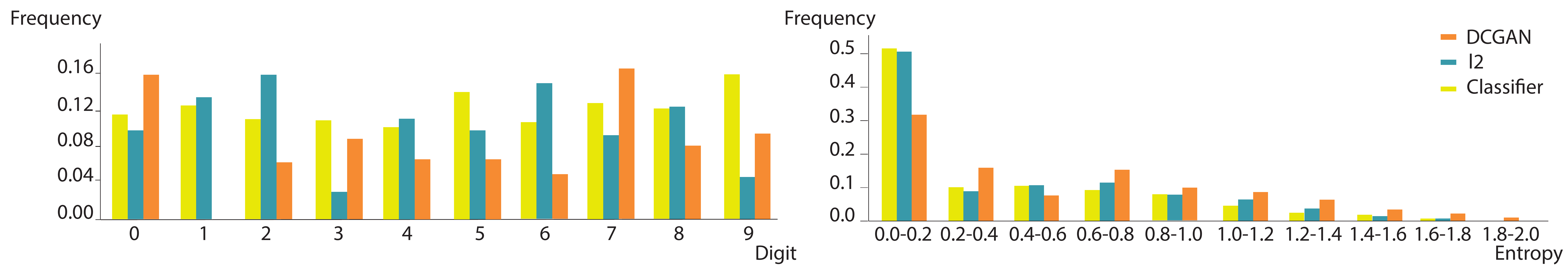}
\vspace{-1mm}
\caption{\textbf{MNIST dataset with different distance metrics}.
\textbf{(left)}
We plot the distribution of digits generated by DCGAN in orange, BourGAN ($\ell_2$) in green,
and BourGAN (classifier) in yellow. The generated images from those GANs are classified
using a pre-trained classifier.
This plot shows that the classifier distance produces samples that are most uniformly distributed
across all 10 digits. DCGAN fails to capture the mode of digital ``1'',
while BourGAN ($\ell_2$) generates fewer samples for the modes in ``3'' and ``9''.
\textbf{(right)} Entropy
distribution of generated samples using three GANs.
A lower entropy value indicates better image quality.
This plot suggests that our method with both $\ell_2$ and classifier distance metrics
produces higher-quality MNIST images than the DCGAN.}\label{fig:mnist_stats}
\vspace{-1mm}
\end{figure}

\begin{figure}
	\centering
\includegraphics[width=0.99\textwidth]{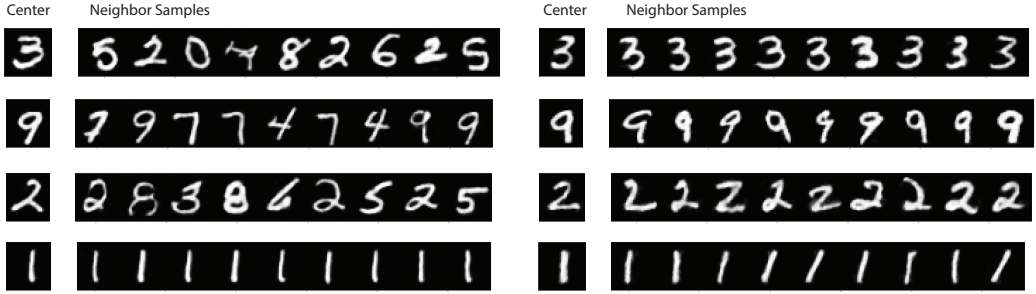}
\vspace{-1mm}
\caption{\textbf{Interpretable modes.}
Using BourGAN, we first randomly generate four samples and use their latent
vectors as four centers in latent space.  We then sample nine latent vectors in a
hypersphere of each center, and use these vectors to generate MNIST images. The
hypersphere has a radius of 0.1
\textbf{(Left)} BourGAN ($\ell_2$) generates samples that are close to others in the same hypersphere
in $\ell_2$ space. But the samples can be
visually distinct from each other, representing different digits.
Note that under $\ell_2$ distance, digit ``1'' are separated out (the fourth row on the left).
It is interesting to recall the bottom-left subfigure of \figref{pdd_tsne}, and realize that this resonates 
with that subfigure in which data items of digit ``1'' are clustered as a separated mode in $\ell_2$ metric.
\textbf{(Right)} BourGAN (classifier) is trained with the classifier distance, which 
tends to cluster together images that represent the same type of digits.
As a result, the generated samples tend to represent the same digits as their respective centers.
Thus, the modes captured by BourGAN (classifier) is more interpretable. In this case, each mode corresponds to
a different digit.
}\label{fig:interpretable_mode}
\vspace{-1mm}
\end{figure}

\paragraph{Tests on Stacked MNIST.}

Similar to the evaluation methods in Mode-regularized GANs \cite{che2016mode}, Unrolled GANs
\cite{metz2016unrolled}, VEEGAN \cite{srivastava2017veegan} and PacGAN
\cite{lin2017pacgan}, we test BourGAN with $\ell_2$ distance metric
on an augmented MNIST dataset. By encapsulating three randomly selected MNIST images into three color channels,
we construct a new dataset of 100,000 images, each of which has a dimension of 32$\times$32$\times$3.
In the end, we obtain 10$\times$10$\times$10 = 1000 distinct classes.
We refer to this dataset as the \emph{stacked MNIST} dataset. In this experiment,
we will treat each of the 1000 classes of images as an individual mode.

\begin{table}[t]
\centering
\scalebox{0.99}{
\begin{tabular}{@{}lllllllllll@{}}
\whline{1.0pt}
& \multicolumn{2}{c}{D is 1/4 size of G}                  & \multicolumn{2}{c}{D is 1/2 size of G}  & \multicolumn{2}{c}{D is same size as G}  \\ \cmidrule(l){2-3}\cmidrule(l){4-5}\cmidrule(l){6-7}
& {\begin{tabular}[c]{@{}c@{}}\# class covered \\ (max 1000)\end{tabular}}     & KL   							& {\begin{tabular}[c]{@{}c@{}}\# class covered \\ (max 1000)\end{tabular}}    & KL & {\begin{tabular}[c]{@{}c@{}}\# class covered \\ (max 1000)\end{tabular}}    & KL     \\ 
\whline{0.7pt}
DCGAN    &     	92.2				& 5.02								& 367.7				& 4.87						& 912.3			& 0.65 \\
BourGAN  &      715.2  				& 1.84								&		936.1		&		0.61				&		1000.0	& 0.08 \\
\whline{1.0pt}
\end{tabular}}
\caption{\textbf{Mode coverage} on stacked MNIST Dataset. Results are averaged over 10 trials}\label{tab:stacked_mnist}
\end{table}

As reported in~\cite{metz2016unrolled}, even regular GANs can learn all 1000
modes if the discriminator size is sufficiently large.
Thus, we evaluate our method by setting the discriminator's size to be
$\nicefrac{1}{4}\times$, $\nicefrac{1}{2}\times$, and $1\times$ of the generator's size, respectively.
%We report the number of modes covered by our method in \tabref{stacked_mnist}, together with the resulting from DCGAN.
%%Similar to the evaluation of previous GANs on this dataset~\cite{metz2016unrolled},
We measure the number of modes captured by our method as well as by DCGAN, and the
KL divergence between the generated distribution of modes and the expected true
distribution of modes (i.e., a uniform distribution over the 1000 modes).  
\tabref{stacked_mnist} summarizes our results.
% We note that similar experiments on this dataset are also conducted in previous 
% papers~\cite{metz2016unrolled,srivastava2017veegan,lin2017pacgan}, 
In Table 2 and 3 of their paper, Lin et al.~\cite{lin2017pacgan}
reported results on similar experiments, although we note that it is hard to directly compare our \tabref{stacked_mnist} 
with theirs, because their detailed network setup and the third-part classifier may differ from ours.
We summarize our network structures in \tabref{structure} and \ref{tab:structure2}.
During training, we use Adam optimization with a learning rate of $10^{-4}$,
and set $\beta_1=0.5$ and $\beta_2=0.999$ with a mini-batch size of 128.

Additionally, in \figref{qualitative_stacked_mnist}
we show a qualitative comparison between our method and DCGAN on this dataset.

\begin{table}[b]
\small
\centering
\begin{tabular}{lllllll}
\whline{1.0pt}
	 & layer 		   & output size & kernel size & stride & BN  & activation function  \\
\whline{0.7pt}
	 & input (dim 55)  &  55$\times$1$\times$1 	 &              &        &     &      \\
	 & Transposed Conv &  512$\times$4$\times$4 	 & 4 		& 1 	 & Yes & ReLU  \\
	 & Transposed Conv &  256$\times$8$\times$8 	 & 4  		& 2 	 & Yes & ReLU  \\
	 & Transposed Conv &  128$\times$16$\times$16    & 4 		& 2 	 & Yes & ReLU  \\
	 & Transposed Conv &  channel$\times$32$\times$32  & 4 		& 2 	 & No  & Tanh  \\
\whline{1.0pt}
	\end{tabular}
        \caption{\textbf{Network structure} for generator. channel=3 for Stacked
        MNIST and channel=1 for MNIST.} \label{tab:structure}
\end{table}

\begin{table}[t]
\small
	\centering
	\begin{tabular}{lllllll}
\whline{1.0pt}
	 & layer 		   & output size & kernel size & stride & BN  & activation function  \\
\whline{0.7pt}
	 & input (dim 55)  &  channel$\times$32$\times$32  &  		& 	     &    &  	   \\
	 & Conv            &  256$\times$16$\times$16 	 & 4 		& 2 	 & No & LeakyReLU(0.2)  \\
	 & Conv            &  256$\times$8$\times$8 	      & 4  		& 2 	 & Yes & LeakyReLU(0.2) \\
	 & Conv            &  128$\times$4$\times$4       & 4 		& 2 	 & Yes & LeakyReLU(0.2)  \\
	 & Conv            &  channel$\times$1$\times$1  & 4 		& 1 	     & No  & Sigmoid  \\
\whline{1.0pt}
	\end{tabular}
	\caption{\textbf{Network structure} for discriminator.} \label{tab:structure2}
\end{table}

\begin{figure}[t]
	\centering
	  \includegraphics[width=0.99\textwidth]{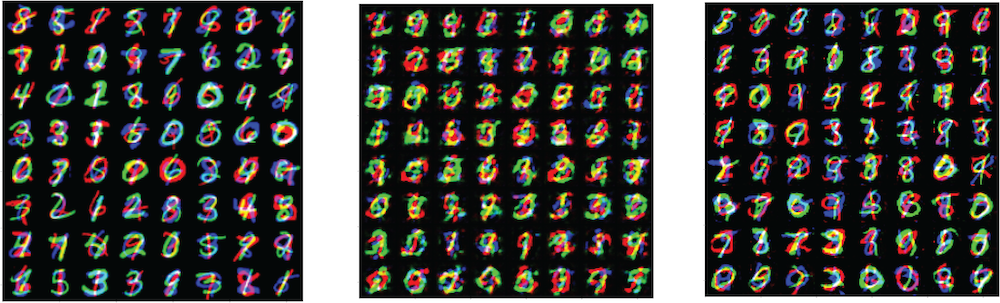}
\vspace{-1mm}
\caption{Qualitative results on stacked MNIST dataset.
\textbf{(Left)} Samples from real data distribution. \textbf{(Middle)} Samples
generated by DCGAN. \textbf{(Right)} Samples generated by BourGAN.
In all three GANs, discriminator network has a size $\nicefrac{1}{4}\times$ of the generator.
DCGAN starts to generate collapsed results,
while BourGAN still generates plausible results.}\label{fig:qualitative_stacked_mnist}
\vspace{-1mm}
\end{figure}

\subsection{More Qualitative Results} \label{sec:qual}
We also test our algorithm on other popular dataset, including
CIFAR-10~\cite{krizhevsky2009learning} and Fashion-MNIST
\cite{xiao2017fashion}.
Figure~\ref{fig:qualitative_cifar} and \ref{fig:qualitative_fashion} illustrate our results on these datasets.
\begin{figure}[t!]
\centering
\includegraphics[width=0.5\textwidth]{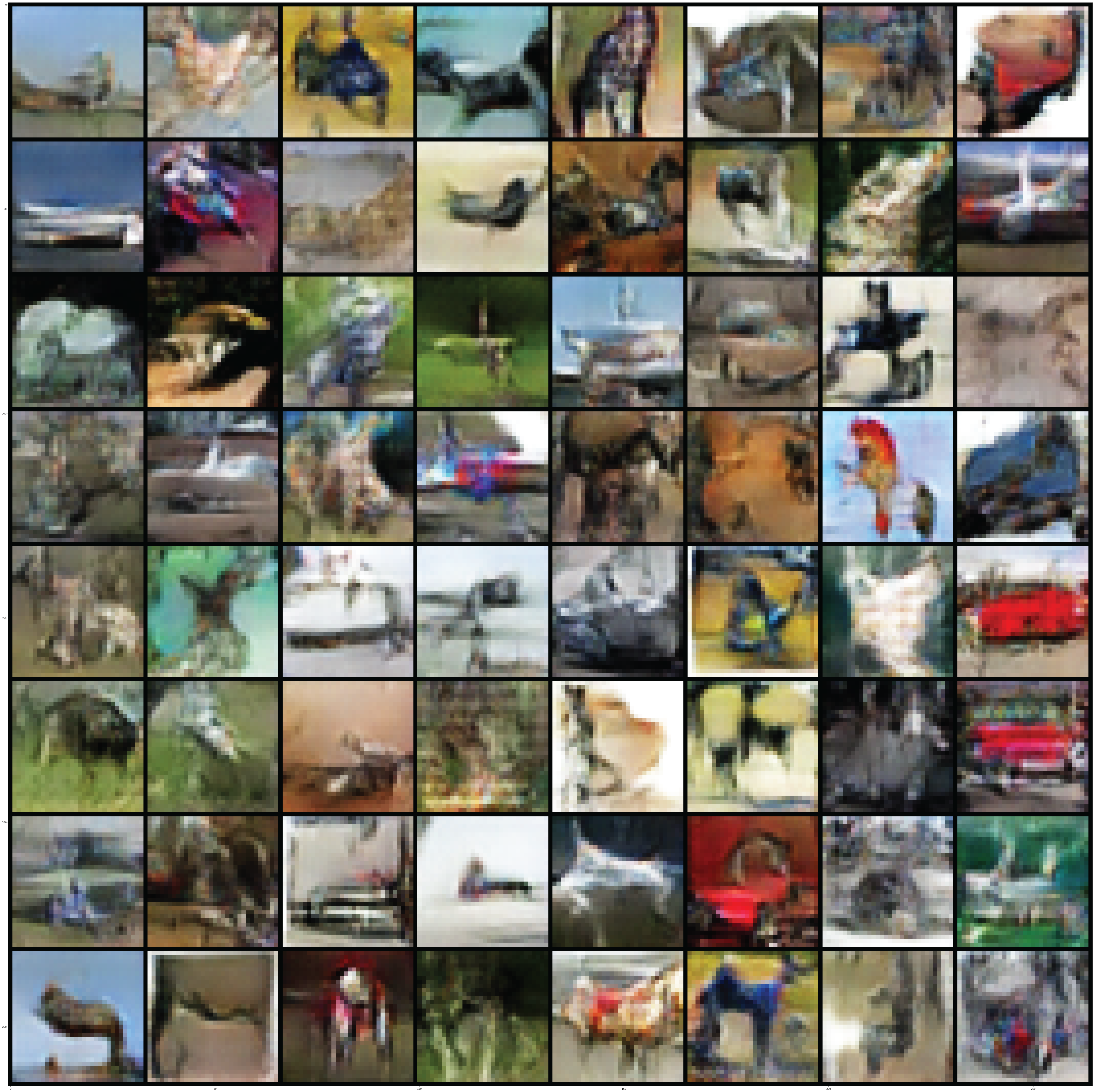}
\vspace{-1mm}
\caption{Qualitative results on CIFAR-10.}\label{fig:qualitative_cifar}
\vspace{-1mm}
\end{figure}

\begin{figure}[t!]
\centering
\includegraphics[width=0.5\textwidth]{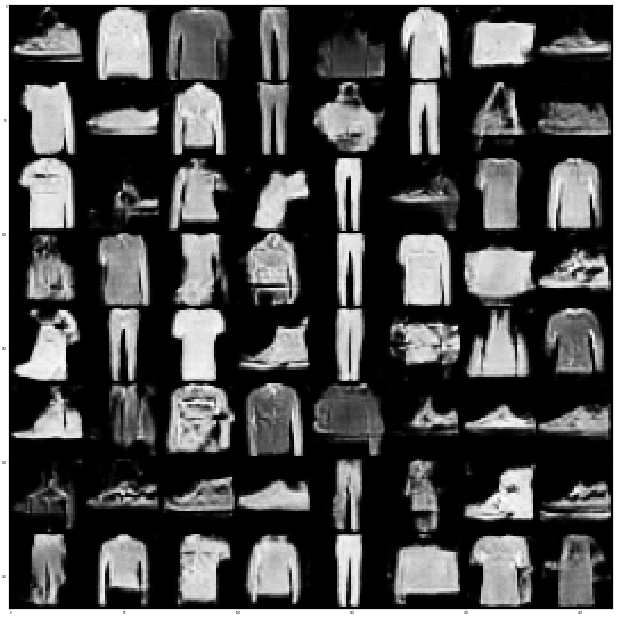}
\vspace{-1mm}
\caption{Qualitative results on Fashion-MNIST.}\label{fig:qualitative_fashion}
\vspace{-1mm}
\end{figure}

\newpage
\section{Proofs of the Theorems in Section~\ref{sec:theory}}

\subsection{Notations and Preliminaries}
Before we delve into technical details, we first review some notation
and fundamental tools in the theoretical analysis:
We use $\mathbf{1}(\mathcal{E})$ to denote an indicator variable on the event
$\mathcal{E}$, i.e., if $\mathcal{E}$ happens, then $\mathbf{1}(\mathcal{E})=1$,
otherwise, $\mathbf{1}(\mathcal{E})=0$.

The following lemma gives a concentration bound on independent random variables.
\begin{lemma}[Bernstein Inequality]\label{lem:bernstein}
Let $X_1,X_2,\cdots,X_n$ be $n$ independent random variables.
Suppose that $\forall i\in[n],|X_i-\E(X_i)|\leq M$ almost surely.
Then, $\forall t>0,$
\begin{align*}
\Pr\left(\left|\sum_{i=1}^n X_i - \sum_{i=1}^n \E(X_i)\right|>t \right)\leq 2\exp\left(-\frac{\frac{1}{2}t^2}{\sum_{i=1}^n \var(X_i)+\frac{1}{3}Mt}\right).
\end{align*}
\end{lemma}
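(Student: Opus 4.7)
The plan is to establish this via the classical Chernoff / exponential-moment method, which is the canonical route to all Bernstein-type tail bounds. First I would prove the one-sided bound $\Pr(S - \E S > t) \leq \exp(-\tfrac{1}{2}t^2/(V + Mt/3))$ where $S = \sum_i X_i$ and $V = \sum_i \var(X_i)$, then apply the same argument to $-S$ and take a union bound to pick up the factor of $2$.

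For the one-sided bound, let $Y_i = X_i - \E X_i$, so $\E Y_i = 0$ and $|Y_i| \leq M$. For any $\lambda > 0$, Markov's inequality gives $\Pr(\sum_i Y_i > t) \leq e^{-\lambda t} \prod_i \E[e^{\lambda Y_i}]$, where independence let me factor the moment generating function. The crux of the argument is a sharp bound on each factor $\E[e^{\lambda Y_i}]$. I would expand $e^{\lambda Y_i}$ as a Taylor series and use $\E Y_i = 0$ to kill the linear term, then bound higher moments via $|Y_i^k| \leq M^{k-2} Y_i^2$ for $k \geq 2$, which yields
\begin{align*}
\E[e^{\lambda Y_i}] \leq 1 + \var(Y_i) \sum_{k \geq 2} \frac{\lambda^k M^{k-2}}{k!} = 1 + \frac{\var(Y_i)}{M^2}\bigl(e^{\lambda M} - 1 - \lambda M\bigr).
\end{align*}
Using $1 + x \leq e^x$ and multiplying across $i$ collapses the product into $\exp\!\bigl(\tfrac{V}{M^2}(e^{\lambda M} - 1 - \lambda M)\bigr)$.

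The next step is the standard analytic inequality $e^u - 1 - u \leq \tfrac{u^2/2}{1 - u/3}$ valid for $0 \leq u < 3$, which I would apply with $u = \lambda M$ (assuming $\lambda M < 3$). This turns the bound on the MGF into $\exp\!\bigl(\tfrac{\lambda^2 V / 2}{1 - \lambda M / 3}\bigr)$, so that
\begin{align*}
\Pr\Bigl(\sum_i Y_i > t\Bigr) \leq \exp\!\left(-\lambda t + \frac{\lambda^2 V/2}{1 - \lambda M/3}\right).
\end{align*}
Optimizing the choice of $\lambda$ is the step that delivers the specific Bernstein form: setting $\lambda = t/(V + Mt/3)$ (which one can verify is in $(0, 3/M)$) and plugging back in yields the exponent $-\tfrac{1}{2}t^2/(V + Mt/3)$ after simplification.

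The main obstacle I anticipate is purely algebraic: getting the exact constant $1/3$ in the denominator requires the particular analytic inequality for $e^u - 1 - u$ above, and it is easy to be off by a constant if one instead uses the cruder bound $e^u - 1 - u \leq u^2 e^u / 2$. Beyond that, everything is standard — independence, Markov, and a final union bound over the two tails to obtain the factor of $2$ in the statement.
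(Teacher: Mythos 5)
Your proof is correct: the exponential-moment (Chernoff) argument, the moment bound $\E|Y_i|^k \leq M^{k-2}\var(X_i)$, the inequality $e^u-1-u \leq \frac{u^2/2}{1-u/3}$ for $0\leq u<3$ (which indeed follows from $k!/2\geq 3^{k-2}$), and the choice $\lambda = t/(V+Mt/3)$ all check out and deliver exactly the stated exponent, with the factor $2$ from the two tails. Note that the paper itself gives no proof of this lemma --- it is quoted as the classical Bernstein inequality among the preliminaries --- so your derivation is simply the standard textbook argument, correctly executed, rather than a departure from anything in the paper.
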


The next lemma states that given a complete graph with a power of $2$ number of vertices, the edges can be decomposed into perfect matchings.
\begin{lemma}\label{lem:K_m_decomposition}
Given a complete graph $G=(V,E)$ with $|V|=m$ vertices, where $m$ is a power of $2$. Then, the edge set $E$ can be decomposed into $m-1$ perfect matchings.
\end{lemma}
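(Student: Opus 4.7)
The plan is to prove this by an explicit construction, namely the classical round-robin (or ``polygon'') decomposition of the complete graph on an even number of vertices. Although the lemma only requires $m$ to be a power of $2$, the construction works for any even $m$, and the hypothesis that $m$ is a power of $2$ merely guarantees that $m$ is even.

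First I would label the vertices as $v_0, v_1, \dots, v_{m-1}$ and single out $v_0$ as a ``pivot'' vertex. The remaining $m-1$ vertices $v_1, \dots, v_{m-1}$ will be indexed by $\mathbb{Z}_{m-1}$. For each $i \in [m-1]$, I would define the matching
\[
M_i = \bigl\{\{v_0, v_i\}\bigr\} \cup \bigl\{\{v_{i+j}, v_{i-j}\} : j = 1, 2, \dots, (m-2)/2 \bigr\},
\]
where the subscript arithmetic on the non-pivot vertices is taken modulo $m-1$ (with representatives in $\{1, \dots, m-1\}$). Geometrically, one places $v_1, \dots, v_{m-1}$ at the corners of a regular $(m-1)$-gon with $v_0$ at the center, and $M_i$ consists of the ``spoke'' $\{v_0, v_i\}$ together with all chords perpendicular to this spoke.

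Next I would verify two properties. (a) Each $M_i$ is a perfect matching: the pivot $v_0$ is matched to $v_i$, and for $j = 1, \dots, (m-2)/2$ the pairs $\{v_{i+j}, v_{i-j}\}$ partition the remaining $m-2$ vertices, since the map $j \mapsto (i+j \bmod (m-1), i-j \bmod (m-1))$ is injective on this range and never outputs $i$ (because $2j \not\equiv 0 \pmod{m-1}$ as $m-1$ is odd). (b) The matchings $M_1, \dots, M_{m-1}$ partition $E$: every edge $\{v_0, v_i\}$ belongs uniquely to $M_i$, and every edge $\{v_a, v_b\}$ with $a, b \in \{1, \dots, m-1\}$ belongs to $M_i$ exactly when $a+b \equiv 2i \pmod{m-1}$. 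Since $2$ is invertible modulo the odd number $m-1$, this determines a unique $i \in [m-1]$. A counting check confirms consistency: we have $m-1$ matchings of size $m/2$, giving $(m-1) \cdot m/2 = |E(K_m)|$ edges in total.

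The only subtle step is property (b), i.e., that the invertibility of $2$ modulo $m-1$ assigns each non-pivot edge to a unique round; this is where oddness of $m-1$ (equivalently, evenness of $m$) is essential, and it is also the point where one must be careful that modular representatives are chosen consistently. Everything else is routine index bookkeeping.
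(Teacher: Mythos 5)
Your proof is correct, but it takes a genuinely different route from the paper. The paper argues by induction on the power of $2$: it splits the vertex set into two halves $A,B$ of size $m/2$, decomposes the complete bipartite graph between $A$ and $B$ into $m/2$ perfect matchings, applies the induction hypothesis inside each half to get $m/2-1$ matchings there, and pairs a matching of $A$ with a matching of $B$ to form a perfect matching of $G$, for a total of $m/2+m/2-1=m-1$. Your round-robin construction instead exhibits the $1$-factorization explicitly: a pivot vertex plus rotations of a fixed ``spoke and perpendicular chords'' pattern on a regular $(m-1)$-gon, with the partition property coming from invertibility of $2$ modulo the odd number $m-1$. Your argument is the stronger and more classical statement -- it yields the decomposition for \emph{every} even $m$, whereas the paper's induction genuinely uses the power-of-$2$ hypothesis (which in the paper is harmless, since $m$ is chosen as a power of $2$ anyway); on the other hand, the paper's proof is shorter to write because it delegates the bipartite step to the standard cyclic decomposition of $K_{m/2,m/2}$ and avoids the modular index bookkeeping that you rightly flag as the delicate point of your construction. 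Your verification of that point (uniqueness of the round $i$ via $a+b\equiv 2i \pmod{m-1}$, together with the edge count $(m-1)\cdot m/2=\binom{m}{2}$ to upgrade ``at most one matching'' to ``exactly one'') is sound, so the proposal stands as a complete alternative proof.
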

%\cmt{add a figure to illustarte the decomposition}
\begin{figure}[h!]
  \centering
  % Requires \usepackage{graphicx}
  \includegraphics[width=0.35\textwidth]{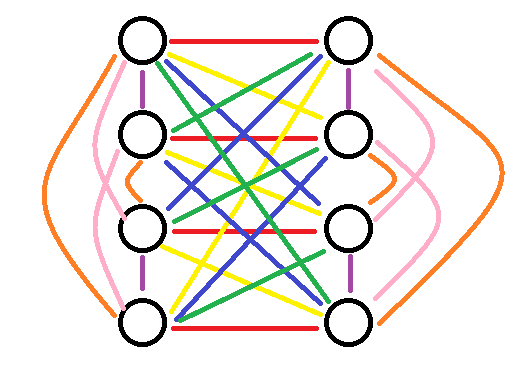} %matching_decomp.png}
  \caption{An $8$-vertices complete graph can be decomposed into $7$ perfect matchings}%\label{}
\end{figure}

\begin{proof}
Our proof is by induction.
The base case has $m=1$.
For the base case, the claim is obviously true.
Now suppose that the claim holds for $m/2.$
Consider a complete graph $G=(V,E)$ with $m$ vertices, where $m$ is a power of $2$.
We can partition vertices set $V$ into two vertices sets $A,B$ such that $|A|=|B|=m/2.$
The edges between $A$ and $B$ together with vertices $A\cup B=V$ compose a complete bipartite graph.
Thus, the edges between $A$ and $B$ can be decomposed into $m/2$ perfect matchings.
The subgraph of $G$ induced by $A$ is a complete graph with $m/2$ vertices.
By our induction hypothesis, the edge set of the subgraph of $G$ induced by $A$ can be decomposed into $m/2-1$ perfect matchings in that induced subgraph.
Similarly, the edge set of the subgraph of $G$ induced by $B$ can be also decomposed into $m/2-1$ perfect matchings in that induced subgraph.
Notice that any perfect matching in the subgraph induced by $A$ union any perfect matching in the subgraph induced by $B$ is a perfect matching of $G$.
Thus, $E$ can be decomposed into $m/2+m/2-1=m-1$ perfect matchings.
\end{proof}

%---------
\subsection{Proof of Theorem~\ref{thm:range_of_lambda}}\label{sec:thm3_proof}
In the following, we formally restate the theorem.
\begin{theorem}\label{thm:estimate_range_of_lambda}
Consider a metric space $(\M,\dist).$
Let $\mathcal{X}$ be a distribution over $\M$ which satisfies $\Pr_{a,b\sim \mathcal{X}}(a\not=b)\geq 1/2.$
Let $x_1,x_2,\cdots,x_n$ be $n$ i.i.d. samples drawn from $\mathcal{X}$.
Let $\lambda=\min_{i\in[n-1]:x_i\ne x_{i+1}}\dist(x_{i},x_{i+1}),\Lambda=\max_{i\in[n-1]:x_i\ne x_{i+1}}\dist(x_{i},x_{i+1}).$
For any given parameters $\delta\in(0,1),\gamma\in(0,1),$ if $n\geq C/(\delta\gamma)$ for some sufficiently large constant $C>0$, then with probability at least $1-\delta,$ $\Pr_{a,b\sim \mathcal{X}}(\dist(a,b)\in[\lambda,\Lambda]\mid \lambda,\Lambda)\geq \Pr_{a,b\sim\mathcal{X}}(a\not=b)-\gamma.$
\end{theorem}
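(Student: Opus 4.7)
My plan is to decompose the bad event and handle the two tails separately. For any realization of the samples, write
\[
\Pr_{a,b\sim\mathcal{X}}\bigl(\dist(a,b)\notin[\lambda,\Lambda]\bigm|\lambda,\Lambda\bigr) \;=\; \Pr(a=b) \;+\; \Pr(\dist(a,b)>\Lambda,\,a\neq b\mid \Lambda) \;+\; \Pr(0<\dist(a,b)<\lambda\mid \lambda).
\]
It therefore suffices to show that each of the last two terms is at most $\gamma/2$ with probability at least $1-\delta/2$ over $x_1,\ldots,x_n$; a union bound together with $\Pr(a\neq b)=1-\Pr(a=b)$ then delivers the claimed inequality.

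To pin down $\Lambda$ and $\lambda$ against the true distribution, introduce the deterministic thresholds
$t^\star := \sup\{t\ge 0 : \Pr_{a,b}(\dist(a,b)>t) > \gamma/2\}$ and $r^\star := \sup\{r\ge 0 : \Pr_{a,b}(0<\dist(a,b)<r)\le \gamma/2\}$. Continuity of probability from above (taking $t\uparrow t^\star$ and $r\downarrow r^\star$) yields $\Pr(\dist(a,b)\ge t^\star)\ge \gamma/2$ and $\Pr(0<\dist(a,b)\le r^\star)\ge \gamma/2$. Hence whenever $\Lambda\ge t^\star$ we get $\Pr(\dist(a,b)>\Lambda,\,a\neq b)\le \Pr(\dist>t^\star)\le \gamma/2$, and whenever $\lambda\le r^\star$ we get $\Pr(0<\dist(a,b)<\lambda)\le \gamma/2$. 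So the task reduces to bounding $\Pr(\Lambda<t^\star)$ and $\Pr(\lambda>r^\star)$.

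The key observation is that the alternating pairs $(x_1,x_2),(x_3,x_4),\ldots,(x_{2\lfloor n/2\rfloor-1},x_{2\lfloor n/2\rfloor})$ are $\lfloor n/2\rfloor$ independent draws from $\mathcal{X}\times\mathcal{X}$. Since $\Lambda$ is a maximum over a superset of the distinct alternating pairs, the event $\Lambda<t^\star$ forces every alternating pair to satisfy $\dist(x_{2i-1},x_{2i})<t^\star$ (either it is a distinct consecutive pair bounded by $\Lambda$, or the two coincide and the distance is $0$). By independence,
\[
\Pr(\Lambda<t^\star) \;\le\; \bigl(1-\Pr(\dist\ge t^\star)\bigr)^{\lfloor n/2\rfloor} \;\le\; (1-\gamma/2)^{\lfloor n/2\rfloor} \;\le\; e^{-\gamma n/8}.
\]
The same argument, with the interval $(0,r^\star]$ in place of $[t^\star,\infty)$, shows $\Pr(\lambda>r^\star)\le e^{-\gamma n/8}$. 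Demanding both to be at most $\delta/2$ gives $n\ge (8/\gamma)\ln(2/\delta)$, which is implied by $n\ge C/(\delta\gamma)$ for a sufficiently large absolute constant $C$, because $1/\delta\ge \ln(2/\delta)$ on $(0,1)$.

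The main obstacle is purely technical bookkeeping rather than conceptual: the thresholds $t^\star,r^\star$ must be defined with the correct combination of strict and weak inequalities so that the limiting bounds $\Pr(\dist\ge t^\star)\ge \gamma/2$ and $\Pr(0<\dist\le r^\star)\ge \gamma/2$ survive atoms of the pairwise distance distribution, and one must confirm that restricting the minimum in $\lambda$ to indices $i$ with $x_i\neq x_{i+1}$ does not spoil the alternating-pairs trick (it does not, since alternating pairs with $x_{2i-1}=x_{2i}$ automatically have $\dist=0\notin(0,r^\star]$). Once these details are in place, the remainder is a direct i.i.d.\ concentration argument on $\lfloor n/2\rfloor$ independent pairs.
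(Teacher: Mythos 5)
Your proof is correct, but it takes a genuinely different route from the paper's. The paper also passes to the alternating pairs $(x_1,x_2),(x_3,x_4),\dots$ to get independence, but from there it runs an exchangeability argument: conditioned on the number $t$ of distinct alternating pairs, the fresh pair $(P,Q)$ (given $P\neq Q$) is symmetric with those $t$ pairs, so the probability that $\dist(P,Q)$ falls outside the empirical range $[\lambda',\Lambda']$ is at most $2/(t+1)$; a Chernoff bound on $t$ gives an unconditional miss probability of $O(1/n)$, and a Markov-type argument over the randomness of $(\lambda,\Lambda)$ then converts this into the ``with probability $1-\delta$'' statement, which is exactly where the $n\geq C/(\delta\gamma)$ requirement comes from. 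You instead anchor the empirical range against fixed population $\gamma/2$-quantiles $t^\star,r^\star$ of the pairwise-distance law at the two tails, show by independence of the alternating pairs that $\Pr(\Lambda<t^\star)$ and $\Pr(\lambda>r^\star)$ are exponentially small, and finish deterministically. Your route yields the quantitatively stronger requirement $n=O(\gamma^{-1}\log(1/\delta))$ (exponential rather than linear dependence on $1/\delta$), which of course implies the stated theorem; the paper's route avoids introducing quantiles altogether and gives the arguably cleaner intermediate fact that the expected miss probability is $O(1/n)$. The bookkeeping points you flag are handled correctly: the quantile inequalities survive atoms with your choice of strict/weak inequalities (and $t^\star>0$, $r^\star<\infty$ follow from $\Pr_{a,b\sim\mathcal{X}}(a\neq b)\geq 1/2>\gamma/2$), coincident alternating pairs contribute distance $0$ and so never spoil either containment, and the degenerate event that no distinct consecutive pair exists is itself contained in your bad events, so it is absorbed by the same exponential bound.
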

\begin{proof}
Without of loss of generality, we assume $n$ is an even number. 
Let $\lambda'=\min_{i\in[n/2]:x_{2i-1}\not=x_{2i}}\dist(x_{2i-1},x_{2i}), \Lambda'=\max_{i\in[n/2]:x_{2i-1}\not=x_{2i}}\dist(x_{2i-1},x_{2i})$,
and $P$, $Q$ be two i.i.d. random variables with distribution $\mathcal{X}$.
Then $(x_1,x_2),(x_3,x_4),\cdots,(x_{n-1},x_n)$ are $n/2$ i.i.d. samples drawn from the same distribution as $(P,Q).$
Let $t=|\{j\in[n/2]\mid x_{2j-1}\not=x_{2j}\}|.$
%Let $\lambda'=\min_{j\in[m/2]:x_{2j-1}\not=x_{2j}}\dist(x_{2j-1},x_{2j}).$
%Let $\Lambda'=\max_{j\in[m/2]}\dist(x_{2j-1},x_{2j}).$
Suppose $p$ is the probability, $p=\Pr(P\not=Q)$, then we have the following relationship.
\begin{align}
&\Pr_{P,Q,x_1,\cdots,x_m\sim \mathcal{D}}\left(\dist(P,Q)<\lambda'\vee\dist(P,Q)>\Lambda'\mid P\not=Q\right)\notag\\
=~&\Pr_{P,Q,x_1,\cdots,x_m\sim \mathcal{D}}\left(\dist(P,Q)<\lambda'\vee\dist(P,Q)>\Lambda'\mid P\not=Q,t\geq pn/2\right)\cdot \Pr(t\geq pn/2\mid P\not=Q)\\
&+\Pr_{P,Q,x_1,\cdots,x_m\sim \mathcal{D}}\left(\dist(P,Q)<\lambda'\vee\dist(P,Q)>\Lambda'\mid P\not=Q,t\leq pn/2\right)\cdot \Pr(t\leq pn/2\mid P\not=Q)\\
%\leq~&\Pr\left(\dist(P,Q)<\lambda'\vee\dist(P,Q)>\Lambda'\mid P\not=Q\right)\notag\\
%=~&\Pr\left(\dist(P,Q)<\lambda'\vee\dist(P,Q)>\Lambda'\vee P=Q\right)\\
%=~&1-p+p\cdot\Pr(\dist(P,Q)<\lambda'\vee\dist(P,Q)>\Lambda'\mid P\not=Q)\\
\leq~&\Pr(\dist(P,Q)<\lambda'\vee\dist(P,Q)>\Lambda'\mid P\not=Q,t\geq pn/2)+\Pr(t< pn/2)\notag\\
\leq~&\frac{2}{1+pn/2}\notag+\Pr(t< pn/2)\\
\leq~&\frac{2}{1+n/4}+2^{-\Theta(n)}\notag\\
\leq~&\frac{4}{1+n/4}\leq 16/n\label{eq:number_samples}
\end{align}
where the first inequality follows by that probability is always upper bounded by $1$, the second inequality follows by symmetry of $(P,Q)$ and $(x_{2j-1},x_{2j}),$ the third inequality follows by $p\geq 1/2$ and the Chernoff bound, the forth inequality follows by that $n$ is sufficiently large.

%Thus, by symmetry, $\Pr(\dist(P,Q)<\min_{j\in[m/2]}\dist(x_{2j-1},x_{2j})\text{~or~}\dist(P,Q)>\max_{j\in[m/2]}\dist(x_{2j-1},x_{2j}))\leq \frac{2}{m/2+1}.$
%Thus, $\Pr(\dist(P,Q)<\lambda\text{~or~}\dist(P,Q)>\Lambda)\leq \frac{2}{m/2+1}.$

Notice that if with probability greater than $\delta,$ $\Pr(\dist(P,Q)<\lambda'\text{~or~}\dist(P,Q)>\Lambda'\mid \lambda',\Lambda')>1-p+\gamma,$ then we have with probability greater than $\delta,$
\begin{align*}
1-p+\gamma<~&\Pr(\dist(P,Q)<\lambda'\vee\dist(P,Q)>\Lambda'\mid \lambda',\Lambda')\\
=~&\Pr(\dist(P,Q)<\lambda'\vee\dist(P,Q)>\Lambda'\mid \lambda',\Lambda',P\not=Q)\cdot \Pr(P\not=Q)+\Pr(P=Q)\\
=~&\Pr(\dist(P,Q)<\lambda'\vee\dist(P,Q)>\Lambda'\mid \lambda',\Lambda',P\not=Q)\cdot p+1-p
\end{align*}
which implies that with probability greater than $\delta,$ $\Pr(\dist(P,Q)<\lambda'\text{~or~}\dist(P,Q)>\Lambda'\mid \lambda',\Lambda',P\not=Q)>\gamma/p\geq \gamma.$
Then we have $\Pr(\dist(P,Q)<\lambda'\text{~or~}\dist(P,Q)>\Lambda'\mid P\not=Q)> \delta\gamma\geq 16/n$ which contradicts to
Equation~\eqref{eq:number_samples}.

Notice that $\lambda\leq \lambda'$ and $\Lambda\geq \Lambda',$ we complete the proof.
\end{proof}

%---------
\subsection{Proof of Theorem~\ref{thm:main_small_sample_enough}}\label{sec:thm4_proof}
We restate the theorem in the following formal way.
\begin{theorem}\label{thm:small_set_good_approx}
Consider a metric space $(\M,\dist).$
Let $\mathcal{X}$ be a distribution over $\M.$
Let $\lambda,\Lambda$ be two parameters such that $0<2\lambda\leq \Lambda.$
Let $\mathcal{P}$ be the $(\lambda,\Lambda)-$LPDD of $\mathcal{X}.$
%Let $P,Q$ be two i.i.d. random variables both with distribution $\mathcal{D}.$
%Let $Z$ be the random variable $\log(\dist(P,Q))$ conditioned on $\dist(P,Q)\in[\lambda,\Lambda].$
Let $y_1,y_2,\cdots,y_m$ be $m$ i.i.d. samples drawn from distribution $\mathcal{X},$ where $m$ is a power of $2$.
%Let $P',Q'$ be two i.i.d. random variables both with uniform distribution on the multiset $\{x_1,x_2,\cdots,x_m\}.$
Let $\mathcal{P}'$ be the $(\lambda,\Lambda)-$LPDD of the uniform distribution on $Y$.%random variable $\log(\dist(P',Q'))$ conditioned on $\l\dist(P',Q')\in [\lambda,\Lambda].$
Let $\gamma=\Pr_{a,b\sim\mathcal{X}}(\dist(a,b)\in [\lambda,\Lambda]).$
Given $\delta\in(0,1),\varepsilon\in(0,\log(\Lambda/\lambda)),$ if $m\geq C\cdot\frac{\log^4(\Lambda/\lambda)}{\varepsilon^4\gamma^4}\cdot \log\left(\frac{\log(\Lambda/\lambda)}{\min(\varepsilon,1)\gamma\delta}\right)$ for some sufficiently large constant $C>0,$ then with probability at least $1-\delta,$ we have $W(\mathcal{P},\mathcal{P}')\leq \varepsilon.$
\end{theorem}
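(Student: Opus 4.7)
The plan is to compare the CDFs of $\mathcal{P}$ and $\mathcal{P}'$ using the representation $W(\mathcal{P},\mathcal{P}')=\int_{\log\lambda}^{\log\Lambda}|F_{\mathcal{P}}(t)-F_{\mathcal{P}'}(t)|\,dt$, where both distributions are supported on an interval of length $L=\log(\Lambda/\lambda)$. It therefore suffices to prove that with probability $\geq 1-\delta$, uniformly over $t\in[\log\lambda,\log\Lambda]$, the CDFs differ by $O(\varepsilon/L)$. For a fixed threshold $t$, each CDF value is a conditional probability $F(t)=p(t)/q$, where $p(t)=\Pr_{a,b\sim\mathcal{X}}\bigl(\log\dist(a,b)\leq t\wedge \dist(a,b)\in[\lambda,\Lambda]\bigr)$ and $q=\gamma$. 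The empirical CDF $F_{\mathcal{P}'}(t)$ is the same ratio with the pair-average over $\{y_i\}_{i=1}^m$ substituted, so the task reduces to estimating the two $U$-statistics $\hat p(t)=\binom{m}{2}^{-1}\sum_{i<j}\mathbf{1}(\log\dist(y_i,y_j)\leq t,\dist(y_i,y_j)\in[\lambda,\Lambda])$ and $\hat q$ to additive accuracy $O(\varepsilon\gamma/L)$; standard ratio-error propagation then yields $|F_{\mathcal{P}}(t)-F_{\mathcal{P}'}(t)|\leq O(\varepsilon/L)$.

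The principal obstacle is the dependence among the $\binom{m}{2}$ pair-indicators, which rules out a direct application of Bernstein to the $U$-statistic. Here I would invoke Lemma~\ref{lem:K_m_decomposition}: since $m$ is a power of $2$, the edges of $K_m$ split into $m-1$ perfect matchings $\mathcal{M}_1,\ldots,\mathcal{M}_{m-1}$. Within each $\mathcal{M}_k$ the $m/2$ pairs use disjoint samples, so by independence of $y_1,\ldots,y_m$ the corresponding $m/2$ indicator variables are i.i.d.\ copies of a Bernoulli with mean $p(t)$ (respectively $q$). Apply Bernstein (Lemma~\ref{lem:bernstein}) inside each matching to obtain $|\hat p_k(t)-p(t)|\leq O(\sqrt{\log(1/\delta')/m})$ with failure probability $\delta'$, and likewise for $\hat q_k$. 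Then use the identity $\hat p(t)=\frac{2}{m(m-1)}\sum_{i<j}(\cdot)=\frac{1}{m-1}\sum_{k=1}^{m-1}\hat p_k(t)$, i.e.\ the $U$-statistic is the arithmetic mean of the per-matching estimators, so the same accuracy holds for $\hat p(t)$ and $\hat q$.

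Next, discretize the threshold range into an $O(L/\varepsilon)$-point grid $\{t_1,\ldots,t_N\}$ so that any jump of $F_{\mathcal{P}}$ between grid points contributes at most $O(\varepsilon/L)$ to the sup-norm difference, and apply a union bound over (i) the $N=O(L/\varepsilon)$ threshold values, (ii) the $m-1$ matchings, and (iii) the numerator and denominator events. Setting $\delta'\approx \delta/(mN)$ absorbs this union bound into the $\log(L/(\min(\varepsilon,1)\gamma\delta))$ factor of the theorem. With these uniform deviation bounds in hand, ratio-error propagation gives $|\hat q - q|\leq O(\varepsilon\gamma/L)$ and $|\hat p(t_i)-p(t_i)|\leq O(\varepsilon\gamma/L)$, which (using $\hat q\geq \gamma/2$, deducible from the denominator bound) yields $|F_{\mathcal{P}}(t_i)-F_{\mathcal{P}'}(t_i)|\leq O(\varepsilon/L)$ on the grid, and extends to all $t$ by monotonicity and the grid spacing. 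Integrating against $dt$ over $[\log\lambda,\log\Lambda]$ delivers $W(\mathcal{P},\mathcal{P}')\leq\varepsilon$.

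The main obstacle is the pair-dependence, handled cleanly by the matching decomposition at the cost of an $m$-factor inside the log. A secondary subtlety is that $\mathcal{P}'$ is a \emph{conditional} distribution given $\dist(a,b)\in[\lambda,\Lambda]$, so the two errors (numerator and denominator) must both be driven below $\varepsilon\gamma/L$; propagating through the ratio is what introduces powers of $\gamma$ and $L/\varepsilon$ in the sample complexity. The stated $(L/(\varepsilon\gamma))^{4}$ appears to be a comfortable (rather than tight) buffer accommodating the ratio propagation, the grid-spacing approximation, and the union bound over matchings.
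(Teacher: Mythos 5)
Your proposal follows essentially the same route as the paper's proof: the paper likewise discretizes the log-distance range $[\log\lambda,\log\Lambda]$ into $O(\log(\Lambda/\lambda)/\varepsilon)$ cells, handles the dependence among the $\binom{m}{2}$ pair-indicators by decomposing the pairs into perfect matchings via Lemma~\ref{lem:K_m_decomposition} (exactly as you do), applies Bernstein (Lemma~\ref{lem:bernstein}) within each matching, union-bounds over cells, matchings, and the numerator/denominator events, and then propagates the error through the ratio defining the conditional (LPDD) probabilities, with the $\gamma$ powers entering exactly as you describe. The only real difference is presentational: you express $W_1$ as $\int|F_{\mathcal{P}}-F_{\mathcal{P}'}|\,dt$ and control CDFs at grid thresholds, whereas the paper bounds $W_1$ by a per-bucket coupling (within-bucket transport costs at most the bucket width $\log(1+\varepsilon_0)$, cross-bucket excess is charged the full range $\log(\Lambda/\lambda)$ times the bucket-probability discrepancy); these are equivalent in substance. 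One small imprecision in your write-up: a \emph{uniform} sup-norm bound $|F_{\mathcal{P}}(t)-F_{\mathcal{P}'}(t)|\leq O(\varepsilon/L)$ between grid points is not attainable if $\mathcal{P}$ has atoms located strictly between grid points, so you should instead bound $\int_{t_i}^{t_{i+1}}|F_{\mathcal{P}}-F_{\mathcal{P}'}|\,dt$ by the grid-point error plus the increments $F_{\mathcal{P}}(t_{i+1})-F_{\mathcal{P}}(t_i)$ and $F_{\mathcal{P}'}(t_{i+1})-F_{\mathcal{P}'}(t_i)$, which telescope to an extra $O(h)=O(\varepsilon)$ in the integral; the paper's bucket formulation sidesteps this automatically, and with that one-line fix your argument goes through.
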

\begin{proof}
Suppose $m\geq C\cdot\frac{\log^4(\Lambda/\lambda)}{\varepsilon^4\gamma^4}\cdot \log\left(\frac{\log(\Lambda/\lambda)}{\min(\varepsilon,1)\gamma\delta}\right)$ for some sufficiently large constant $C>0.$
Let $\mathcal{U}$ be a uniform distribution over $m$ samples $\{y_1,y_2,\cdots,y_m\}.$
Let $\varepsilon_0=\varepsilon/2,i_0=\lfloor\log_{1+\varepsilon_0}\lambda\rfloor,i_1=\lceil\log_{1+\varepsilon_0}\Lambda\rceil,$ and $\alpha=(1+\varepsilon_0).$
Let $I$ be the set $\{i_0,i_0+1,i_0+2,\cdots,i_1-1,i_1\}.$
Then we have $|I|\leq \log(\Lambda/\lambda)/\varepsilon_0.$
Since $\mathcal{P},\mathcal{P}'$ are $(\lambda,\Lambda)-$LPDD of $\mathcal{X}$ and uniform distribution on $Y$ respectively, we have %always in the range $[\log\lambda,\log\Lambda ]$ we have
\begin{align*}
&W(\mathcal{P},\mathcal{P}')\\
\leq~& \sum_{i=i_0}^{i_1} \min\left(~\Pr_{p\sim \mathcal{P}}(~p\in[i,i+1)\cdot \log\alpha~),~\Pr_{p'\sim\mathcal{P}'}(~p'\in[i,i+1)\cdot \log\alpha~)~\right)\cdot\log\alpha\\
&+\sum_{i=i_0}^{i_1} \left|\Pr_{p\sim\mathcal{P}}(~p\in[i,i+1)\cdot \log\alpha~)-\Pr_{p'\sim \mathcal{P}'}(~p'\in[i,i+1)\cdot \log\alpha~)\right|\cdot \log(\Lambda/\lambda)\\
\leq~& \varepsilon_0+\sum_{i=i_0}^{i_1} \left|\Pr_{p\sim \mathcal{P}}(~p\in[i,i+1)\cdot \log\alpha~)-\Pr_{p'\sim\mathcal{P'}}(~p'\in[i,i+1)\cdot \log\alpha~)\right|\cdot \log(\Lambda/\lambda).
\end{align*}
Thus, to prove $W(\mathcal{P},\mathcal{P}')\leq\varepsilon= 2\varepsilon_0,$ it suffices to show that
\begin{equation}
  \begin{aligned}
\forall i\in I, \left|\Pr_{p\in\mathcal{P}}(p\in [i,i+1)\cdot
    \log\alpha)-\Pr_{p'\sim\mathcal{P'}}(p'\in [i,i+1)\cdot
        \log\alpha)\right| &\leq \frac{\varepsilon_0}{|I|\cdot\log\left(\Lambda/\lambda\right)} \\
        & \leq \frac{\varepsilon_0^2}{2\log^2(\Lambda/\lambda)}.\label{eq:final}
  \end{aligned}
\end{equation}
For an $i\in I,$ consider $\Pr_{p\in\mathcal{P}}(p\in [i,i+1)\cdot \log\alpha),$ we have
\begin{align*}
\Pr_{p\in\mathcal{P}}(p\in [i,i+1)\cdot \log\alpha)=\frac{\underset{a,b\sim \mathcal{X}}{\Pr}(\dist(a,b)\in[\alpha^i,\alpha^{i+1}))}{\underset{a,b\sim\mathcal{X}}\Pr(\dist(a,b)\in[\lambda,\Lambda])}.
\end{align*}
Consider $\Pr_{p'\sim \mathcal{P}'}(p'\in [i,i+1)\cdot \log\alpha),$ we have
\begin{align}
&\Pr_{p'\sim \mathcal{P}'}(p'\in [i,i+1)\cdot \log\alpha)\notag\\
=&\underset{a',b'\sim \mathcal{U}}{\Pr}(\dist(a',b')\in[\alpha^i,\alpha^{i+1})\mid \dist(a',b')\in[\lambda,\Lambda])\notag\\
=&\frac{1/(m(m-1))\cdot\sum_{j\not=k}\mathbf{1}(\dist(y_j,y_k)\in [\alpha^i,\alpha^{i+1}))}{1/(m(m-1))\cdot \sum_{j\not =k}\mathbf{1}(\dist(y_j,y_k)\in[\lambda,\Lambda])},\label{eq:new_dis}
\end{align}
where $\mathbf{1}(\cdot)$ is an indicator function. %\cmt{what is $\mathbf{1}$}
In the following parts, we will focus on giving upper bounds on the difference
\begin{align}
\left|\frac{\sum_{j\not=k}\mathbf{1}(\dist(y_j,y_k)\in [\alpha^i,\alpha^{i+1}))}{m(m-1)}-\underset{a,b\sim \mathcal{X}}{\Pr}\left(\dist(a,b)\in[\alpha^i,\alpha^{i+1})\right)\right|\label{eq:bound_1}
\end{align}
and the difference
\begin{align}
\left|\frac{\sum_{j\not =k}\mathbf{1}(\dist(y_j,y_k)\in[\lambda,\Lambda])}{m(m-1)}-\underset{a,b\sim\mathcal{X}}\Pr(\dist(a,b)\in[\lambda,\Lambda])\right|.\label{eq:bound_2}
\end{align}

Now we look at a fixed $i\in I.$
Let $S$ be the set of all possible pairs $(y_j,y_k),$ i.e. $S=\{(y_j,y_k)\mid j,k\in[m],j\not=k\}.$
According to Lemma~\ref{lem:K_m_decomposition}, $S$ can be decomposed into $2(m-1)$ sets $S_1,S_2,\cdots,S_{2(m-1)}$ each with size $m/2,$ i.e. $S=\bigcup_{l=1}^{2(m-1)}S_l,\forall l\in[2(m-1)],|S_l|=m/2,$ and furthermore, $\forall l\in[2(m-1)],j\in[m],$ $y_j$ only appears in exactly one pair in set $S_l.$
It means that $\forall l\in[2(m-1)],$ $S_l$ contains $m/2$ i.i.d. random samples drawn from $\mathcal{X}\times \mathcal{X},$ where $\mathcal{X}\times\mathcal{X}$ is the joint distribution of two i.i.d. random samples $a,b$ each with marginal distribution $\mathcal{X}.$
For $l\in[2(m-1)],$ by applying Bernstein inequality (see Lemma~\ref{lem:bernstein}), we have:
\begin{align*}
&\Pr\left(\left| \frac{\sum_{(x,y)\in S_l}\mathbf{1}(\dist(x,y)\in [\alpha^i,\alpha^{i+1}))}{m/2}-\underset{a,b\sim \mathcal{X}}{\Pr}\left(\dist(a,b)\in[\alpha^i,\alpha^{i+1})\right)\right| >\frac{\gamma\varepsilon_0^2}{8\log^2(\Lambda/\lambda)}\right)\\
=~&\Pr\left(\left| \sum_{(x,y)\in S_l}\mathbf{1}(\dist(x,y)\in [\alpha^i,\alpha^{i+1}))-\sum_{(x,y)\in S_l}\underset{a,b\sim \mathcal{X}}{\Pr}\left(\dist(a,b)\in[\alpha^i,\alpha^{i+1})\right)\right| >\frac{m\cdot \gamma\varepsilon_0^2}{4\log^2(\Lambda/\lambda)}\right)\\
\leq~&2\exp\left(-\frac{\frac{1}{32} \cdot m^2\cdot \gamma^2\varepsilon_0^4/\log^4(\Lambda/\lambda)}{m/2+m\cdot \gamma\varepsilon_0/\log^2(\Lambda/\lambda)\cdot 1/48}\right)\\
\leq~&2\exp\left(-\frac{\frac{1}{32} \cdot m^2\cdot \gamma^2\varepsilon_0^4/\log^4(\Lambda/\lambda)}{m/2+m/2}\right)\\
=~&2\exp\left(-\frac{1}{32} \cdot m\cdot \gamma^2\varepsilon_0^4/\log^4(\Lambda/\lambda)\right)\\
\leq~& \frac{\delta}{2}\cdot \frac{1}{2(m-1)|I|},
\end{align*}
where the first inequality follows by plugging $|S_l|=m/2$ i.i.d. random
variables $\mathbf{1}(\dist(x,y)\in [\alpha^i,\alpha^{i+1}))$ for all $(x,y)\in
S_l,$ $t=(m\cdot \gamma\varepsilon_0^2)/(4\log^2(\Lambda/\lambda))$ and $M=1$
into Lemma~\ref{lem:bernstein}, the second inequality follows by
$\gamma\varepsilon_0^2/\log^2(\Lambda/\lambda)\leq 1$, where recall $\gamma=\Pr_{a,b\sim\mathcal{X}}(\dist(a,b)\in [\lambda,\Lambda]).$
and the last inequality
follows by the choice of $m$ and $(m-1)\leq m,|I|\leq 2\log(\Lambda/\lambda)/\varepsilon_0.$
By taking union bound over all the sets $S_1,S_2,\cdots,S_{2(m-1)},$ with probability at least $1-\delta/2\cdot 1/|I|,$ we have $\forall l\in[2(m-1)],$
\begin{align*}
\left| \frac{\sum_{(x,y)\in S_l}\mathbf{1}(\dist(x,y)\in [\alpha^i,\alpha^{i+1}))}{m/2}-\underset{a,b\sim \mathcal{X}}{\Pr}\left(\dist(a,b)\in[\alpha^i,\alpha^{i+1})\right)\right| \leq\frac{\gamma\varepsilon_0^2}{8\log^2(\Lambda/\lambda)}.
\end{align*}
In this case, we have:
\begin{align*}
\small
\left| \sum_{l=1}^{2(m-1)}\sum_{(x,y)\in S_l}\frac{\mathbf{1}(\dist(x,y)\in [\alpha^i,\alpha^{i+1}))}{m/2}-2(m-1)\underset{a,b\sim \mathcal{X}}{\Pr}\left(\dist(a,b)\in[\alpha^i,\alpha^{i+1})\right)\right| \leq\frac{2(m-1)\gamma\varepsilon_0^2}{8\log^2(\Lambda/\lambda)}.
\end{align*}
Since $S=\bigcup_{l=1}^{2(m-1)}S_l=\{(y_j,y_k)\mid j,k\in[m],j\not=k\},$ we have
\begin{align*}
\left|\frac{\sum_{j\not=k}\mathbf{1}(\dist(y_j,y_k)\in [\alpha^i,\alpha^{i+1}))}{m(m-1)}-\underset{a,b\sim \mathcal{X}}{\Pr}\left(\dist(a,b)\in[\alpha^i,\alpha^{i+1})\right)\right|\leq \frac{\gamma\varepsilon_0^2}{8\log^2(\Lambda/\lambda)}.
\end{align*}
By taking union bound over all $i\in I,$ then with probability at least $1-\delta/2,$ $\forall i\in I,$ we have
\begin{align}
\left|\frac{\sum_{j\not=k}\mathbf{1}(\dist(y_j,y_k)\in [\alpha^i,\alpha^{i+1}))}{m(m-1)}-\underset{a,b\sim \mathcal{X}}{\Pr}\left(\dist(a,b)\in[\alpha^i,\alpha^{i+1})\right)\right|\leq \frac{\gamma\varepsilon_0^2}{8\log^2(\Lambda/\lambda)}.\label{eq:ub_1}
\end{align}
Thus, we have an upper bound on Equation~\eqref{eq:bound_1}.

Now, let us try to derive an upper bound on Equation~\eqref{eq:bound_2}.
Similar as in the previous paragraph, we let $S$ be the set of all possible pairs $(y_j,y_k),$ i.e. $S=\{(y_j,y_k)\mid j,k\in[m],j\not=k\}.$
$S$ can be decomposed into $2(m-1)$ sets $S_1,S_2,\cdots,S_{2(m-1)}$ each with size $m/2,$ i.e. $S=\bigcup_{l=1}^{2(m-1)}S_l,\forall l\in[2(m-1)],|S_l|=m/2,$ and furthermore, $\forall l\in[2(m-1)],j\in[m],$ $y_j$ only appears in exactly one pair in set $S_l.$
For $l\in[2(m-1)],$ by applying Bernstein inequality (see Lemma~\ref{lem:bernstein}), we have:
\begin{align*}
&\Pr\left(\left| \frac{\sum_{(x,y)\in S_l}\mathbf{1}(\dist(x,y)\in [\lambda,\Lambda])}{m/2}-\underset{a,b\sim \mathcal{X}}{\Pr}\left(\dist(a,b)\in[\lambda,\Lambda]\right)\right| >\frac{\gamma^2\varepsilon_0^2}{8\log^2(\Lambda/\lambda)}\right)\\
=~&\Pr\left(\left| \sum_{(x,y)\in S_l}\mathbf{1}(\dist(x,y)\in [\lambda,\Lambda])-\sum_{(x,y)\in S_l}\underset{a,b\sim \mathcal{X}}{\Pr}\left(\dist(a,b)\in[\lambda,\Lambda]\right)\right| >\frac{m\cdot \gamma^2\varepsilon_0^2}{4\log^2(\Lambda/\lambda)}\right)\\
\leq~&2\exp\left(-\frac{\frac{1}{32} \cdot m^2\cdot \gamma^4\varepsilon_0^4/\log^4(\Lambda/\lambda)}{m/2+m\cdot \gamma^2\varepsilon_0/\log^2(\Lambda/\lambda)\cdot 1/48}\right)\\
\leq~&2\exp\left(-\frac{\frac{1}{32} \cdot m^2\cdot \gamma^4\varepsilon_0^4/\log^4(\Lambda/\lambda)}{m/2+m/2}\right)\\
=~&2\exp\left(-\frac{1}{32} \cdot m\cdot \gamma^4\varepsilon_0^4/\log^4(\Lambda/\lambda)\right)\\
\leq~& \frac{\delta}{2}\cdot \frac{1}{2(m-1)|I|}\\
\leq~& \frac{\delta}{2}\cdot \frac{1}{2(m-1)},
\end{align*}
where the first inequality follows by plugging $|S_l|=m/2$ i.i.d. random
variables $\mathbf{1}(\dist(x,y)\in [\lambda,\Lambda])$ for all $(x,y)\in S_l,$
$t=(m\cdot \gamma^2\varepsilon_0^2)/(4\log^2(\Lambda/\lambda))$ and $M=1$ into
Lemma~\ref{lem:bernstein}, the second inequality follows by
$\gamma^2\varepsilon_0^2/\log^2(\Lambda/\lambda)\leq 1$, where $\gamma=\Pr_{a,b\sim\mathcal{X}}(\dist(a,b)\in [\lambda,\Lambda]).$  The third inequality
follows by the choice of $m$ and $(m-1)\leq m,|I|\leq 2\log(\Lambda/\lambda)/\varepsilon_0.$
By taking union bound over all the sets $S_1,S_2,\cdots,S_{2(m-1)},$ with probability at least $1-\delta/2,$ we have $\forall l\in[2(m-1)],$
\begin{align*}
\left| \frac{\sum_{(x,y)\in S_l}\mathbf{1}(\dist(x,y)\in [\lambda,\Lambda])}{m/2}-\underset{a,b\sim \mathcal{X}}{\Pr}\left(\dist(a,b)\in[\lambda,\Lambda]\right)\right| \leq\frac{\gamma^2\varepsilon_0^2}{8\log^2(\Lambda/\lambda)}.
\end{align*}
In this case, we have:
\begin{align*}
\small
\left| \sum_{l=1}^{2(m-1)}\sum_{(x,y)\in S_l}\frac{\mathbf{1}(\dist(x,y)\in [\lambda,\Lambda])}{m/2}-2(m-1)\underset{a,b\sim \mathcal{X}}{\Pr}\left(\dist(a,b)\in[\lambda,\Lambda)\right)\right| \leq\frac{2(m-1)\gamma^2\varepsilon_0^2}{8\log^2(\Lambda/\lambda)}.
\end{align*}
Since $S=\bigcup_{l=1}^{2(m-1)}S_l=\{(y_j,y_k)\mid j,k\in[m],j\not=k\},$ we have
\begin{align}
\left|\frac{\sum_{j\not=k}\mathbf{1}(\dist(y_j,y_k)\in [\lambda,\Lambda])}{m(m-1)}-\underset{a,b\sim \mathcal{X}}{\Pr}\left(\dist(a,b)\in[\lambda,\Lambda)\right)\right|\leq \frac{\gamma^2\varepsilon_0^2}{8\log^2(\Lambda/\lambda)}.\label{eq:ub_2}
\end{align}
Thus now, we also obtain an upper bound for the Equation~\eqref{eq:bound_2}.

By taking union bound, we have that with probability at least $1-\delta,$ Equation~\eqref{eq:ub_1} holds for all $i\in I,$ and at the same time, Equation~\eqref{eq:ub_2} holds. In the following, we condition on that Equation~\eqref{eq:ub_1} holds for all $i\in I,$ and Equation~\eqref{eq:ub_2} also holds.

$\forall i\in I,$ we have
\begin{align}
\small
&\Pr_{p'\sim\mathcal{P'}}(p'\in [i,i+1)\cdot \log\alpha)\notag\\
=&\frac{1/(m(m-1))\cdot\sum_{j\not=k}\mathbf{1}(\dist(y_j,y_k)\in [\alpha^i,\alpha^{i+1}))}{1/(m(m-1))\cdot \sum_{j\not =k}\mathbf{1}(\dist(y_j,y_k)\in[\lambda,\Lambda])}\notag\\
\leq~& \frac{\underset{a,b\sim \mathcal{X}}{\Pr}\left(\dist(a,b)\in[\alpha^i,\alpha^{i+1})\right)+\gamma\varepsilon_0^2/(8\log^2(\Lambda/\lambda))}{\gamma-\gamma^2\varepsilon_0^2/(8\log^2(\Lambda/\lambda))}\notag\\
\leq~& \frac{\underset{a,b\sim \mathcal{X}}{\Pr}\left(\dist(a,b)\in[\alpha^i,\alpha^{i+1})\right)}{\gamma-\gamma^2\varepsilon_0^2/(8\log^2(\Lambda/\lambda))}+\frac{\varepsilon_0^2}{4\log^2(\Lambda/\lambda)}\notag\\
\leq~& \frac{\underset{a,b\sim \mathcal{X}}{\Pr}\left(\dist(a,b)\in[\alpha^i,\alpha^{i+1})\right)(1+\gamma\varepsilon_0^2/(4\log^2(\Lambda/\lambda)))}{\gamma}+\frac{\varepsilon_0^2}{4\log^2(\Lambda/\lambda)}\notag\\
\leq~& \frac{\underset{a,b\sim \mathcal{X}}{\Pr}\left(\dist(a,b)\in[\alpha^i,\alpha^{i+1})\right)}{\underset{a,b\sim \mathcal{X}}{\Pr}\left(\dist(a,b)\in[\lambda,\Lambda)\right)}+\frac{\varepsilon_0^2}{2\log^2(\Lambda/\lambda)}\notag\\
=~& \Pr_{p\sim\mathcal{P}}(p\in [i,i+1)\cdot \log\alpha)+\varepsilon_0^2/(2\log^2(\Lambda/\lambda))\label{eq:final_ing_1}
\end{align}
where the first inequality follows by Equation~\eqref{eq:ub_1} and Equation~\eqref{eq:ub_2}, the second inequality follows by $\gamma-\gamma^2\varepsilon_0^2/(8\log^2(\Lambda/\lambda))>\gamma/2,$ the third inequality follows by $1/(1-\eta)\leq (1+2\eta)$ for all $\eta\leq 1/2$ and the last inequality follows by the definition of $\gamma$ and probability is always at most $1$.

Similarly, $\forall i\in I,$ we also have
\begin{align}
\small
&\Pr_{p'\sim\mathcal{P'}}(p'\in [i,i+1)\cdot \log\alpha)\notag\\
=&\frac{1/(m(m-1))\cdot\sum_{j\not=k}\mathbf{1}(\dist(y_j,y_k)\in [\alpha^i,\alpha^{i+1}))}{1/(m(m-1))\cdot \sum_{j\not =k}\mathbf{1}(\dist(y_j,y_k)\in[\lambda,\Lambda])}\notag\\
\geq~& \frac{\underset{a,b\sim \mathcal{X}}{\Pr}\left(\dist(a,b)\in[\alpha^i,\alpha^{i+1})\right)-\gamma\varepsilon_0^2/(8\log^2(\Lambda/\lambda))}{\gamma+\gamma^2\varepsilon_0^2/(8\log^2(\Lambda/\lambda))}\notag\\
\geq~& \frac{\underset{a,b\sim \mathcal{X}}{\Pr}\left(\dist(a,b)\in[\alpha^i,\alpha^{i+1})\right)}{\gamma+\gamma^2\varepsilon_0^2/(8\log^2(\Lambda/\lambda))}-\frac{\varepsilon_0^2}{4\log^2(\Lambda/\lambda)}\notag\\
\geq~& \frac{\underset{a,b\sim \mathcal{X}}{\Pr}\left(\dist(a,b)\in[\alpha^i,\alpha^{i+1})\right)(1-\gamma\varepsilon_0^2/(8\log^2(\Lambda/\lambda)))}{\gamma}-\frac{\varepsilon_0^2}{4\log^2(\Lambda/\lambda)}\notag\\
\geq~& \frac{\underset{a,b\sim \mathcal{X}}{\Pr}\left(\dist(a,b)\in[\alpha^i,\alpha^{i+1})\right)}{\underset{a,b\sim \mathcal{X}}{\Pr}\left(\dist(a,b)\in[\lambda,\Lambda)\right)}-\frac{\varepsilon_0^2}{2\log^2(\Lambda/\lambda)}\notag\\
=~& \Pr_{p\sim\mathcal{P}}(p\in [i,i+1)\cdot \log\alpha)-\varepsilon_0^2/(2\log^2(\Lambda/\lambda))\label{eq:final_ing_2}
\end{align}
where the first inequality follows by Equation~\eqref{eq:ub_1} and Equation~\eqref{eq:ub_2}, the second inequality follows by $\gamma+\gamma^2\varepsilon_0^2/(8\log^2(\Lambda/\lambda))>\gamma,$ the third inequality follows by $1/(1+\eta)\geq (1-\eta)$ for all $\eta\geq0$ and the last inequality follows by the definition of $\gamma$ and probability is always at most $1$.

By combining Equation~\eqref{eq:final_ing_1}, Equation~\eqref{eq:final_ing_2} with Equation~\ref{eq:final}, we complete the proof.
\end{proof}

\subsection{Proof of Theorem~\ref{thm:main_distance_of_distance_distribution}}\label{sec:thm5_proof}

To prove Theorem~\ref{thm:main_distance_of_distance_distribution}, we prove the following theorem first.

\begin{theorem}\label{thm:bourgain_distribution}
Consider a metric space $(\M,\dist).$
Let $y_1,y_2,\cdots,y_m\in\M.$
Let $\mathcal{U}$ be a uniform distribution over multiset $Y=\{y_1,y_2,\cdots,y_m\}.$
Let $\lambda,\Lambda$ be two parameters such that $0<2\lambda\leq \Lambda.$
Let $\mathcal{P}'$ denote LPDD of $\mathcal{U}.$
%Let $P,Q$ be two i.i.d. random variables both with distribution $\mathcal{U}.$
%Let $Z$ be the random variable $\log(\dist(P,Q))$ conditioned on $P\not=Q.$
There exist a mapping $f:X\rightarrow\mathbb{R}^l$ for some $l=O(\log m)$ such that $W(\mathcal{P'},\hat{\mathcal{P}})\leq O(\log\log m),$
where $\hat{\mathcal{P}}$ denotes LPDD of the uniform distribution on the multiset $F=\{f(x_1),f(x_2),\dots,f(x_m)\}\subset\mathbb{R}^l.$
%where $P',Q'$ are two i.i.d. random variables both with uniform distribution on multiset $\{f(x_1),f(x_2),\dots,f(x_m)\},$ and $Z'$ is the random variable $\log(\|P'-Q'\|_2)$ conditioned on $P'\not=Q'.$
\end{theorem}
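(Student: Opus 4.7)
The plan is to invoke the improved Bourgain embedding (Corollary~\ref{cor:bourgain}) to construct $f$, and then bound the Wasserstein-$1$ distance between $\mathcal{P}'$ and $\hat{\mathcal{P}}$ by means of an explicit coupling that exploits the pointwise multiplicative control of the embedding on all pairwise distances.

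First, apply Corollary~\ref{cor:bourgain} to the finite metric space $(Y,\dist)$, yielding a map $f:Y\rightarrow \mathbb{R}^l$ with $l=O(\log m)$ and distortion $\alpha \leq O(\log m)$ so that $\dist(y,y')\leq \|f(y)-f(y')\|_2\leq \alpha\cdot \dist(y,y')$ for every pair of distinct $y,y'\in Y$. Setting $\beta=\log\alpha=O(\log\log m)$ and taking logarithms gives the pointwise additive bound $0\leq \log\|f(y)-f(y')\|_2-\log\dist(y,y')\leq \beta$, which is the quantitative input driving the whole proof. For each ordered pair $p=(i,j)$ with $y_i\neq y_j$, let $X_p=\log\dist(y_i,y_j)$ and $Y_p=\log\|f(y_i)-f(y_j)\|_2$, so that $\mathcal{P}'$ is the empirical distribution of $X_p$ over the index set $A=\{p:X_p\in[\log\lambda,\log\Lambda]\}$ and $\hat{\mathcal{P}}$ is the empirical distribution of $Y_p$ over $B=\{p:Y_p\in[\log\lambda,\log\Lambda]\}$; the target Wasserstein bound will be $O(\beta)$.

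Next, construct a transport plan $\pi$ between $\mathcal{P}'$ and $\hat{\mathcal{P}}$. On the common index set $A\cap B$, match $X_p$ to $Y_p$ index-by-index, which charges at most $\beta$ per unit of coupled mass. The symmetric difference $A\triangle B$ is the more delicate piece: the monotone relation $X_p\leq Y_p\leq X_p+\beta$ forces pairs in $A\setminus B$ to have $X_p\in(\log\Lambda-\beta,\log\Lambda]$ and pairs in $B\setminus A$ to have $Y_p\in[\log\lambda,\log\lambda+\beta)$, so all the boundary mass lives inside a window of width $\beta$ around one of the two endpoints of $[\log\lambda,\log\Lambda]$. The idea is to absorb the excess $\mathcal{P}'$-mass near $\log\Lambda$ against the nearby $\hat{\mathcal{P}}$-mass already sitting just below $\log\Lambda$, and symmetrically at $\log\lambda$, paying only $O(\beta)$ per unit of transported mass.

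The main obstacle is reconciling the two differently normalized empirical distributions: even on $A\cap B$, the weights $1/|A|$ and $1/|B|$ disagree, so the naive index-by-index coupling does not return the correct marginals. To handle this rigorously, I would pass to the CDF formulation $W(\mathcal{P}',\hat{\mathcal{P}})=\int |F_{\mathcal{P}'}(t)-F_{\hat{\mathcal{P}}}(t)|\,dt$, use $X_p\leq Y_p\leq X_p+\beta$ to show that the unnormalized counts $|\{p\in A:X_p\leq t\}|$ and $|\{p\in B:Y_p\leq t\}|$ agree up to boundary corrections of size at most $|A\setminus B|+|B\setminus A|$ located within $\beta$ of the endpoints, and then control the normalizing-constant mismatch via $|B|-|A|=|B\setminus A|-|A\setminus B|$. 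Integrating, the bulk contribution is at most $\beta$ and each of the two endpoint windows of width $\beta$ contributes $O(\beta)$, yielding $W(\mathcal{P}',\hat{\mathcal{P}})\leq O(\log\log m)$ as claimed.
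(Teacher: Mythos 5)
The first half of your plan coincides with the paper's proof: apply Corollary~\ref{cor:bourgain}, take logarithms to get the additive bound $0\leq \log\|f(y)-f(y')\|_2-\log\dist(y,y')\leq \log\alpha=O(\log\log m)$ for all distinct $y,y'\in Y$, and couple each pair $(y_i,y_j)$ with its image $(f(y_i),f(y_j))$. Where you diverge is in the treatment of the $(\lambda,\Lambda)$ conditioning, and this is where the gap lies. In the paper's proof both $\mathcal{P}'$ and $\hat{\mathcal{P}}$ are conditioned on the \emph{same} event, namely $a\neq b$; the range $[\lambda,\Lambda]$ plays no role in this theorem (in its application inside Theorem~\ref{thm:main_distance_of_distance_distribution}, the event $\mathcal{E}_3$ guarantees that all distinct pairs of $Y$ already have distances in $[\lambda,\Lambda]$, and no range restriction is imposed on the embedded distances). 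Since $f$ preserves distinctness, the index-by-index coupling then has exactly the correct marginals on both sides, and one gets $W(\mathcal{P}',\hat{\mathcal{P}})\leq \max_{y\neq y'}\log\left(\|f(y)-f(y')\|_2/\dist(y,y')\right)\leq\log\alpha= O(\log\log m)$ in one line, with no normalization mismatch and no boundary analysis.

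Under your reading, where $\mathcal{P}'$ is conditioned on $\dist(y_i,y_j)\in[\lambda,\Lambda]$ and $\hat{\mathcal{P}}$ on $\|f(y_i)-f(y_j)\|_2\in[\lambda,\Lambda]$, the boundary-absorption step is not merely under-detailed; it fails. The absorption presumes that the excess $\mathcal{P}'$-mass in $(\log\Lambda-\beta,\log\Lambda]$ can be matched against $\hat{\mathcal{P}}$-mass sitting just below $\log\Lambda$, but $\hat{\mathcal{P}}$ may have no mass there at all. Concretely, take two tight clusters with intra-cluster distances equal to $\lambda$ and cross-cluster distances equal to $2^{-\beta/2}\Lambda$, and an $f$ that is admissible for Corollary~\ref{cor:bourgain} (distortion between $1$ and $\alpha$ on every pair) which preserves the intra-cluster distances but expands every cross-cluster distance by the full factor $\alpha=2^{\beta}$, pushing all of them above $\Lambda$. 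Then a constant fraction of $\mathcal{P}'$ sits near $\log\Lambda$, while $\hat{\mathcal{P}}$, after renormalization, is concentrated entirely near $\log\lambda$, so the quantity you are bounding is $\Theta(\log(\Lambda/\lambda))$, which is unrelated to $O(\log\log m)$. Since your argument uses only the distortion guarantee of Corollary~\ref{cor:bourgain}, it cannot rule out such an $f$, so no amount of CDF bookkeeping near the endpoints can close this; the statement has to be read as the paper reads it, with both LPDDs conditioned only on distinctness (or with the separate guarantee that embedded distances of in-range pairs remain in $[\lambda,\Lambda]$), after which your bulk coupling already finishes the proof.
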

\begin{proof}
According to Corollary~\ref{cor:bourgain}, there exists a mapping $f:X\rightarrow \mathbb{R}^l$ for some $l=O(\log m)$ such that $\forall i,j\in[m],\dist(y_i,y_j)\leq \|f(y_i)-f(y_j)\|_2\leq O(\log m)\cdot \dist(y_i,y_j).$
Notice that since $(\M,\dist)$ is a metric space and $f$ holds the above condition, for any $x,y\in \M,$ $\dist(x,y)=\|f(x)-f(y)\|_2=0$ if and only if $x=y.$
Let $\mathcal{U}'$ be the uniform distribution over the multiset $F=\{f(x_1),f(x_2),\cdots,f(x_m)\}.$
Thus, $\Pr_{a,b\sim \mathcal{U}}(a\not =b)=\Pr_{a',b'\sim \mathcal{U'}}(a'\not=b').$
Furthermore, we have $\forall y\in Y,$ $\Pr_{P\sim \mathcal{U}}(p=y)=\Pr_{p'\sim \mathcal{U'}}(p'=f^{-1}(y)).$

Thus, $\forall x,y\in Y,x\not=y,$ we have
\begin{align*}
&\Pr_{a,b\sim \mathcal{U}}(a=x,b=y\mid a\not=b)\\
=~&\Pr_{a,b\sim\mathcal{U}}(a=x,b=y)/\Pr_{a,b\sim\mathcal{U}}(a\not=b)\\
=~&\Pr_{a\sim\mathcal{U}}(a=x)\Pr_{b\sim\mathcal{U}}(b=y)/\Pr_{a,b\sim\mathcal{U}}(a\not=b)\\
=~&\Pr_{a'\sim\mathcal{U'}}(f^{-1}(a')=x)\Pr_{b'\sim\mathcal{U'}}(f^{-1}(b')=y)/\Pr_{a',b'\sim\mathcal{U'}}(a'\not=b')\\
=~&\Pr_{a',b'\sim\mathcal{U'}}(f^{-1}(a')=x,f^{-1}(b')=y\mid a'\not=b').
\end{align*}
Then we can conclude that
\begin{align*}
&W(\mathcal{P}',\hat{\mathcal{P}})\\
\leq~&\sum_{x,y\in Y:x\not=y}\Pr_{a,b\sim \mathcal{U}}(a=x,b=y\mid a\not=b)\cdot|\log(\dist(x,y))-\log(\|f(x)-f(y)\|_2)|\\
=~&\sum_{x,y\in Y:x\not=y}\Pr_{a,b\sim \mathcal{U}}(a=x,b=y\mid a\not=b)\cdot\left|\log\left(\frac{\dist(x,y)}{\|f(x)-f(y)\|_2}\right)\right|\\
\leq~&\sum_{x,y\in Y:x\not=y}\Pr_{a,b\sim \mathcal{U}}(a=x,b=y\mid a\not=b)\cdot O(\log\log m)\\
=~&O(\log\log m).
\end{align*}
\end{proof}

In the following, we formally state the complete version of Theorem~\ref{thm:main_distance_of_distance_distribution}.
\begin{theorem}
Consider a universe of the data $\M$ and a distance function $\dist:\M\times\M\rightarrow \mathbb{R}_{\geq 0}$ such that $(\M,\dist)$ is a metric space.
Let $\mathcal{X}$ be a data distribution over $\M$ which satisfies $\Pr_{a,b\sim \mathcal{X}}(a\not=b)\geq 1/2.$
Let $X$ be a multiset which contains $n$ i.i.d. observations $x_1,x_2,\cdots,x_n\in\M$ generated from the data distribution $\mathcal{X}.$
Let $\lambda=\min_{i\in[n/2-1]:x_{i}\not=x_{i+1}}\dist(x_{i},x_{i+1}),$ and $\Lambda=\max(\max_{i\in[n/2-1]}\dist(x_{i},x_{i+1}),2\lambda).$
Let $\mathcal{P}$ be the $(\lambda,\Lambda)-$LPDD of the original data distribution $\mathcal{X}.$
%Let $P,Q$ be two i.i.d. random variables both with distribution $\mathcal{D}.$
%Let $Z$ be the random variable $\log(\dist(P,Q))$ conditioned on $\dist(P,Q)\in[\lambda,\Lambda].$
If $n\geq \log^c_0(\Lambda/\lambda)$ for a sufficiently large constant $c_0$, then with probability at least $0.99,$ we can find a distribution $\mathcal{F}$ on $F\subset\mathbb{R}^l$ for $l=O\left(\log\log(\Lambda/\lambda)\right),|F|\leq C\log^4(\Lambda/\lambda)\log(\log(\Lambda/\lambda))$ where $C$ is a sufficiently large constant, such that $W(\mathcal{P},\hat{\mathcal{P}})\leq O(\log\log\log (\Lambda/\lambda)),$ where $\hat{\mathcal{P}}$ is the LPDD of distribution $\mathcal{F}$% conditioned on $P'\not=Q'.$ Here, $P',Q'$ are two i.i.d. random variables both with distribution $\mathcal{Y}.$
\end{theorem}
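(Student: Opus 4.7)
The plan is to chain together the three pillars that were built up earlier: the range-selection result (\thmref{estimate_range_of_lambda}), the subsample-suffices result (\thmref{small_set_good_approx}), and the Bourgain-embedding-preserves-LPDD result (\thmref{bourgain_distribution}). Concretely, I would (i) pick $\lambda,\Lambda$ from the data so that a constant fraction of pairwise distances lie in $[\lambda,\Lambda]$; (ii) subsample a multiset $Y$ of size $m=\text{poly}(\log(\Lambda/\lambda))$ whose empirical LPDD $\mathcal{P}'$ is $O(1)$-close to $\mathcal{P}$ in Wasserstein-$1$ distance; (iii) apply the improved Bourgain embedding to $Y$ to obtain $F\subset\mathbb{R}^l$ with $l=O(\log m)=O(\log\log(\Lambda/\lambda))$, whose LPDD $\hat{\mathcal{P}}$ is within $O(\log\log m)=O(\log\log\log(\Lambda/\lambda))$ of $\mathcal{P}'$; and (iv) conclude by the triangle inequality for $W$.

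First I would invoke \thmref{estimate_range_of_lambda} with, say, $\delta=\gamma=0.001$, which (since $n\geq \log^{c_0}(\Lambda/\lambda)$ is much larger than $C/(\delta\gamma)$) yields with probability at least $0.999$ that $\Pr_{a,b\sim\mathcal{X}}(\dist(a,b)\in[\lambda,\Lambda])\geq \Pr_{a,b}(a\not=b)-0.001\geq 0.4$. This gives a constant lower bound on the quantity playing the role of $\gamma$ in \thmref{small_set_good_approx}, and lets us treat $\gamma$ as an absolute constant in the subsequent sample-complexity bound.

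Next I would apply \thmref{small_set_good_approx} with $\varepsilon$ taken to be a small absolute constant (so the target Wasserstein error is $O(1)$), $\delta=0.001$, and $\gamma$ the constant from the previous step. The required sample size becomes $m=\Theta(\log^4(\Lambda/\lambda)\log\log(\Lambda/\lambda))$, matching the bound on $|F|$ in the statement. With probability at least $0.999$, the uniform distribution on the resulting multiset $Y$ has LPDD $\mathcal{P}'$ satisfying $W(\mathcal{P},\mathcal{P}')\leq O(1)$. Then I would apply the improved Bourgain embedding (Corollary~\ref{cor:bourgain}) to $Y$, producing $f:Y\rightarrow \mathbb{R}^l$ with $l=O(\log m)=O(\log\log(\Lambda/\lambda))$ and distortion $\alpha\leq O(\log m)=O(\log\log(\Lambda/\lambda))$; \thmref{bourgain_distribution} then gives $W(\mathcal{P}',\hat{\mathcal{P}})\leq O(\log\log m)=O(\log\log\log(\Lambda/\lambda))$, where $\hat{\mathcal{P}}$ is the LPDD of the uniform distribution $\mathcal{F}$ on $F=\{f(y_i)\}_{i=1}^m$. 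Finally, a triangle inequality
\[
W(\mathcal{P},\hat{\mathcal{P}})\;\leq\; W(\mathcal{P},\mathcal{P}')+W(\mathcal{P}',\hat{\mathcal{P}})\;\leq\; O(1)+O(\log\log\log(\Lambda/\lambda))\;=\;O(\log\log\log(\Lambda/\lambda)),
\]
combined with a union bound over the three high-probability events (each of probability at least $0.99$) gives the desired overall success probability.

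\paragraph{Expected obstacle.}
The routine part is the triangle-inequality composition; the main subtlety is parameter bookkeeping, namely choosing $\varepsilon,\delta,\gamma$ in \thmref{small_set_good_approx} so that the resulting $m$ simultaneously (a) lies in the prescribed $|F|$ budget of $C\log^4(\Lambda/\lambda)\log\log(\Lambda/\lambda)$, (b) drives the ambient dimension down to $l=O(\log\log(\Lambda/\lambda))$ through Corollary~\ref{cor:bourgain}, and (c) makes the $W(\mathcal{P},\mathcal{P}')$ error asymptotically dominated by the $O(\log\log m)$ Bourgain distortion term so that it is absorbed into the triple-log bound. A minor but necessary care is also needed to ensure that the $\lambda$ and $\Lambda$ used in the two application theorems are consistent with those defined from $X$ in the statement (in particular, the assumption $2\lambda\leq \Lambda$ is enforced here by taking $\Lambda=\max(\cdots,2\lambda)$), so that the conditioning event defining the LPDD is nonempty and Bernstein-type concentration in \thmref{small_set_good_approx} does not degrade.
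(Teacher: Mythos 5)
Your plan is essentially the paper's own proof: choose $\lambda,\Lambda$ via \thmref{estimate_range_of_lambda} so that a constant fraction of pairwise distances falls in $[\lambda,\Lambda]$, invoke \thmref{small_set_good_approx} with constant $\varepsilon,\delta$ to get $W(\mathcal{P},\mathcal{P}')\leq O(1)$ with $m=\Theta(\log^4(\Lambda/\lambda)\log\log(\Lambda/\lambda))$, apply \thmref{bourgain_distribution} to get $W(\mathcal{P}',\hat{\mathcal{P}})\leq O(\log\log m)$, and finish with the triangle inequality and a union bound. The outline and the parameter bookkeeping you describe match the paper.

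Two details that the paper makes explicit are missing from your sketch and deserve attention. First, \thmref{small_set_good_approx} treats $\lambda,\Lambda$ as fixed parameters and requires $Y$ to be i.i.d.\ from $\mathcal{X}$; since here $\lambda,\Lambda$ are themselves computed from the data, the paper avoids the dependence by defining $\lambda,\Lambda$ from the first half of the samples (indices in $[n/2-1]$) and taking $Y=\{x_{n/2+1},\dots,x_{n/2+m}\}$ from the untouched second half. Your "subsample a multiset $Y$" leaves this unspecified. Second, there is an LPDD mismatch: step (ii) controls the $(\lambda,\Lambda)$-LPDD of the uniform distribution on $Y$, whereas \thmref{bourgain_distribution} (and the $\hat{\mathcal{P}}$ in the conclusion) concerns the LPDD conditioned only on distinctness. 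The paper bridges this with a third high-probability event ($\mathcal{E}_3$): using the consequence of \thmref{estimate_range_of_lambda} that $\Pr(\dist(a,b)\in[\lambda,\Lambda]\mid a\not=b)\geq 1-1/\poly(\log(\Lambda/\lambda))$ and a union bound over all $O(m^2)$ pairs in $Y$, every pair of distinct elements of $Y$ has distance in $[\lambda,\Lambda]$, so the two notions of LPDD coincide on $Y$. Without this step the application of \thmref{bourgain_distribution} to your $\mathcal{P}'$ is not justified; with it, your argument is exactly the paper's.
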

\begin{proof}
We describe how to construct the distribution $\mathcal{F}.$
Let $\lambda=\min_{i\in[n/2-1]:x_{i}\not=x_{i+1}}\dist(x_{i},x_{i+1}),$ and $\Lambda=\max(\max_{i\in[n/2-1]}\dist(x_{i},x_{i+1}),2\lambda).$
By applying Theorem~\ref{thm:estimate_range_of_lambda}, with probability at least $0.999,$ we have
\begin{align}
\Pr_{a,b\sim \mathcal{X}} (\dist(a,b)\in[\lambda,\Lambda])\geq \Pr_{a,b\sim\mathcal{X}}(a\not=b)-1/\Omega(n).\label{eq:high_prob_in_range}
\end{align}
Let the above event be $\mathcal{E}_1.$
In the remaining of the proof, let us condition on $\mathcal{E}_1.$

Let $m=C\log^4(\Lambda/\lambda)\log(\log(\Lambda/\lambda))$ where $C$ is a sufficiently large constant.
%Let $\tilde{P},\tilde{Q}$ be i.i.d. random variables both with uniform distribution on the multiset $\{x_{m/2+1},x_{m/2+2},\cdots,x_{m/2+t}\}.$
Let $Y=\{x_{n/2+1},x_{n/2+2},\cdots,x_{n/2+m}\}.$
Let $\mathcal{P'}$ be the $(\lambda,\Lambda)-$LPDD of the uniform distribution on $Y$.
%Let $\tilde{Z}$ be the random variable $\log(\dist(\tilde{P},\tilde{Q}))$ conditioned on $\dist(\tilde{P},\tilde{Q})\in[\lambda,\Lambda].$
Notice that Equation~\eqref{eq:high_prob_in_range} implies $\Pr_{p\sim \mathcal{P'}}(p\in[\lambda,\Lambda])\geq 1/4.$
Then, according to Theorem~\ref{thm:small_set_good_approx}, with probability at least $0.999,$ we have
\begin{align}
W(\mathcal{P},\mathcal{P'})\leq 1.\label{eq:need_to_add_tri}
\end{align}
Let the above event be $\mathcal{E}_2.$
In the remaining of the proof, let us condition on $\mathcal{E}_2.$

Equation~\eqref{eq:high_prob_in_range} also implies the following thing:
\begin{align*}
\Pr_{a,b\sim \mathcal{X}} (\dist(a,b)\in[\lambda,\Lambda]\mid a\not=b)\geq 1-1/(\Omega(n)\cdot \Pr_{a,b\sim \mathcal{X}}(a\not=b))\geq 1-1/\poly(\log(\Lambda/\lambda)).
\end{align*}
By taking union bound over all $i,j\in\{n/2+1,n/2+2,\cdots,n/2+m\},i\not=j,$ with probability at least $0.999,$ we have either $x_i=x_j$ or $\dist(x_i,x_j)\in[\lambda,\Lambda].$
Let the above event be $\mathcal{E}_3.$
In the remaining of the proof, let us condition on $\mathcal{E}_3.$

Due to $\mathcal{E}_3,$ we can just regard $\mathcal{P'}$ as the LPDD of the uniform distribution on $Y$. %as the random variable $\log(\dist(\tilde{P},\tilde{Q}))$ conditioned on $\tilde{P}\not=\tilde{Q}.$
Then, by applying Theorem~\ref{thm:bourgain_distribution}, we can construct a uniform distribution $\mathcal{F}$ on $F\subset\mathbb{R}^{l}$ where $|F|\leq m.$
%Let $Z'$ be the random variable $\log(\|P'-Q'\|_2)$ conditioned on $P'\not=Q',$ where $P',Q'$ are two i.i.d. random variables both with distribution $\mathcal{Y}.$
Let $\hat{\mathcal{P}}$ be the LPDD of $\mathcal{F}.$
According to the Theorem~\ref{thm:bourgain_distribution}, we have $W(\mathcal{P}',\hat{\mathcal{P}})\leq O(\log\log m)\leq O(\log\log\log(\Lambda/\lambda)).$
Then by combining with Equation~\eqref{eq:need_to_add_tri}, we have $W(\mathcal{P},\hat{\mathcal{P}})\leq W(\mathcal{P},\mathcal{P}')+W(\mathcal{P}',\hat{\mathcal{P}})\leq 1+O(\log\log\log(\Lambda/\lambda))=O(\log\log\log(\Lambda/\lambda)).$
Thus, we complete the proof.

By taking union bound over $\mathcal{E}_1,\mathcal{E}_2,\mathcal{E}_3,$ the success probability is at least $0.99$.
\end{proof}

\end{document}